\documentclass{article}
\usepackage{kr}

\usepackage{times}
\usepackage{soul}
\usepackage{url}
\usepackage[hidelinks]{hyperref}
\usepackage[utf8]{inputenc}
\usepackage[small]{caption}
\usepackage{graphicx}
\usepackage{amsmath}
\usepackage{amsthm}
\usepackage{amssymb}
\usepackage{booktabs}
\usepackage{algorithm}
\usepackage[noend]{algorithmic}
\usepackage{xspace}
\usepackage{listings}
\usepackage{xcolor}
\usepackage{threeparttable}
\usepackage{tikz}
\usetikzlibrary{arrows.meta,positioning}
\usepackage{placeins}

\def\preds{\mathbf{R}}
\def\gpreds{\preds_{\mathsf{A}}}
\def\opreds{\preds_{\mathsf{O}}}
\def\epreds{\preds_{\mathsf{E}}}
\def\pspreds{\preds_{\mathsf{PS}}}
\def\nspreds{\preds_{\mathsf{NS}}}
\def\metapreds{\preds_{\mathsf{M}}}
\def\const{\mathbf{C}}

\def\vars{\mathbf{V}}
\def\dvars{\vars_{\mathsf{d}}}
\def\tvars{\vars_{\mathsf{t}}}

\def\now{\ast}
\def\simplerules{\Pi_{\mathsf{SE}}}
\def\metarules{\Pi_{\mathsf{ME}}}
\def\constraints{\Upsilon}
\def\tconstrain{\constraints_{\mathsf{temp}}}
\def\domconstrain{\constraints_{\mathsf{dom}}}
\def\eventspec{\Sigma}
\def\dataset{\mathcal{D}}
\def\existpred{\mathsf{exists}}
\def\termpred{\mathsf{ends}}
\def\windowpred{\mathsf{window}}

\def\evalcond{\textsc{Eval}}

\def\stop{\times}

\def\intersect{\mathsf{inter}}

\newcommand{\vect}[1]{\boldsymbol{#1}}

\def\infevents{\mathsf{SE}(\dataset, \eventspec)}
\def\infeventsnoconf{\mathsf{SE}^-(\dataset, \eventspec)}
\def\infermeta{\mathsf{ME}(\dataset, \mathcal{S}, \eventspec)}
\def\infermetanoconf{\mathsf{ME}^-(\dataset, \mathcal{S}, \eventspec)}

\def\reps{\mathsf{Reps}}
\def\prefreps{\mathsf{PrefReps}}

\newcommand{\casper}{\textbf{\textsc{heva}}\xspace}
\newcommand{\ncasper}{\textbf{\textsc{heva}}}

\newtheorem{example}{Example}
\newtheorem{theorem}{Theorem}
\newtheorem{lemma}{Lemma}
\newtheorem{definition}{Definition}
\newtheorem{proposition}{Proposition}

\definecolor{codegray}{gray}{0.95}
\definecolor{commentblue}{rgb}{0.0, 0.0, 0.6}
\definecolor{predicategreen}{rgb}{0.0, 0.5, 0.0}

\lstdefinelanguage{ASP}{
  alsoletter={:,-,'},           
  morecomment=[l]{\%},            
  morecomment=[s]{/*}{*/},        
  morestring=[b]",                
  sensitive=true,
  morekeywords={
    not                          
  },
literate=
    {:-}{{\textcolor{teal!70!black}{:-}}}2
    {..}{{\textcolor{teal!70!black}{..}}}2
    {|}{{\textcolor{teal!70!black}{|}}}1
    {\{}{{\textcolor{teal!70!black}{\{}}}1
    {\}}{{\textcolor{teal!70!black}{\}}}}1
}

\lstdefinestyle{aspstyle}{
  language=ASP,
  backgroundcolor=\color{cyan!10},
  basicstyle=\ttfamily\footnotesize,
  frame=single,
  framerule=0pt,
  keywordstyle=\bfseries\color{blue!60!black},
  commentstyle=\itshape\color{gray!70!black},
  stringstyle=\color{green!40!black},
  numberstyle=\tiny\color{gray},
  numbers=left, numbersep=8pt,
  showstringspaces=false,
  keepspaces=true,
  columns=fixed,
  tabsize=2,
  framesep=3pt,
  breaklines=true,
  captionpos=b,
  keywords={exists, exists_pers, terminates, obs, pt_window, m_event, intersection_of, start, end, pre_candidate, l_candidate, event, persist_end, rep_event, inconsistent_if_added, contains, temporal_conflict_with_lower_level}
}

\title{Inferring High-Level Events from Timestamped Data:\\ Complexity and Medical Applications}

\author{%
Yvon K. Awuklu$^{1,2,3}$\and
Meghyn Bienvenu$^1$\and
Katsumi Inoue$^4$\and
Vianney Jouhet$^{2,3}$\and 
Fleur Mougin$^3$ \\
\affiliations
$^1$Univ. Bordeaux, CNRS, Bordeaux INP, LaBRI, UMR 5800, F-33400, Talence, France\\
$^2$CHU de Bordeaux, Service d’Information Médicale, F-33000, Bordeaux, France\\
$^3$Univ. Bordeaux, INSERM, BPH, U1219, F-33000, Bordeaux, France\\
$^4$National Institute of Informatics, Tokyo, Japan \\
\emails
\{{kokou-yvon.awuklu,fleur.mougin,meghyn.bienvenu\}@u-bordeaux.fr,\\vianney.jouhet@chu-bordeaux.fr,inoue@nii.ac.jp}
}

\begin{document}

\maketitle

\begin{abstract}
In this paper, we develop a novel logic-based approach to detecting high-level temporally extended events from time\-stamped data and background knowledge. Our framework employs logical rules to capture existence and termination conditions for simple temporal events and to combine these into meta-events. In the medical domain, for example, di\-sease episodes and therapies are inferred from timestamped clinical observations, such as diagnoses and drug administrations stored in patient records, and can be further combined into higher-level disease events. As some incorrect events might be inferred, we use constraints to identify incompatible combinations of events and propose a repair mechanism to select preferred consistent sets of events. While reasoning in the full framework is intractable, we identify relevant restrictions that ensure polynomial-time data complexity. Our prototype system implements core components of the approach using answer set programming. An evaluation on a lung cancer use case supports the interest of the approach, both in terms of computational feasibility and positive alignment of our results with medical expert opinions. While strongly motivated by the needs of the healthcare domain, our framework is purposely generic, enabling its reuse in other 
areas.
\end{abstract}

\section{Introduction}

The adoption of electronic health records (EHRs) has significantly improved access to vast amounts of clinical data. While this access is invaluable for healthcare delivery and research, it also introduces new challenges due to the inherent complexity of medical data~\cite{tsai_Effects_2020,rance_Integrating_2016}, and in particular, its inherently temporal nature~\cite{Augusto2005-ke,Zhou2007-fw,li_Time_2020,awuklu_Ontologydriven_2025a}. 
In practice, data in EHRs is recorded as a sequence of time-stamped observations, such as test results, diagnoses, and treatments. Each observation implicitly refers to a clinical event—an underlying phenomenon such as a disease episode or a therapeutic intervention—that is relevant to the patient's care~\cite{hripcsak_Nextgeneration_2013}, but these events are not explicitly documented in EHRs. 
Physicians routinely infer clinical events using their medical expertise, which is rarely formalized in the data. For example, observing repeated antibiotic intake may lead a clinician to deduce the presence of a bacterial infection. Here, the data shows the observation (antibiotic intake), while the associated clinical events (bacterial infection, antibiotic therapy) and the reasoning process remain implicit. 
However, as the volume of observations 
per patient can overwhelm clinicians, there is a need for information systems 
to adopt 
a higher-level, event-based representation more closely aligned 
with clinical reasoning and decision-making. 
Moreover, given the high-stakes nature of healthcare decisions and medical research, 
the event inference process should also be transparent, making it possible to trace  inferred high-level events back to the original observations. 
These 
considerations motivate us to develop
a new logic-based framework for identifying high-level temporally extended events,
designed with the medical domain in mind yet sufficiently generic to enable reuse in other 
areas.
\medskip

\noindent\textbf{Overview of the Logical Framework}
We briefly outline the main intuitions and components of our framework, 
which borrows ideas from various existing KR formalisms (see Section \ref{related} for a comparison with related work). 
Recall that our aim is to be able to infer temporally-extended events like $
\mathsf{ABTherapy}
(p,d,[t_1,t_2])$,
expressing that patient $p$ receives antibiotic therapy with drug $d$ during the time period $[t_1,t_2]$, 
from the timestamped observations in medical data. 
The starting point for our proposal is the realisation that 
while 
it can be quite difficult for medical experts to directly specify the interval endpoints ($t_1, t_2$), 
it is typically much easier for them to supply \emph{existence conditions} which ensure (or suggest) 
that a given event is ongoing at timepoint $t$, e.g.\ the
observation $\mathsf{DrugAdmin}(p,d,t)$ available in patient $p$'s records 
indicates that a therapy with drug $d$ is ongoing at time $t$. In some cases, experts may also be able to 
formulate \emph{termination conditions} that indicate that an event (might or must) end at a given timepoint. 
For example, hospital records may contain 
$\mathsf{DrugStopNotification}(p,d,t)$, 
stating that the prescription for drug $d$ was terminated at time $t$,
indicating an end to the therapy with $d$.  
In our work, we will use the term \emph{simple event} to refer to events for which we can define existence conditions 
(and optionally termination conditions) in terms of data predicates, without reference to other events. 

How can we use the existence and termination conditions to identify the intervals of simple events?
Here, we shall distinguish two kinds of simple events: \emph{persistent} and \emph{non-persistent}. 
Intuitively, persistent events continue once initiated until some 
termination condition is met (similar to the principle of inertia in reasoning about actions \cite{reiter-book,DBLP:books/daglib/0095085}), 
while non-persistent events require regular observations to endure. 
For persistent events, we consider the maximal intervals starting from a timepoint verifying the existence 
condition and continuing until some termination condition (if present) is reached (else marking the event as ongoing). 
For non-persistent events, the idea is to group together timepoints satisfying existence conditions
if they are sufficiently close (with `closeness' being determined by a provided time window), ending 
an event when a termination condition is reached or there are no further timepoints in the vicinity 
that satisfy existence conditions. 

Our framework also supports \emph{meta-events}, defined from simple events and possibly other meta-events. 
Such higher-level events allow one e.g.\ to group disease episodes or identify concurrent conditions or treatments. 
They also make it possible to present events at different levels of abstraction 
(e.g.\ $\mathsf{DrugTherapy}$ generalizes $\mathsf{ABTherapy}$). 
Moreover, since it is difficult to provide existence and 
termination conditions which are both fully accurate and ensure sufficient coverage, 
our framework allows for event facts to be annotated with confidence levels, 
based upon the rules which created them. Constraints can also be used to specify
consistency requirements, and a \emph{repair mechanism} employed to select consistent combinations of events.\medskip

\noindent\textbf{Contributions}
Our first contribution is the formal definition of a novel logic-based language for temporal event detection,
which uses rules to specify the existence and termination conditions for simple 
events and for defining meta-events in terms of other events, as well as constraints to define consistency requirements. 
The semantics defines four different kinds of timelines (naïve, consistent, preferred, cautious), with each timeline
consisting of simple event and meta-event facts that can be inferred from the rules, possibly taking into account the constraints and confidence levels. 

As our second contribution, we explore the computational properties of our framework, presenting 
algorithms and complexity results for recognizing and generating 
the different kinds of timeline. 
Given the expressivity of the framework, we show unsurprisingly that 
the consistent, preferred, and cautious cannot be tractably recognized.
However, we also identify 
a relevant fragment for which there is a unique preferred timeline,
and for which both the preferred timeline and cautious timeline can be efficiently computed. 

As a third contribution, we 
implemented core components of the framework using answer set programming (ASP), a well-known declarative 
programming paradigm \cite{DBLP:journals/cacm/BrewkaET11,asp-in-prac-2012,DBLP:books/sp/Lifschitz19}. 
An experimental evaluation 
on a cancer use case involving hospital data 
demonstrates computational feasibility,
and feedback from medical experts supports the clinical plausibility of the inferred events. \medskip

\noindent\textbf{Paper Organization} Section \ref{secframework} defines the syntax and semantics of 
our logical framework, while Section \ref{complexity} provides algorithms and complexity results for the relevant reasoning tasks. 
Sections \ref{casper} and \ref{eval} present respectively the implemented system and its evaluation on a medical use case. 
Discussions of related approaches and future work are given in 
Sections \ref{related}  and \ref{conclusion}.
Omitted proofs and further details on the modelling and evaluation of the case study  
are given in the appendix. 

\section{A Logical Framework for Event Detection}\label{secframework}

We formalize our logical framework for 
specifying high-level events from temporal observations and expert know\-ledge, motivated and illustrated by medical applications. 
We shall assume that readers are acquainted with the basics of first-order logic and logic programming. 


\def\tstarvars{\tvars^{\now}}
\def\natnum{\mathbb{N}}
\def\numfn{\mathbf{F}_\mathsf{n}}
\def\numvars{\vars_\mathsf{n}}
\def\intvars{\vars_\mathsf{int}}
\def\intfn{\mathbf{F}_\mathsf{int}}
\def\terms{\mathbf{T}}
\def\numterms{\mathbf{T}_\mathsf{n}}
\def\dterms{\mathbf{T}_\mathsf{d}}
\def\intterms{\mathbf{T}_\mathsf{int}}

\subsection{Logical Vocabulary}
We use three disjoint sets of first-order predicates to describe the domain:
a set $\gpreds$ of atemporal predicates, a set of $\opreds$ of observation predicates, 
and a set $\epreds$ of event pre\-dicates, 
further partitioned into the sets $\pspreds$, $\nspreds$, and $\metapreds$
of persistent simple event predicates,
non-persistent simple event predicates, and meta-event predicates. 
Each predicate $R \in \gpreds \cup \opreds \cup \epreds$
has an arity $k \geq 0$, corresponding to its number of atemporal arguments. 
We use $\gpreds^k$ (resp.\ $\opreds^k$, $\epreds^k$) for the set of $k$-ary 
predicates in $\gpreds$ (resp.\ $\opreds, \epreds$). 
The atemporal and observation predicates correspond to the predicates 
occurring in the data,  the key difference being that observation predicates 
have a timepoint as final argument (to capture timestamped facts). 
The event predicates are used for the inferred events and will contain 
a temporal interval as argument to indicate the start and end of the event. 

For simplicity, we use natural numbers to represent timepoints (and positive integers for
confidence levels and temporal windows), so we assume the set 
 $\const$ of \emph{constants} contains $\natnum$. \emph{Temporal intervals} take the 
form $[t_1,t_2]$ where $t_1\in \mathbb{N}$, $t_2 \in \mathbb{N} \cup \{\now\}$, 
and $t_1 \leq t_2$ if $t_2 \neq \now$, where the special symbol $\now$ is 
used to indicate events which are still ongoing. 
Our rules will use \emph{variables} drawn from a set $\vars$. 
To ensure variables are appropriately instantiated, 
we assume $\vars$ is partitioned into  four subsets $\dvars$, $\numvars$, $\numvars^+$, and $\numvars^{\now}$ whose variables must be instantiated respectively by ele\-ments of $\const$, $\natnum$, $\natnum^+$, and $\natnum \cup \{\now\}$. 
We also allow for sets $\numfn$, $\numfn^+$, and $\intfn$ of functions to manipulate (positive) natu\-ral numbers or intervals. For example, we will use $\mathsf{min}$ to aggregate confidence levels 
and $\mathsf{inter}$ to compute the intersection of two intervals. 
Finally, we can define the following sets of \emph{terms}: (i) $\dterms= \const \cup \dvars$,  (ii)
$\numterms$ (resp.\ $\numterms^+$) is defined as the closure of $\natnum \cup \numvars$ (resp.\ 
$\natnum^+ \cup \numvars^+$) under applications of functions in $\numfn$ (resp.\ $\numfn^+$), 
and (iii) $\intterms$ is obtained by closing $\{[t_1, t_2] \mid t_1 \in \natnum \cup \numvars, t_2 \in \natnum \cup \{\now\} \cup \numvars^{\now}, t_1 \leq t_2 \text{ if } t_1,t_2 \in \natnum\}$  
under functions in $\intfn$. 

An \emph{atemporal atom} has the form $R(u_1, \ldots, u_k)$ where $R \in \gpreds^k$, and $u_1, \ldots, u_k \in \dterms$.
An \emph{observation atom} has the form 
$R(u_1, \ldots, u_k, t)$ where $R \in \opreds^k$, $u_1, \ldots, u_k \in \dterms$
and $t \in \numterms$.
An \emph{event atom} has the form 
$R(u_1, \ldots, u_k, \iota)$, where $R \in \epreds^k$,  $u_1, \ldots, u_k \in \dterms$,
$\iota \in \intterms$. 
Atoms without variables are called \emph{facts}.  
A \emph{dataset} is a finite set of atemporal and observation facts. 

We will attach \emph{confidence levels} (from $\natnum^+$) to event facts based upon how they were gene\-rated. 
Note that to easily identify the most reliable facts, we fix $1$ as the 
best confidence level (\emph{thus, higher numbers will denote lower confidence}). 
A \emph{(confidence-)annotated event atom} takes the form 
$R(\mathbf{u},\iota,\ell)$,
where $R(\mathbf{u}, \iota)$ is an event atom and 
$\ell \in \numterms^+$.

Finally, as will be detailed next, we shall employ special auxiliary predicates $\existpred$, $\termpred$,  and $\windowpred$
to define existence and termination conditions and temporal windows.

 \subsection{Specifying Events via Rules}\label{specifications}
We now introduce the syntax of rules used to define simple and meta-event predicates.

\subsubsection*{Simple Events} For simple non-persistent events, we must 
provide the conditions that allow us to infer that such the event holds (or ends) at a given timepoint, as well as defi\-ning a temporal window in order to know how to define the event intervals. 
Formally, a \emph{ruleset for a non-persistent simple event predicate} $R \in \nspreds^k$ is a set of rules consisting of: 
\begin{itemize}
\item one or more \emph{existence rules} of the form\footnote{In line with standard notations for logics for reasoning about actions, auxiliary predicates may have atoms $R(u_1, \ldots, u_n)$ of arbitrary arity as arguments.
Alternatively, we could express the same thing in classical logic 
using reification and multiple copies of the auxiliary predicates
to accommodate atoms of different arities. We write rules right-to-left as typical in logic programming. 
} 
$$\mathsf{exists}(R(u_1, \ldots, u_k), t, \ell) \leftarrow B$$
\item zero or more \emph{termination rules} of the form 
$$\mathsf{ends}(R(u_1, \ldots, u_k), t, \ell) \leftarrow B $$
\item one or more \emph{(expansion) window rules} of the form
$$\mathsf{window}(R(u_1, \ldots, u_k), w) \leftarrow B $$
\end{itemize}
where $u_1,\ldots, u_k \in \dterms$, $t \in \numterms$ (specifying a timepoint),
$\ell \in \mathbb{N}^+$
(giving the confidence level of the rule, with the convention that 1 denotes greatest confidence), and
$w \in \numterms^+$ (defining a temporal window). 
Rule bodies $B$ take the form of conjunctions
whose conjuncts may be (possibly negated) atemporal and observation atoms,
inequality atoms $z \neq z'$ between terms $z,z' \in \dterms$, 
or inequality / comparison atoms $z \bowtie z'$ for $z,z' \in \numterms$ and $\bowtie \, \in \{\neq, <, \leq\}$.
Rules are required to be \emph{safe}: every variable 
 in a rule must occur either in some unnegated atemporal or observation body atom. 
Additionally, window rules should provide a unique window value $w>0$ for each $R(c_1, \ldots, c_k)$ that exists due to the existence rules (this is made 
 formal in Section \ref{semantics}). 
The simplest way to accomplish this is to assign each predicate $R$ a fixed window, but it can be useful to be able to assign different windows based upon the event arguments. 

\begin{example}
For the simple event $\mathsf{ABTherapy}$ (shortened to $\mathsf{ABTh}$),
we could use the following rules: 
\begin{align*}
\mathsf{exists}(\mathsf{ABTh}(p,d),t,1) \leftarrow & \; \mathsf{Adm}(p,d,t) \land \mathsf{AB}(d) \\
\mathsf{ends}(\mathsf{ABTh}(p,d),t,1) \leftarrow & \; \mathsf{Stop}(p,d,t) \land \mathsf{AB}(d)\\
\mathsf{window}(\mathsf{ABTh}(p,d),48) \leftarrow & \; \mathsf{P}(p) \land \mathsf{AB}(d)
\end{align*}
The existence condition looks for administrations ($\mathsf{Adm}$) of an antibiotic ($\mathsf{AB}$) drug.
The termination rule applies when there is a drug stop notification ($\mathsf{Stop}$).
A fixed window of 48 hours is defined, but one could use multiple rules 
with diffe\-rent windows to e.g.\ differentiate by the class of antibiotics. 
\end{example}

We proceed similarly for persistent simple events, the main 
difference being that no window is needed.
A \emph{ruleset for a persistent simple event predicate} $R \in \pspreds^k$ 
thus comprises one or more existence rules and (optionally) termination rules
(having the same syntactic form as before). 

\begin{example}
Tyrosine kinase inhibitor (TKI) therapy is used in specific cases of lung cancer. It
 is intended to be taken for life unless it leads to toxicity or proves ineffective, 
 in which case a patient is switched to another TKI drug.
 We model TKI therapy ($\mathsf{TKITh}$) as a persistent simple event:
\begin{align*}
\mathsf{exists}(\mathsf{TKITh}(p,d),t,1) \leftarrow & \; \mathsf{Adm}(p,d,t) \land \text{TKI}(d) \\
\mathsf{exists}(\mathsf{TKITh}(p,d),t,2) \leftarrow & \; \mathsf{Presc}(p,d,t) \land \text{TKI}(d) \\
\mathsf{ends}(\mathsf{TKITh}(p,d),t,1) \leftarrow & \; \mathsf{Adm}(p,d',t') \land \text{TKI}(d')\\
&\; \land \text{TKI}(d) \land d' \neq d  
\end{align*}
Both administration and prescription of a TKI can be used to infer existence, but prescription is less reliable. 
\end{example}

\subsubsection*{Meta-Events} Unlike simple events, which are defined directly 
from the data, the definition of meta-event predicates may refer to simple events and other meta-events. 
A ruleset for $\metapreds$ contains rules of the following form, for $R \in \metapreds^k$:
$$R(u_1, \ldots, u_k, \iota, \ell) \leftarrow B
$$
where $u_1, \ldots, u_k \in \dterms$, $\iota \in \intterms$ (specifying the temporal interval), $\ell \in \numterms^+$ (specifying confidence level),
and the rule body $B$ may use the same kinds of conjuncts as for simple event rules, as well as
(possibly negated) confidence-annotated event atoms.
Safety of rules is defined as before, except that now 
unnegated event atoms may also be used to restrict the range of variables. 

Note that a predicate $R \in \metapreds$ may occur both 
in the head of rules and (possibly negated) in rule bodies. 
In our considered medical scenarios, we found it sufficient to consider 
\emph{stratified rulesets}, for which there exists a total preorder $\preceq$ on the predicates such that if a rule has head predicate $R'$
and body predicate $R'$ then $R' \preceq R$,
and if $R'$ occurs in negated body atom, then $R' \prec R$
 \cite{DBLP:books/mk/minker88/AptBW88}.
We thus impose this stratification 
condition, which ensures that the meta-event facts are uniquely 
determined from the simple event facts and dataset. 

\begin{example}
Negation can model exclusion conditions.
Hyperglycemia ($\mathsf{HyperGlyc}$) during pregnancy
($\mathsf{Preg}$) should not yield gestational diabetes
($\mathsf{GestDiab}$) when pre-existing diabetes ($\mathsf{PreDiab}$) is already active at onset.
We encode this by introducing an auxiliary predicate
detecting diabetes at hyperglycemia onset and excluding
such cases in the defi\-nition of $\mathsf{GestDiab}$:
\begin{align*}
&\mathsf{PreDiabAtOnset}(p,[t_1,t_2],\mathsf{min}(\ell_1,\ell_2)) \\
& \quad \leftarrow \mathsf{HyperGlyc}(p,[t_1,t_2],\ell_1) \land
\mathsf{PreDiab}(p,[t'_1,t'_2],\ell_2) \\
& \quad \quad \quad \land t'_1 \leq t_1 \land t_1 \leq t'_2\\
&\mathsf{GestDiab}(p,\intersect([t_1,t_2],[t_3,t_4]),\mathsf{min}(\ell_1,\ell_2))\\
&\quad \leftarrow \mathsf{Preg}(p,[t_1,t_2],\ell_1)
\land
\mathsf{HyperGlyc}(p,[t_3,t_4],\ell_2) \\
&\quad \quad \quad
\land \neg\,\mathsf{PreDiabAtOnset}(p,[t_3,t_4],\_)
\end{align*}
\end{example}

\subsubsection*{Constraints on Events} To enforce consistency of the
 inferred
events, we consider two types of constraints: domain-independent temporal 
constraints and domain-specific constraints. 
The fixed set of \emph{temporal constraints} $\tconstrain$
ensures that simple events having the same predicate and atemporal arguments 
(but possibly different confidence levels)
cannot have intervals that non-trivially overlap. $\tconstrain$ contains the following constraints for every
$R \in \pspreds^k \cup \nspreds^k$:
\begin{align*}
\bot \leftarrow&  R(\vect{u}, [t_1,t_2]) \wedge R(\vect{u}, [t_1',t_2']) \wedge t_1 < t_1' \wedge t_1' < t_2\\
\bot \leftarrow&  R(\vect{u}, [t_1,t_2]) \wedge R(\vect{u}, [t_1,t_2']) \wedge  t_2 \neq t_2'\\
\bot \leftarrow&  R(\vect{u}, [t_1,t_2]) \wedge R(\vect{u}, [t_1',t_2]) \wedge t_1 \neq t_1' 
\end{align*}
where $\vect{u}$ abbreviates the tuple of variables 
$u_1, \ldots, u_k$ (for distinct $u_i \in \dvars$). We assume that \emph{domain-specific constraints} (if any) have the form 
$\bot \leftarrow C$, with $C$ a conjunctive formula, whose conjuncts 
are (possibly negated) atemporal, observation, or (unannotated) event atoms, inequality or comparison atoms, subject to the usual safety condition.

\begin{example}
In a medical setting, 
we may use the following domain constraint 
enforce that a patient cannot simulta\-neously undergo two targeted therapies 
with different TKIs: 
\begin{align*}
\bot \leftarrow &~ \mathsf{TKITh}(p, d_1, [t_1, t_2]) \wedge \mathsf{TKITh}(p, d_2, [t_1', t_2']) \\
& \wedge d_1 \neq d_2 \wedge t_1 < t_1' \wedge t_1' < t_2 \wedge t_2 < t_2'
\end{align*}
\end{example}
 
 \subsubsection*{Temporal Event Specifications} We now have
all the elements needed  to define 
 temporal event specifications: 
 \begin{definition}
 A \emph{temporal event specification (TES)} takes the form 
 $\eventspec = (\simplerules, \metarules, \tconstrain, \domconstrain)$,
 where:
 \begin{itemize}
 \item $\simplerules= \bigcup_{R \in \nspreds \cup \pspreds} \Pi_R$, with $\Pi_R$ a ruleset for $R$ 
\item $\metarules$ is a ruleset for $\metapreds$
 \item $\tconstrain$ is the fixed set of temporal constraints
 \item $\domconstrain$ is a set of domain-specific constraints 
 \end{itemize}
 \end{definition}

\subsection{Semantics of the Framework}\label{semantics}
It remains to make precise which event facts are generated from 
a given TES and dataset. 
We shall 
present the semantics in
stages, starting with simple events. 

\subsubsection*{Semantics of Simple Events}
To define the $\existpred$-, \mbox{$\termpred$-,} $\windowpred$-facts that hold in a dataset~$\dataset$, we simply evaluate rule bodies (seen as first-order formulas) in $\dataset$ 
(seen as a first-order structure). 
Given a rule body $B$ over variables $\vect{u} \cup \vect{v}$,
we let $\evalcond(B, \vect{v},\dataset) = \{\vect{c} \mid \dataset \models \exists \vect{u} \, B[\vect{v}\!:\!\vect{c}]\}$,
where $B[\vect{v}\!:\!\vect{c}]$ is $B$ with variables $\vect{v}$ replaced by the constants in $\vect{c}$. 

\begin{definition}\label{sefacts}
Given a TES  $\eventspec = (\simplerules, \metarules, \tconstrain, \domconstrain)$ and dataset
$\dataset$, we define $\simplerules(\dataset)$ as the set of all
facts $\mathsf{p}(\vect{c})$ s.t.\ $\vect{c} \in \evalcond(B, \vect{v},\dataset)$ for some rule $\mathsf{p}(\vect{v}) \leftarrow B(\vect{u} \cup \vect{v})\! \in \! \simplerules$. 
For a predicate $R \in \nspreds^k$, $k$-tuple $\vect{d}$ of data constants, and confidence level $\ell$, we define:
\begin{align*}
T_\exists^\ell(R(\vect{d})) = &\{t \mid \existpred(R(\vect{d} ), t,\ell') \in \simplerules(\dataset), \ell' \leq \ell\}\\
T_\stop^\ell(R(\vect{d})) = &\{t \mid \termpred(R(\vect{d} ), t,\ell') \in \simplerules(\dataset), \ell'  \leq \ell\}
\end{align*}
We say that $\simplerules(\dataset)$ is \emph{valid} if
whenever $\existpred(R(\vect{d} ), t,\ell) \in \simplerules(\dataset)$, there is a unique 
$\windowpred(R(\vect{d} ),w) \in \simplerules(\dataset)$. 
\end{definition}

Recall that for non-persistent simple events
the idea is to construct intervals for an event by starting from the timepoints where the event is stated to exist (given by $T_\exists^\ell(R(\vect{d}))$),
then iteratively `expanding' these intervals to incorporate nearby timepoints (with `near' defined by $\windowpred(R(\vect{d} ),w)$)
stopping either when no further such timepoint is encountered, or when a termination condition ($T_\stop^\ell(R(\vect{d}))$) is reached. 
As we shall assume that $\simplerules(\dataset)$ is valid (cf.\ previous definition), it is always clear which 
window to use. The following definition formalizes this idea.

\begin{definition}\label{nspred-def}
A fact $R(\vect{d}, [t_1,t_2])$ is \emph{inferred from $(\simplerules, \dataset)$ with confidence $\ell$}, 
denoted $\simplerules, \dataset \models_\ell R(\vect{d}, [t_1,t_2])$, if: 
\begin{enumerate}
\item $\windowpred(R(\vect{d} ),w) \in \simplerules(\dataset)$
\item there exists $t_0', \ldots, t_n' \in T_\exists^\ell(R(\vect{d}))$
such that $t_0'=t_1$ and $t_{i+1}' - t_i' \leq w$ for every $0 \leq i < n$
\item there is no $t^\dag \in T_\stop^\ell(R(\vect{d}))$ such that $t_1 \leq t^\dagger < t_2$
\item for every $t_1^\sharp \in T_\exists^\ell(R(\vect{d}))$ with $t_1 - w \leq t_1^\sharp < t_1$, there exists $t_e \in T_\stop^\ell(R(\vect{d}))$ with $t_1^\sharp \leq t_e < t_1$
\item if $t_2= t_n'$, then there is no $t^\dag \in T_\exists^\ell(R(\vect{d})) \cup T_\stop^\ell(R(\vect{d}))$
with $t_n' < t^\dag \leq t_n' + w$
\item if  $t_2 \neq t_n'$, then $t_2 \in T_\stop^\ell(R(\vect{d}))$ and $t_2-t_n' \leq w$
\item there is no $\ell' < \ell$ such that $\simplerules, \dataset \models_{\ell'} R(\vect{d}, [t_1,t_2])$
\end{enumerate}
\end{definition}

We briefly explain the role of the different items of the preceding definition. Item 1 defines the (unique) time window $w$ associated with the event predicate~$R$.  
    This window determines the maximal temporal distance allowed between consecutive observations
    that belong to the same event occurrence. Item 2 ensures that there is a sequence of timepoints in $T_\exists^\ell(R(\vect{d}))$ that are sufficiently dense (adjacent timepoints are within distance $w$) 
and cover the interval $[t_1, t_n']$. Item 3 prevents the inferred interval from crossing a termination condition by requiring that no termination point lies strictly between its start and end. Items 4 and 5 
ensure that the interval could not have been extended further
in either direction 
by using an earlier or later timepoint from $T_\exists^\ell(R(\vect{d}))$. 
Item 6 makes sure that if $t_2 \neq t_n'$ then 
$t_2$ satisfies a termination condition (e.g., a drug stop notification). Finally, item 7 prevents redundant inference of the exact same event with different confidence levels.

\begin{example}
Suppose for event $E$ we have $T_\exists^1(E)= \{2,4,9\}$,
$T_\exists^2(E)= \{1,5,6,10\}$, and $T_\times^1(E)= \{7,8\}$:\vspace*{-2mm}\\ 
\begin{center}
\includegraphics[scale=0.19]{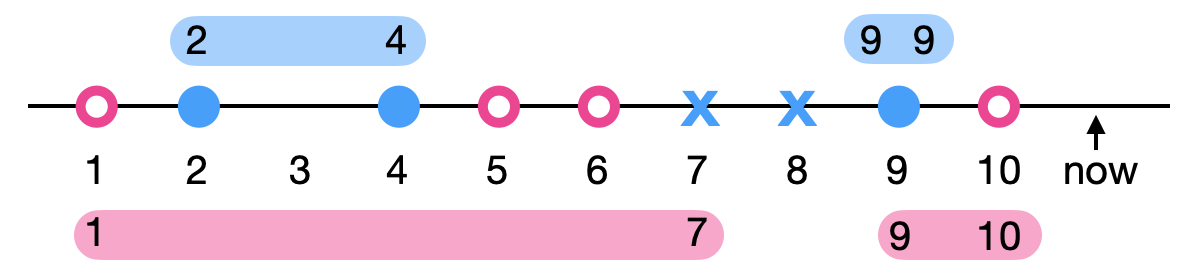}
\end{center}
In this timeline, 
each \textbf{dot} represents a timepoint in $T_\exists^\ell(E)$ where an \emph{existence condition} for $E$ is satisfied with confidence level~$\ell$ (blue dots for $\ell = 1$, pink dots for $\ell = 2$),  
while each \textbf{cross} marks a timepoint in $T_\times^\ell(E)$ 
where a \emph{termination condition} holds.
With a window of $2$ and current time $11$, we get the blue intervals 
$[2,4]$ and $[9,9]$ with confidence $1$
and the pink intervals $[1,7]$ and $[9,10]$ with confidence $2$.
\end{example}

Defining intervals for persistent simple events is simpler 
since they continue until a termination condition is reached.

\begin{definition}\label{def-pers-sem}
Consider  
$R \in \pspreds^k$ defined in 
$\simplerules$, and let $\dataset$, $\vect{d}$, $[t_1,t_2]$, $T_\exists^\ell(R(\vect{d}))$
and $T_\stop^\ell(R(\vect{d}))$ be as in Definitions~\ref{sefacts} and \ref{nspred-def}. 
Then $R(\vect{d} , [t_1,t_2])$ is \emph{inferred from $(\simplerules, \dataset)$ with confidence $\ell$}, denoted $\simplerules, \dataset \models_\ell R(\vect{d} , [t_1,t_2])$, if: 
\begin{enumerate}
\item $t_1 \in T_\exists^\ell(R(\vect{d}))$
\item if $t_2\neq \now$, then $t_2 \in T_\stop^\ell(R(\vect{d}))$
\item for every $t_1^\sharp \in T_\exists^\ell(R(\vect{d}))$ with $t_1^\sharp < t_1$,
there exists $t_e \in T_\stop^\ell(R(\vect{d}))$ with $t_1^\sharp \leq t_e < t_1$
\item if $t_2\neq \now$, there is no $t_2^\sharp \in T_\stop^\ell(R(\vect{d}))$ with $t_1 \leq t_2^\sharp < t_2$
\item if $t_2= \now$, there is no $t_2^\sharp \in T_\stop^\ell(R(\vect{d}))$ with $t_1 \leq t_2^\sharp$
\item there is no $\ell' < \ell$ such that $\simplerules, \dataset \models_{\ell'} R(\vect{d}, [t_1,t_2])$
\end{enumerate}
\end{definition}

\begin{example}
Take the preceding example but now let $E$ be a persistent event:\vspace*{-2mm}\\ 
\begin{center}
\includegraphics[scale=0.36]{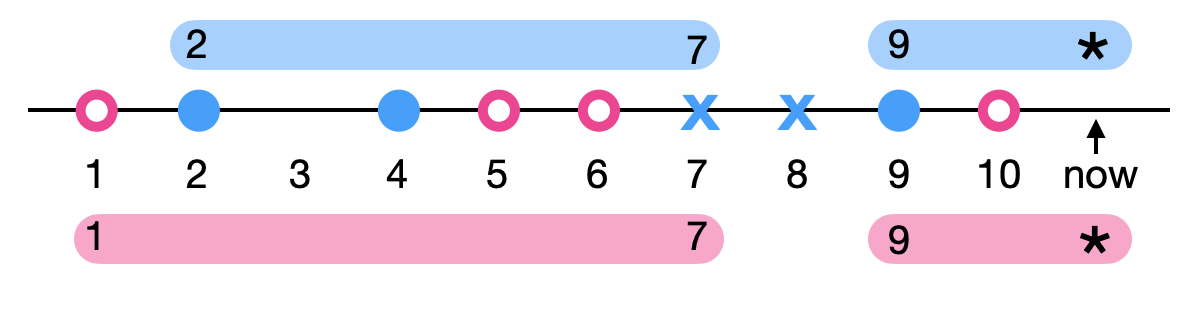}
\end{center}
The inferred intervals become $[2,7]$ and $[9,\now]$ for confidence~1, and $[1,7]$ and $[9,\now]$ for confidence~2.  
\end{example}

To facilitate later definitions, we introduce some notation for referring to the set of inferred simple event facts (with and without their associated confidence levels): 
\begin{align*}
\infevents &= \{R(\vect{d} , [t_1,t_2], \ell) \!\mid\! \simplerules, \dataset \models_\ell R(\vect{d} , [t_1,t_2])\}\\
\infeventsnoconf&= \{R(\vect{d} , [t_1,t_2]) \!\mid\! \simplerules, \dataset \models_\ell R(\vect{d} , [t_1,t_2])\}
\end{align*}
Given any set $\mathcal{S}$ of confidence-annotated facts, we let $\mathcal{S}^-$
be the result of removing the confidence levels from all facts in $\mathcal{S}$,
and let $\mathcal{S}_\ell$ be the set of facts in $\mathcal{S}$ with confidence $\ell$. 

\subsubsection*{Inferring Meta-Events}
Due to our decision to use stra\-tified rulesets to define meta-event predicates, 
there will be a unique set of meta-event facts that can be inferred from 
a dataset and a given set of simple event facts: 

\begin{definition}
Consider a dataset $\dataset$, set $\mathcal{S}$ of confidence-annnotated simple event facts,
and TES $\Sigma$ with ruleset  $\metarules$ for $\metapreds$. The set $\infermeta$ of \emph{inferred confidence-annotated meta-event facts}
contains all facts $R(\vect{d}, [t_1,t_2],\ell)$ with $R \in \metapreds$ appearing in 
the unique stratified model\footnote{Stratified models are defined by evaluating the rules accor\-ding to the ordering of the predicates \cite{DBLP:books/mk/minker88/AptBW88,DBLP:journals/csur/DantsinEGV01},
so that it is always clear how to interpret negated atoms in rule bodies.}
of $(\dataset, \mathcal{S}, \metarules)$. The set $\infermetanoconf$ is obtained from $\infermeta$
by removing the confidence levels. 
\end{definition}

\subsubsection*{Repairing Sets of Simple Events}
So far we have defined inferred events without paying attention to the constraints. Even if the underlying dataset is accurate, 
the set of inferred simple events may violate the domain constraints due to lower confidence rules, which may sometimes incorrectly suggest the existence or termination of a simple event. Additionally, the temporal constraints may be violated 
by `duplicate' events derived with different confidence levels. 

We thus propose to repair the set of inferred simple events
so that they satisfy the constraints. First, we make clear how we define consistency w.r.t.\ a TES: 

\begin{definition}\label{tes-cons}
Given a TES
$\eventspec = (\simplerules, \metarules, \tconstrain, \domconstrain)$, we say 
a set $\mathcal{S}$ of confidence-annotated simple event facts is \emph{$\eventspec$-consistent}
if $\tconstrain, \domconstrain, \dataset, \mathcal{S}^-, \mathsf{ME}^-(\dataset,\mathcal{S}, \eventspec) \not \models \bot$
(i.e.\ there is no constraint $\bot \gets C$ such that $C$
evaluates to true w.r.t.\ $\dataset \cup \mathcal{S}^- \cup \mathsf{ME}^-(\dataset,\mathcal{S}, \eventspec)$). 
\end{definition}
Observe that we need the annotated simple event facts ($\mathcal{S}$) to determine 
the associated 
meta-event facts
($\mathsf{ME}(\dataset,\mathcal{S}, \eventspec)$), but
we strip the simple and meta-event facts of their annotations when we check for constraint violations (as the 
constraints involve unannotated event atoms).  

Our first notion of repair simply considers the
inclusion-maximal consistent sets of facts, in line with the subset repairs previously studied for databases and knowledge bases, cf.\ \cite{ber,biebou}. 

\begin{definition}\label{simplerep}
Let $\mathcal{S}$ and $\eventspec$ be as in Definition \ref{tes-cons}.
Then $\mathcal{R} \subseteq \mathcal{S}$ is a \emph{repair} of $\mathcal{S}$ w.r.t.\ $\Sigma$
if (i) $\mathcal{R}$ is $\eventspec$-consistent, and (ii)
there is no $\mathcal{U} \subseteq \mathcal{S}$ such that $\mathcal{R} \subsetneq \mathcal{U}$
and $\mathcal{U}$ is $\eventspec$-consistent. We use $\reps(\mathcal{S}, \Sigma)$ for the set of repairs of $\mathcal{S}$ w.r.t.\ $\Sigma$. 
\end{definition}

\begin{example}
Consider again the running example where  
$T_\exists^1(E)= \{2,4,9\}$, $T_\exists^2(E)= \{1,5,6,10\}$, and $T_\times^1(E)= \{7,8\}$,  
yielding the intervals $[2,4]$ and $[9,9]$ (confidence~1) and  
$[1,7]$ and $[9,10]$ (confidence~2).  
Here, both pairs of intervals---$[2,4]$ and $[1,7]$, as well as $[9,9]$ and $[9,10]$---violate 
the temporal constraints because they overlap for the same event $E$.  
To restore consistency, one interval from each conflicting pair has to be selected, yielding four repairs:
\[
\begin{array}{lcl}
\mathcal{R}_1 &=& \{E([2,4],1),\,E([9,9],1)\} \\
\mathcal{R}_2 &=& \{E([2,4],1),\,E([9,10],2)\} \\
\mathcal{R}_3 &=& \{E([1,7],2),\,E([9,9],1)\} \\
\mathcal{R}_4 &=& \{E([1,7],2),\,E([9,10],2)\}
\end{array}
\]
\end{example}

We also consider preferred repairs, which preferentially retain 
facts with better confidence levels, inspired by the
 $\subseteq_P$-repairs of \cite{biebougoa}. Recall that the notation $\mathcal{R}_\ell$
 denotes the set of facts in $\mathcal{R}$ having confidence $\ell$, with $\ell=1$ giving
 the most reliable facts.

\begin{definition}\label{prefrepdef}
Let $\mathcal{S}$ and $\eventspec$ be as in Definition \ref{simplerep}, and 
let $n$ be the maximum confidence level appearing in $\mathcal{S}$. 
Then $\mathcal{R} \subseteq \mathcal{S}$ is a \emph{preferred repair} of $\mathcal{S}$ w.r.t.\ $\Sigma$
if $\mathcal{R}$ is $\eventspec$-consistent and 
there does not exist 
$\mathcal{U} \subseteq \mathcal{S}$ and $1 \leq k \leq n$ such that (i) $\mathcal{U}$ is $\eventspec$-consistent, 
(ii)~$\mathcal{U}_\ell = \mathcal{R}_\ell$ for every $1 \leq \ell < k$, and (iii)~$\mathcal{R}_k \subsetneq \mathcal{U}_k$.  We use $\prefreps(\mathcal{S}, \Sigma)$ for the set of repairs of $\mathcal{S}$ w.r.t.\ $\Sigma$. 
\end{definition}

\begin{example}
Under the preferred repair semantics, conflicts between facts 
are resolved using the confidence level of facts. 
In our running example, 
there is a unique preferred repair, $\mathcal{R}_1$,
which retains 
the two facts with confidence level~1. 
\end{example}

Note that we repair the set of simple events, using meta-events only to determine consistency, 
in order to avoid situations in which a repair contains a meta-event but the simple events 
needed to create it have been removed. 

\subsubsection*{Semantics of Temporal Event Specifications}
We are now ready to define the semantics 
of a TES and dataset: 

\begin{definition}\label{timelinedef}
Given a TES $\eventspec = (\simplerules, \metarules, \tconstrain, \domconstrain)$ and dataset $\dataset$: 
\begin{itemize}
\item the \emph{naïve timeline} is $\infevents \cup \mathsf{ME}(\dataset,\infevents, \eventspec)$ 
\item the \emph{consistent timelines} take the form $\mathcal{R} \cup \mathsf{ME}(\dataset,\mathcal{R}, \eventspec)$, where $\mathcal{R} \in \reps(\infevents, \eventspec)$
\item the \emph{preferred timelines} take the form $\mathcal{P} \cup \mathsf{ME}(\dataset,\mathcal{P}, \eventspec)$, where $\mathcal{P} \in \prefreps(\infevents, \eventspec)$
\item the \emph{cautious timeline} takes the form $\mathcal{I} \cup \mathsf{ME}(\dataset,\mathcal{I}, \eventspec)$, where $\mathcal{I} = \cap_{\mathcal{R} \in \reps(\infevents, \eventspec)} \mathcal{R}$
\end{itemize}
\end{definition}

The (unique) naïve timeline ignores the constraints and infers all annotated simple event and meta-event facts.
The consistent and preferred timelines are obtained by taking a (preferred) repair and completing it with the associated meta-event facts. 
Finally, the (unique) cautious timeline first intersects all repairs, then adds in the inferable meta-event facts. 
Note that in the absence of negated  event atoms, the naïve and cautious timelines provide upper and lower bounds, respectively, 
on the facts appearing in consistent and preferred timelines (but this does not hold in general).

\section{Complexity \& Algorithms}\label{complexity}
In this section, we examine the computational properties of our framework. 
As is common for data-centric tasks, our complexity analysis will employ
\emph{data complexity}, where only the set(s) of facts are treated as input, while the rules and constraints are treated as fixed.

In order to generate the different kinds of timelines, we first need to be
able to compute the set of inferred simple event and meta-event facts. This can be done efficiently: 

\begin{theorem}\label{thm:ptimegen}
The sets $\infevents$ and $ \mathsf{ME}(\dataset,\mathcal{S}, \eventspec)$ can be computed in \textsc{PTime} in data complexity. 
\end{theorem}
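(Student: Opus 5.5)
We prove the two claims separately, treating $\eventspec$ as fixed so that only $\dataset$ (and, for the second claim, $\mathcal{S}$) counts as input. Throughout we assume, as in the paper, that the numeric and interval functions in $\numfn$, $\numfn^+$, $\intfn$ (such as $\mathsf{min}$ and $\intersect$) are polynomial-time computable and only ever applied boundedly often in a fixed rule.

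\textbf{Step 1: evaluating the auxiliary facts.} We first compute $\simplerules(\dataset)$ by evaluating every rule body $B$ of $\simplerules$ over $\dataset$. By safety, every variable of $B$ occurs in an unnegated atemporal or observation atom. Hence the relevant assignments are those mapping variables to constants of $\dataset$ (natural-number timepoints included). Since the rule has a fixed number $m$ of variables, there are at most $|\dataset|^m$ such assignments. For each assignment, membership of the ground atoms is checked directly in $\dataset$, comparisons are evaluated, and head terms built with functions are computed. This yields polynomially many $\existpred$-, $\termpred$- and $\windowpred$-facts in polynomial time, and validity of $\simplerules(\dataset)$ can be checked along the way. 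For each $R$ and tuple $\vect{d}$ (polynomially many), and each confidence level $\ell$ from the fixed finite set of levels in the rules, we then collect the sets $T_\exists^\ell(R(\vect{d}))$ and $T_\stop^\ell(R(\vect{d}))$, each of polynomial size.

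\textbf{Step 2: simple events.} The key observation is that every candidate interval has endpoints drawn from a polynomial-size set. In Definition~\ref{nspred-def}, $t_1 \in T_\exists^\ell$, and $t_2$ lies in $T_\exists^\ell \cup T_\stop^\ell$ (items 5 and 6). In Definition~\ref{def-pers-sem}, $t_1 \in T_\exists^\ell$ and $t_2 \in T_\stop^\ell \cup \{\now\}$. We therefore enumerate all such pairs $(t_1,t_2)$ and check the conditions directly.

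Items 1 and 3--6 are simple polynomial scans over the finite sets. Item 2 asks for a chain $t_0',\dots,t_n'$ from $t_1$ with gaps at most $w$, ending at some $t_n'$ that is either $t_2$ itself or satisfies the item-6 condition. We read the chain as a sequence of existence points within $[t_1,t_2]$, which is consistent with item 3 and with the intended reading. Checking it reduces to reachability in the graph on sorted $T_\exists^\ell$ with edges between consecutive points at distance $\leq w$; equivalently, one linear sweep from $t_1$ suffices.

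The minimality item (item 7, resp.\ item 6) is handled by processing levels in increasing order and discarding facts already obtained at a smaller level. Since the number of levels is a constant, $\infevents$ is computed in polynomial time.

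\textbf{Step 3: meta-events.} We compute the stratified model of $(\dataset,\mathcal{S},\metarules)$ stratum by stratum, each stratum by naive fixpoint iteration, with negated atoms referring only to lower, already completed strata. The main obstacle is bounding the number of derivable facts, since head terms may contain functions that could in principle create new values.

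We handle this as follows. Under the standard assumption that $\intfn$ returns intervals whose endpoints are among its arguments' endpoints (as $\intersect$ does), and that $\numfn^+$ on confidence levels returns one of its arguments (as $\mathsf{min}$ does), every derived fact uses only constants, timepoints and levels from $\dataset \cup \mathcal{S}$ and the rules. The set of possible facts is therefore polynomial in size $(|\dataset|+|\mathcal{S}|)^{c}$ for a constant $c$. Each iteration grounds the fixed rules in polynomial time and adds at least one fact, so there are polynomially many iterations. The whole computation is thus polynomial, giving $\mathsf{ME}(\dataset,\mathcal{S},\eventspec)$ in \textsc{PTime}.
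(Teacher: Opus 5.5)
Your proposal is correct and follows essentially the paper's route. You evaluate rule bodies first-order to obtain $\simplerules(\dataset)$, then check the quadratically many candidate intervals per $R(\vect{d})$ against Definitions~\ref{nspred-def} and~\ref{def-pers-sem} level by level (the na\"ive procedure of the main-text sketch, which the appendix refines into an expansion-and-pruning algorithm), and finally evaluate $\metarules$ stratum by stratum. Your explicit assumption that the functions in meta-rule heads only produce endpoints and levels already occurring in their arguments (as $\mathsf{min}$ and $\intersect$ do) makes precise a finiteness condition that the paper's appeal to stratified Datalog leaves tacit. Your restriction of the item-2 chain to $[t_1,t_2]$ departs from the literal definition only when $t_1=t_2\in T_\exists^\ell(R(\vect{d}))\cap T_\stop^\ell(R(\vect{d}))$; that case is equally easy to check without the restriction, so the bound is unaffected.
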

\begin{proof}[Proof sketch]
Computing $\simplerules(\dataset)$ 
essentially corresponds to evaluation of first-order queries over a database, 
a task 
well known to be computable in (sub)polynomial time\footnote{We direct interested readers to Chapter 17.1 of \cite{DBLP:books/aw/AbiteboulHV95} for more information on the complexity of first-order query evaluation and a proof of membership in $\mathsf{AC}_0$. } in data complexity (more precisely, in $\mathsf{AC}_0 \subseteq \textsc{LogSpace}$). 
Likewise, 
$ \mathsf{ME}(\dataset,\mathcal{S}, \eventspec)$ is \textsc{PTime}-computable once we have already computed $\infevents$, as this essentially 
corresponds to evaluating a stratified Datalog program, which has \textsc{PTime} data complexity, cf.\ \cite{DBLP:journals/csur/DantsinEGV01}. 

It therefore only remains to explain how to compute $\infevents$ from the sets $T_\exists^\ell(R(\vect{d}))$ and $T_\stop^\ell(R(\vect{d}))$ and the window provided by $\windowpred(R(\vect{d} ),w) \in \simplerules(\dataset)$. A naïve yet polynomial-time procedure would consider all quadra\-tically many possible intervals $[t,t']$ for a given $R(\vect{d})$ and check whether each condition of Definition \ref{nspred-def} (resp.\ Definition \ref{def-pers-sem} for persistent events) is satisfied. Naturally, one can devise more efficient algorithms that consider fewer candidate intervals
(we describe in the appendix how simple events are computed in our system). \end{proof}

It follows that the naïve timeline can be efficiently computed. 
By contrast, there could be exponentially many different (preferred) repairs, 
so it may not be feasible to compute all consistent and preferred timelines. 
In fact, by suita\-bly adapting complexity results for atemporal repairs, 
we can show it is intractable even to recognize such timelines:

\begin{theorem}\label{thm:conpgen}
It is \textsc{coNP}-complete in data complexity to decide, given a TES $\eventspec$, dataset $\dataset$, and set of facts $\mathcal{S}$,
whether $\mathcal{S}$ is a consistent (or preferred) timeline for $\eventspec, \dataset$. 
\end{theorem}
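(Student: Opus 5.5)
We handle membership and hardness separately, treating consistent and preferred timelines in parallel.

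\textbf{Membership in coNP.} Given $\eventspec$, $\dataset$ and $\mathcal{S}$, first compute $\infevents$ in polynomial time (Theorem~\ref{thm:ptimegen}). Let $\mathcal{R}$ be the set of simple event facts of $\mathcal{S}$, i.e.\ those whose predicate is in $\pspreds \cup \nspreds$. Then perform three polynomial checks: (a) $\mathcal{R} \subseteq \infevents$; (b) $\mathcal{S} = \mathcal{R} \cup \mathsf{ME}(\dataset,\mathcal{R},\eventspec)$, computed via Theorem~\ref{thm:ptimegen}; (c) $\mathcal{R}$ is $\eventspec$-consistent. For (c), compute $\mathsf{ME}^-(\dataset,\mathcal{R},\eventspec)$ and evaluate each fixed constraint body $C$, a first-order query, over $\dataset \cup \mathcal{R}^- \cup \mathsf{ME}^-$; this takes polynomial time. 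The complement of the problem is then in NP. Either one of (a)--(c) fails, or we guess a witness $\mathcal{U} \subseteq \infevents$ and verify it in polynomial time by the same consistency test. For repairs, the witness must satisfy $\mathcal{R} \subsetneq \mathcal{U}$. For preferred repairs, we also guess a level $k$ and check conditions (ii) and (iii) of Definition~\ref{prefrepdef}. One caveat: because negated event atoms are allowed, consistency need not be monotone. A witness for non-maximality therefore has to be an arbitrary consistent superset, not just $\mathcal{R}$ plus one fact. The NP guess handles this directly.

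\textbf{Hardness.} We reduce from the complement of 3SAT, adapting the classical coNP-hardness of subset-repair checking \cite{ber,biebou}. Fix a TES using only persistent simple events, so that each simple event fact is in one-to-one correspondence with data. For each clause index $c$, position $j \in \{1,2,3\}$ and literal, an observation $\mathsf{Lit}(c,j,x,s,t_{c})$ yields a simple event $\mathsf{Ch}(c,x,s,[t,\now])$ through an existence rule with no termination rule. A fixed domain constraint
\[
\bot \leftarrow \mathsf{Ch}(c,x,s,\iota)\wedge \mathsf{Ch}(c',x,s',\iota')\wedge s\neq s'
\]
forbids choosing contradictory literals. A further fact $\mathsf{Dummy}(c)$-style event per clause, together with a constraint linking it to the clause's literals, makes a maximal consistent set either a "satisfying assignment selection" or a fixed default set $\mathcal{R}_0$. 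We then design constraints so that $\mathcal{R}_0$ is consistent and is maximal iff no consistent set picks one true literal per clause, i.e.\ iff the formula is unsatisfiable. Concretely, we use the standard encoding: one special fact $a$ conflicts with every literal fact, together with a meta-event that detects when every clause has a chosen literal, and a constraint that fires when $a$ is absent and some clause is uncovered. The candidate timeline is $\mathcal{R}_0 = \{a\}$ plus its meta-events.

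Distinct data constants keep the temporal constraints vacuous, since intervals for distinct argument tuples never clash. Using a single confidence level makes preferred repairs coincide with repairs, so hardness transfers to preferred timelines as well.

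\textbf{Main obstacle.} The hardest step is getting the gadget right within the syntactic restrictions: fixed constraints that are conjunctive with safe negation, and stratified meta-rules. The "every clause is covered" condition must be expressed through a stratified meta-event, for instance $\mathsf{Uncov}(c)$ defined by negation over $\mathsf{Ch}$, with a constraint $\bot \leftarrow \neg A \wedge \mathsf{Clause}(c)\wedge \mathsf{Uncov}(c)$. If that encoding proves awkward, the fallback is to transfer directly a known coNP-hard repair-checking instance for denial constraints \cite{ber}. Each fact would be represented as a persistent event triggered by one observation, which preserves the conflict structure verbatim.
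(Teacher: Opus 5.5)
Your membership argument is correct and is essentially the paper's guess-and-check procedure. One caution for the preferred case: check only conditions (ii)--(iii) of Definition~\ref{prefrepdef}, and do not also require $\mathcal{R}\subsetneq\mathcal{U}$. A witness against preferredness typically has to drop lower-confidence facts of $\mathcal{R}$.

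The hardness part has a genuine gap. By Definition~\ref{simplerep}, the candidate $\mathcal{R}_0$ fails to be a repair only if there is a $\eventspec$-consistent $\mathcal{U}$ with $\mathcal{R}_0\subsetneq\mathcal{U}\subseteq\infevents$. Consider the concrete encoding you settle on: the inferred simple events are $a$ and the literal facts, and $a$ conflicts with every literal fact. Then every strict superset of $\{a\}$ contains $a$ together with some $\mathsf{Ch}$ fact, and is therefore inconsistent. So $\{a\}$ is a repair, and $\{a\}$ plus its meta-events is a consistent timeline, for \emph{every} input formula. Your reduction maps all instances to yes-instances.

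Your target ``$\mathcal{R}_0$ is maximal iff no consistent set picks one true literal per clause'' conflates `no consistent \emph{superset} of $\mathcal{R}_0$' with `no other consistent set'. A satisfying selection that excludes $a$ is simply another repair and says nothing about $\{a\}$. For the same reason, a coverage constraint that fires only when $a$ is absent constrains sets that are not supersets of the candidate at all.

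The fallback fails too. Denial constraints have no negated event atoms, so inconsistency is monotone and repair checking is polynomial; this is exactly Theorem~\ref{thm:ptime-noneg}. There is therefore no \textsc{coNP}-hard denial-constraint instance to transfer.

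The missing idea is to use the non-monotonicity that negated event atoms provide. The candidate should be consistent on its own, while a strict superset is consistent \emph{exactly} when the added facts encode a satisfying assignment. The paper's gadget is designed this way:
\begin{itemize}
\item the candidate is $\{\mathsf{Q}([0,\now],1)\}$, and the remaining inferred facts are $\mathsf{Value}(v_i,b,[0,\now],1)$;
\item one constraint forbids two values for the same variable;
\item constraints pairing $\mathsf{Q}$ with negated $\mathsf{Value}$ atoms force every variable to be valued and every clause to be satisfied.
\end{itemize}

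In your vocabulary, do not let $a$ conflict with the literal facts. Instead, let the coverage constraint fire when $a$ is \emph{present}, at least one $\mathsf{Ch}$ fact is present, and some clause is uncovered. Then $\{a\}$ is consistent, and $\{a\}\cup X$ is consistent iff $X$ is a nonempty non-contradictory literal selection covering every clause, i.e.\ iff $\varphi$ is satisfiable.

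The positive guard is essential; without it the candidate itself already violates the constraint. The same guard is needed in the paper's construction as displayed. Its second and third constraints already fire on $\{\mathsf{Q}([0,\now])\}$ alone, since no $\mathsf{Value}$ facts are present. Adding a conjunct such as $\mathsf{Value}(z,b,[t,t'])$ to both bodies fixes this.
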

\begin{proof}[Proof sketch]
To establish the \textsc{coNP} upper bound for consistent timelines, consider the following guess-and-check procedure, whose input is 
a TES $\eventspec$, dataset $\dataset$, and set $\mathcal{S}$ of simple event and meta-event facts:
\begin{enumerate}
\item Compute $\infevents$ and $\mathcal{S}_{\mathsf{SE}} = \mathcal{S} \cap \infevents$. 
\item Guess a subset $\mathcal{S}'_{\mathsf{SE}}$ of $\infevents$.
\item Check if $\mathcal{S}_{\mathsf{SE}}$ and $\mathcal{S}'_{\mathsf{SE}}$ are $\eventspec$-consistent. 
\item Return `yes' if one of the following holds (else 
 `no'): 
\begin{enumerate}
\item $\mathcal{S} \neq \mathcal{S}_{\mathsf{SE}} \cup \mathsf{ME}(\dataset,\mathcal{S}_{\mathsf{SE}}, \eventspec)$
\item $\mathcal{S}_{\mathsf{SE}}$ is \emph{not} $\eventspec$-consistent, or
\item  $\mathcal{S}'_{\mathsf{SE}}$ is $\eventspec$-consistent and $\mathcal{S}_{\mathsf{SE}} \subsetneq \mathcal{S}'_{\mathsf{SE}}$ 
\end{enumerate}
\end{enumerate}
It can be shown that some execution of this non-deterministic procedure returns `yes' iff $\mathcal{S}$ is \emph{not} a consistent timeline. 
Moreover, the procedure is easily adapted to preferred repairs by replacing (c) by the following condition: 
$\mathcal{S}'_{\mathsf{SE}}$ is $\eventspec$-consistent and there exists 
$k$ such that
(i)~$(\mathcal{S}'_{\mathsf{SE}})_\ell = (\mathcal{S}_{\mathsf{SE}})_\ell$ for every $1 \leq \ell < k$, 
and (ii)~
$(\mathcal{S}_{\mathsf{SE}})_k  \subsetneq (\mathcal{S}'_{\mathsf{SE}})_k $. \medskip

For the lower bound, we reduce 3SAT to the problem of tes\-ting whether a set of facts is \emph{not} 
a consistent timeline. 
Consider a propositional 3CNF $\varphi = \lambda_1 \wedge \ldots \wedge \lambda_m$ over 
variables $v_1, \ldots, v_k$,
where $\lambda_i= l_{i,1} \vee l_{i,2} \vee l_{i,3} $. 
We associate with each clause $\lambda_i$ a corresponding vector $(v_{i,1}, b_{i,1}, v_{i,2}, b_{i,2}, v_{i,3}, b_{i,3})$
where $v_{i,j}$ is the variable in lite\-ral $l_{i,j}$ and $b_{i,j}=1$ (resp.\ $b_{i,j}=0$) if $l_{i,j} = v_{i,j}$ (resp.\ $l_{i,j} = \neg v_{i,j}$). 
We 
encode $\varphi$ using the following 
 dataset $\dataset_\varphi$:
\begin{align*}
\dataset_\varphi = & \{\mathsf{Var}(v_i, 0) \mid 1 \leq  i \leq k\}\\
& \cup \{\mathsf{Clause}(v_{i,1}, b_{i,1}, v_{i,2}, b_{i,2}, v_{i,3}, b_{i,3}) \mid 1 \leq i \leq m  \}
\end{align*}
We define a TES $\eventspec= (\simplerules, \emptyset, \tconstrain, \domconstrain)$, 
which 
uses two simple persistent events $\mathsf{Q}$ (arity 0) and $\mathsf{Value}$ (arity 2).
The set $\simplerules$ consists of the following
 three existence rules: 
\begin{align*}
\existpred(\mathsf{Q}, t, 1) \gets & \mathsf{Var}(x,t)\\
\existpred(\mathsf{Value}(x,1), t, 1) \gets & \mathsf{Var}(x,t)\\
\existpred(\mathsf{Value}(x,0), t, 1) \gets & \mathsf{Var}(x,t)
\end{align*}
The set $ \domconstrain$ 
contains 
the following three constraints:
\begin{align*}
\gets\,\, & \mathsf{Value}(x,1,[t,t']) \wedge \mathsf{Value}(x,0,[t,t'])\\
\gets\,\, &  \mathsf{Var}(x,t) \wedge \mathsf{Q}([t,t']) \\
& \wedge \neg \mathsf{Value}(x,1,[t,t']) \wedge \neg \mathsf{Value}(x,0,[t,t'])      \\
\gets\,\, &  \mathsf{Clause}(x_1,y_1,x_2,y_2, x_3,y_3) \wedge \mathsf{Q}([t,t'])\\ 
&\wedge \neg \mathsf{Value}(x_1,y_1,[t,t']) \wedge \neg \mathsf{Value}(x_2,y_2,[t,t']) \\
& \wedge \neg \mathsf{Value}(x_3,y_3,[t,t']) 
\end{align*}
Importantly, $\eventspec$ does not depend on the instance $\varphi$, 
as required for a data complexity reduction.
It is easy to see that: 
\begin{align*}
\mathsf{SE}(\dataset_\varphi, \eventspec)= & \{\mathsf{Q}([0,\now],1)\} \cup\\ 
& \{\mathsf{Value}(v_i,b, [0,\now],1) \mid b \in \{0,1\}, 1 \leq i \leq k \}
\end{align*}
To complete the proof, one can verify that  
 $\{ \mathsf{Q}([0,\now], 1) \}$ is \emph{not} a consistent timeline of ($\eventspec,\dataset_\varphi$)  
iff $\varphi$ is satisfiable.
\end{proof}

The reduction used to show \textsc{coNP}-hardness employed a TES with negated event atoms. 
We show this is necessary, as the recognition problems are tractable if we disallow negated event atoms in rules and constraints
(note that negation can still be applied to the atemporal and observation atoms). 

\begin{theorem}\label{thm:ptime-noneg}
It can be decided in \textsc{PTime} in data complexity whether a set of facts $\mathcal{S}$ is a consistent (or preferred) timeline for 
a TES $\eventspec$ \emph{without negated event atoms} and dataset $\dataset$.
\end{theorem}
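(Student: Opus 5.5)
Without negated event atoms, $\Sigma$-consistency is monotone. Concretely, if $\mathcal{U} \subseteq \mathcal{U}'$ are sets of annotated simple event facts, then $\mathsf{ME}(\dataset,\mathcal{U},\eventspec) \subseteq \mathsf{ME}(\dataset,\mathcal{U}',\eventspec)$. The reason is that, with no negated event atoms, the meta-event rules form a positive program over the event atoms; negation is only applied to atemporal/observation atoms, which are fixed by $\dataset$. Hence the stratified model is monotone in $\mathcal{S}$. Every constraint body $C$ then contains only positive event atoms, so if $C$ is satisfied over $\dataset \cup \mathcal{U}^- \cup \mathsf{ME}^-(\dataset,\mathcal{U},\eventspec)$, it remains satisfied over the larger set for $\mathcal{U}'$. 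Therefore consistency is closed under subsets, i.e.\ inconsistency is closed under supersets. I will prove this first, by induction on the strata of the stratified evaluation.

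With monotonicity in hand, the recognition algorithm for consistent timelines runs as follows. Compute $\infevents$ in polynomial time using Theorem~\ref{thm:ptimegen}, and set $\mathcal{S}_{\mathsf{SE}} = \mathcal{S} \cap \infevents$. Check that $\mathcal{S} = \mathcal{S}_{\mathsf{SE}} \cup \mathsf{ME}(\dataset,\mathcal{S}_{\mathsf{SE}},\eventspec)$, again computable in \textsc{PTime}. Check that $\mathcal{S}_{\mathsf{SE}}$ is $\eventspec$-consistent; this requires computing the meta-events and evaluating the fixed constraints, a first-order evaluation of polynomial cost. For maximality, by monotonicity it suffices to verify that for every single fact $\alpha \in \infevents \setminus \mathcal{S}_{\mathsf{SE}}$ the set $\mathcal{S}_{\mathsf{SE}} \cup \{\alpha\}$ is inconsistent. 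Indeed, if some consistent $\mathcal{U} \supsetneq \mathcal{S}_{\mathsf{SE}}$ existed, then any $\alpha \in \mathcal{U}\setminus\mathcal{S}_{\mathsf{SE}}$ would give the consistent subset $\mathcal{S}_{\mathsf{SE}}\cup\{\alpha\}\subseteq\mathcal{U}$. This requires only polynomially many consistency checks.

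For preferred timelines I proceed similarly, but I need the correct local criterion. I claim a consistent $\mathcal{R}$ is a preferred repair iff for every $\alpha \in \infevents\setminus\mathcal{R}$ with confidence level $k$, the set $\mathcal{R}_{<k}\cup\mathcal{R}_k\cup\{\alpha\}$ is inconsistent, where $\mathcal{R}_{<k}=\bigcup_{\ell<k}\mathcal{R}_\ell$. For the ($\Leftarrow$) direction, suppose $\mathcal{U}$ witnesses non-preferredness at level $k$. Then $\mathcal{U}_\ell=\mathcal{R}_\ell$ for $\ell<k$ and $\mathcal{R}_k\subsetneq\mathcal{U}_k$. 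Pick $\alpha\in\mathcal{U}_k\setminus\mathcal{R}_k$; then $\mathcal{R}_{<k}\cup\mathcal{R}_k\cup\{\alpha\}\subseteq\mathcal{U}$ is consistent by monotonicity. For the ($\Rightarrow$) direction, suppose that set is consistent. Take it as $\mathcal{U}$; it agrees with $\mathcal{R}$ below $k$ and strictly extends $\mathcal{R}_k$. Conditions (i)–(iii) of Definition~\ref{prefrepdef} do not require $\mathcal{U}$ to agree above $k$, so $\mathcal{U}$ is a witness. Both directions check out, so the test is again polynomially many consistency checks.

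The main obstacle is the monotonicity lemma, specifically making precise that the stratified model reduces to the least model of a positive program relative to $\dataset$. Negated atemporal/observation atoms and comparison atoms behave as fixed built-ins and do not break monotonicity. Functions such as $\mathsf{min}$ and $\mathsf{inter}$ in heads are harmless, since they are merely applied to matched terms. I would also verify that the temporal constraints $\tconstrain$ contain no negation, which they do not.
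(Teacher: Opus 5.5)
Your proposal is correct and follows the paper's proof. Both arguments first establish monotonicity of $\Sigma$-inconsistency when there are no negated event atoms, then check the form $\mathcal{S}=\mathcal{S}_{\mathsf{SE}}\cup\mathsf{ME}(\dataset,\mathcal{S}_{\mathsf{SE}},\eventspec)$, and then use polynomially many single-fact extension checks. For preferred timelines your test on $\mathcal{R}_{<k}\cup\mathcal{R}_k\cup\{\alpha\}$ is the right local criterion, and you prove both directions. The paper's Step~5 as literally written tests only $(\mathcal{S}_{\mathsf{SE}})_\ell\cup\{\sigma_\ell\}$. That test would wrongly reject the preferred repair $\{E([2,4],1),E([9,9],1)\}$ of the running example, because $\{E([1,7],2)\}$ alone is consistent, so your formulation is the one that actually works.
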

\begin{proof}[Proof idea]
The key to obtaining tractability is to show that $\Sigma$-inconsistency is monotonic, which implies that if 
$\mathcal{S} \subseteq \infevents$ is $\Sigma$-consistent and \emph{not} a repair, then there exists $\varphi \in \infevents \setminus \mathcal{S}$ that can be added while retaining consistency. It thus suffices to iterate over all such $\varphi$ and perform a $\Sigma$-consistency check to determine (non-)maximality of the candidate consistent timeline (in line with procedures for recognizing subset repairs, cf.\ Lemma 1 of \cite{biebou}). 
A similar but slightly more complex strategy can be employed for preferred timelines. 
\end{proof}

For the cautious timeline, however, the absence of negated event atoms does not suffice to 
ensure tractability.

\begin{theorem}\label{hardness-cautious}
It is \textsc{coNP}-hard in data complexity to recognize or compute the cautious timeline, even in the absence of negated event atoms. 
\end{theorem}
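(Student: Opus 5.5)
We prove \textsc{coNP}-hardness by reduction from the complement of SAT. The idea is to reduce to deciding whether a fixed simple event fact $\mathsf{F}([0,\now],1)$ belongs to every repair. Negated event atoms are forbidden, so $\Sigma$-inconsistency is monotone, as noted in the proof idea of Theorem~\ref{thm:ptime-noneg}. Hence a $\Sigma$-consistent fact $f$ lies in all repairs iff $f$ belongs to no minimal $\Sigma$-inconsistent subset of $\infevents$. If the rules and constraints directly bounded the size of minimal conflicts, these could be enumerated in \textsc{PTime}. The hardness must therefore come from positive recursive meta-event rules, which let a single constraint ``see'' an unbounded set of simple events.

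\textbf{Step 1: the encoding.} Given a CNF $\varphi$ with variables $v_1,\ldots,v_k$ and clauses $c_1,\ldots,c_m$, build the dataset $\dataset_\varphi$ with the following facts:
\begin{itemize}
\item $\mathsf{Var}(v_i,0)$ for each variable, $\mathsf{Bool}(0)$, $\mathsf{Bool}(1)$ and $\mathsf{Start}(0)$;
\item $\mathsf{Lit}(c_j,v,b)$ for each literal of $c_j$, where $b=1$ for a positive literal and $b=0$ for a negative one;
\item $\mathsf{First}(c_1)$, $\mathsf{Last}(c_m)$ and $\mathsf{Next}(c_j,c_{j+1})$.
\end{itemize}
The fixed TES has two persistent simple events with the existence rules
\[\existpred(\mathsf{F},t,1)\gets \mathsf{Start}(t), \qquad \existpred(\mathsf{Value}(x,b),t,1)\gets \mathsf{Var}(x,t)\wedge\mathsf{Bool}(b).\]
It has the following positive (hence stratified) meta-event rules:
\begin{align*}
\mathsf{CSat}(c,[t,t'],\ell) &\gets \mathsf{Lit}(c,x,b)\wedge \mathsf{Value}(x,b,[t,t'],\ell)\\
\mathsf{AllSat}(c,[t,t'],\ell) &\gets \mathsf{First}(c)\wedge \mathsf{CSat}(c,[t,t'],\ell)\\
\mathsf{AllSat}(c',[t,t'],\ell) &\gets \mathsf{AllSat}(c,[t,t'],\ell)\wedge\mathsf{Next}(c,c')\wedge \mathsf{CSat}(c',[t,t'],\ell')
\end{align*}
It has the domain constraints
\[\bot\gets \mathsf{Value}(x,1,\iota)\wedge\mathsf{Value}(x,0,\iota), \qquad \bot\gets \mathsf{Last}(c)\wedge\mathsf{AllSat}(c,\iota)\wedge\mathsf{F}(\iota'),\]
where $\iota,\iota'$ are interval terms written with fresh variables. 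Neither rules nor constraints contain negated event atoms. We then check, as in the proof of Theorem~\ref{thm:conpgen}, that $\infevents$ consists of $\mathsf{F}([0,\now],1)$ together with $\mathsf{Value}(v_i,b,[0,\now],1)$ for all $i$ and $b\in\{0,1\}$. All intervals coincide, so the temporal constraints are never violated.

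\textbf{Step 2: correctness.} We show that $\mathsf{F}([0,\now],1)$ lies in some minimal inconsistent subset iff $\varphi$ is satisfiable.
\begin{itemize}
\item ($\Leftarrow$) Take a satisfying assignment and choose one true literal per clause. The set $M$ consisting of $\mathsf{F}$ and the corresponding $\mathsf{Value}$ facts derives $\mathsf{AllSat}(c_m,\ldots)$, so $M$ is inconsistent. Every inconsistent subset of $M$ must contain $\mathsf{F}$, because $M$ contains no complementary pair. Shrink $M$ to a minimal inconsistent subset; it still contains $\mathsf{F}$.
\item ($\Rightarrow$) Let $M\ni\mathsf{F}$ be a minimal conflict. $M$ cannot contain both $\mathsf{Value}(v,0,\cdot)$ and $\mathsf{Value}(v,1,\cdot)$, since that pair alone would be a strictly smaller conflict not containing $\mathsf{F}$. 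So the inconsistency of $M$ must come from the second constraint. Then $\mathsf{AllSat}(c_m)$ is derived from a non-contradictory set of literals that hits every clause, which yields a satisfying assignment.
\end{itemize}
Since $\{\mathsf{F}\}$ is itself consistent, monotonicity gives: $\mathsf{F}$ lies in the intersection $\mathcal{I}$ of all repairs iff $\varphi$ is unsatisfiable. Moreover, each $\mathsf{Value}$ fact belongs to a conflicting pair, so it is absent from some repair. Hence $\mathcal{I}\in\{\emptyset,\{\mathsf{F}([0,\now],1)\}\}$.

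\textbf{Step 3: conclusion.} Recognition is hard: $\{\mathsf{F}([0,\now],1)\}\cup\mathsf{ME}(\dataset_\varphi,\{\mathsf{F}([0,\now],1)\},\eventspec)$ is the cautious timeline iff $\varphi$ is unsatisfiable. Computation is hard as well: any procedure computing the cautious timeline decides unsatisfiability by checking membership of $\mathsf{F}$.

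The main obstacle is the ($\Rightarrow$) direction of Step 2, namely that minimality of the conflict forbids complementary $\mathsf{Value}$ facts. This hinges on monotonicity and on a routine check that the positive meta-event rules derive $\mathsf{AllSat}(c_m)$ only when every clause is hit by some chosen literal.
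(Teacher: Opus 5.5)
Your proof is correct and follows essentially the same route as the paper. Both use a fixed TES with a flag event ($\mathsf{Q}$ in the paper, your $\mathsf{F}$) and $\mathsf{Value}$ facts, a positive recursive meta-event iterating over clauses via $\mathsf{First}/\mathsf{Next}/\mathsf{Last}$, one constraint forbidding complementary $\mathsf{Value}$ facts, and one forbidding the flag together with satisfaction of the last clause, so that the singleton flag set is the cautious timeline iff $\varphi$ is unsatisfiable. The differences are cosmetic: you use $\mathsf{Lit}$ facts with an auxiliary $\mathsf{CSat}$ predicate and keep the flag out of the meta-rule bodies, and you argue correctness via the minimal-conflict characterization of the intersection of repairs, whereas the paper directly exhibits or analyzes a repair that omits the flag.
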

\begin{proof}[Proof idea]
We again proceed by reduction from 3SAT, adapting the proof of Theorem \ref{thm:conpgen}. We modify $\dataset_\varphi$ by adding a constant $c_i$ to the $\mathsf{Clause}$ fact encoding the $i$th clause and add atemporal facts $\mathsf{First}(c_1), \mathsf{Last}(c_m)$, and $\mathsf{Next}(c_i, c_{i+1})$ ($ 1 \leq i < m $). We keep the same set $\simplerules$ and retain the constraint that enforces a single truth value (0 or 1) per variable. We add six meta-rules which serve to derive $\mathsf{Sat}(c_i,[0,\now])$  if the truth assignment selected via the $\mathsf{Value}$ facts makes $c_1, \ldots, c_i$ hold (using the $\mathsf{First}$ and $\mathsf{Next}$ facts to `iterate' over the clauses). Finally, a second (negation-free) domain constraint 
$\gets\,\,  \mathsf{Q}([t,t']) \wedge \mathsf{Last}(z) \wedge \mathsf{Sat}(z,[t,t'])$
ensures $\varphi$ is satisfiable iff $\{\mathsf{Q}([0,\now],1)\}$ is \emph{not} the cautious timeline. 
\end{proof}

Interestingly, however, if we consider the special case in which we
only have the fixed set of temporal 
constraints (i.e. no domain constraints) and all termination rules have 
the same confidence, then there is a unique preferred repair, which 
moreover is efficiently computable: 

\begin{theorem}\label{thm:onlytemp}
When $\domconstrain= \emptyset$ and termination rules all have confidence $1$, 
there is a unique preferred repair, 
and both the preferred timeline and cautious timeline can be computed in \textsc{PTime} in data complexity. 
\end{theorem}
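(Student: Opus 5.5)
The plan is to reduce everything to a conflict graph on $\infevents$ and then show that each confidence layer of this graph is an independent set. Since $\domconstrain=\emptyset$, the only constraints are those of $\tconstrain$. Each of them mentions only simple event atoms $R(\vect{u},\cdot)$ with $R\in\pspreds\cup\nspreds$, and uses two such atoms sharing the same predicate and atemporal arguments. Hence, for any $\mathcal{S}\subseteq\infevents$, the meta-event facts $\mathsf{ME}^-(\dataset,\mathcal{S},\eventspec)$ play no role in Definition~\ref{tes-cons}. So $\mathcal{S}$ is $\eventspec$-consistent iff it contains no pair of facts $R(\vect{d},\iota,\ell),R(\vect{d},\iota',\ell')$ whose intervals violate one of the three temporal constraints. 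Call such a pair \emph{conflicting}. Consistency is then independence in the conflict graph $G$ on $\infevents$. This graph is computable in \textsc{PTime} by Theorem~\ref{thm:ptimegen} and a quadratic scan over pairs.

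The key lemma is that for fixed $R$, $\vect{d}$ and $\ell$, the intervals $\iota$ with $R(\vect{d},\iota,\ell)\in\infevents$ are pairwise non-conflicting. Because of item 7 of Definitions~\ref{nspred-def} and~\ref{def-pers-sem}, these intervals form a subset of those satisfying items 1--6 (resp.\ 1--5) at level $\ell$. It therefore suffices to prove the lemma for the latter set.

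\begin{itemize}
\item For persistent events, I would argue as follows. Each interval starts at some $t_1\in T_\exists^\ell$ such that every earlier existence point is cut by a termination point before $t_1$. It ends at the first termination point $\geq t_1$, or at $\now$. Two distinct intervals therefore have distinct starts $t_1<t_1'$. The cut condition forces a termination point in $[t_1,t_1')$, so $t_2\leq t_1'$. This excludes every constraint in $\tconstrain$, since touching at an endpoint is allowed.
\item For non-persistent events, I would use the same style of argument. Items 3, 5 and 6 force $t_2$ to be determined by $t_1$: follow the $w$-chain until a gap larger than $w$ or a termination point. Item 4 then ensures that a later start $t_1'$ lies beyond that gap or after a termination point $t_e<t_1'$. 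This again gives $t_2\leq t_1'$.
\end{itemize}

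Here the hypothesis that all termination rules have confidence $1$ is useful: it makes $T_\stop^\ell(R(\vect{d}))=T_\stop^1(R(\vect{d}))$ for every $\ell$. The termination points delimiting intervals are thus the same across levels, which simplifies the case analysis on how $t_2$ is determined. I expect this lemma, and in particular the non-persistent case with its interplay of items 4--6, to be the main obstacle.

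Given the lemma, the rest is straightforward.

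\begin{itemize}
\item \textbf{Preferred repair.} Define $\mathcal{P}$ greedily. Let $\mathcal{P}_1=\infevents_1$, which is consistent by the lemma. For $k>1$, let $\mathcal{P}_k$ be the set of facts in $\infevents_k$ that conflict with no fact in $\mathcal{P}_1\cup\dots\cup\mathcal{P}_{k-1}$; this set is again consistent by the lemma and the pairwise nature of conflicts. By induction on $k$, every preferred repair $\mathcal{R}$ satisfies $\mathcal{R}_k=\mathcal{P}_k$. If $\mathcal{R}_\ell=\mathcal{P}_\ell$ for all $\ell<k$, then $\mathcal{R}_k\subseteq\mathcal{P}_k$ by consistency. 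If the inclusion were strict, the set $\mathcal{U}$ with $\mathcal{U}_\ell=\mathcal{R}_\ell$ for $\ell<k$ and $\mathcal{U}_k=\mathcal{P}_k$ would be consistent and would witness that $\mathcal{R}$ is not preferred. Conversely, $\mathcal{P}$ itself is preferred, since no fact can be added at level $k$ without creating a conflict. So $\mathcal{P}$ is the unique preferred repair, and computing it requires polynomially many pair checks. Adding $\mathsf{ME}(\dataset,\mathcal{P},\eventspec)$ via Theorem~\ref{thm:ptimegen} yields the preferred timeline.
\item \textbf{Cautious timeline.} Since conflicts are binary, I would show that the intersection of all repairs equals the set of facts that have no conflicting partner in $\infevents$. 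If $f$ conflicts with $g$, then $\{g\}$ is consistent: this follows from the lemma if the two facts have the same level, and otherwise because self-conflicts are impossible as each constraint needs two distinct intervals. Extending $\{g\}$ to a maximal consistent set gives a repair omitting $f$. Conversely, an isolated fact belongs to every maximal consistent set. Hence $\mathcal{I}$ is computable in \textsc{PTime}, and the cautious timeline follows by computing $\mathsf{ME}(\dataset,\mathcal{I},\eventspec)$.
\end{itemize}
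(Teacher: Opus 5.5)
Your proof is correct and follows essentially the same route as the paper. Both use a single-level lemma that distinct same-level intervals for the same $R(\vect{d})$ never conflict, the greedy level-by-level construction with uniqueness coming from the absence of within-level conflicts, and the characterization of the cautious timeline as the facts involved in no pairwise violation. The paper also proves a cross-level nesting property, and that is the only place it uses the confidence-$1$ hypothesis on termination rules; neither its theorem proof nor yours needs it, so that hypothesis plays no role in your purely single-level lemma.

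One small repair: conclude $t_2 < t_1'$ rather than $t_2 \le t_1'$. Touching at an endpoint is not always harmless, since $[t_1,t_1']$ and $[t_1',t_1']$ violate the third temporal constraint. Your own arguments already give the strict inequality: a termination point in $[t_1,t_1')$, or $t_1'$ lying beyond the gap.
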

\begin{proof}[Proof sketch]
When $\domconstrain= \emptyset$, the cautious timeline can be computed in \textsc{PTime} by (i) removing those $R(\vect{u}, [t_1,t_2])$ from $\infevents$ such that there exists some  $R(\vect{u}, [t_1',t_2']) \in \infevents$ with $[t_1',t_2'] \neq [t_1,t_2]$
where 
$[t_1,t_2]$ and
$[t_1',t_2']$ non-trivially overlap, then (ii) applying the meta-event rules. 

The \textsc{PTime} result for preferred timelines is obtained by analyzing how inferred intervals are related. Indeed, when $\domconstrain= \emptyset$ and all termination rules have confidence 1, we can show that if distinct
$R(\vect{d} , [t_1,t_2])$ and $R(\vect{d} , [t_1',t_2'])$ are inferred at the same confidence level, then $[t_1,t_2]$ and $[t_1',t_2']$ cannot overlap, so no repair is needed within a single confidence level. Moreover, 
if $\simplerules, \dataset \models_\ell R(\vect{d} , [t_1,t_2])$
and $\simplerules, \dataset \models_{\ell'} R(\vect{d} , [t_1',t_2'])$ with $\ell' > \ell$, then $[t_1,t_2]$ and $[t_1',t_2']$ can only overlap if $[t_1',t_2']$ fully contains $[t_1,t_2]$. This implies uniqueness and allows us to greedily build a repair level by level, as formalized in Algorithm \ref{alg:theorem3_preferred_timeline}.
\end{proof}

\begin{algorithm}[t!]
\caption{Preferred timeline (Theorem~5)
}
\label{alg:theorem3_preferred_timeline}
\begin{algorithmic}[1] 
\REQUIRE TES $\eventspec=(\simplerules,\metarules,\tconstrain,\domconstrain)$ s.t.\
$\domconstrain=\emptyset$ and all termination rules 
have confidence~$1$, dataset $\dataset$
\ENSURE 
unique preferred timeline $\mathcal{T}^*$ 
\STATE 
$\mathcal{S} \gets \infevents$  \hfill // inferred simple events // 
\STATE  $m \gets \min\{\ell \mid R(\mathbf{d},[t_1,t_2],\ell)\in\mathcal{S}\}$
\STATE $n \gets \max\{\ell \mid R(\mathbf{d},[t_1,t_2],\ell)\in\mathcal{S}\}$
\STATE Partition $\mathcal{S}$ into $\mathcal{S}_m,\dots,\mathcal{S}_n$ 
by confidence level
\STATE $\mathcal{R}^* \gets \mathcal{S}_m$  \hfill // initialize with top-confidence facts //

\FOR{$\ell \gets m+1$ \TO $n$}
    \FORALL{$\varphi \in \mathcal{S}_\ell$}
        \IF{\textsc{NoTemporalConflict}$(\varphi,\mathcal{R}^*) 
        $}
            \STATE $\mathcal{R}^* \gets \mathcal{R}^* \cup \{\varphi\}$
        \ENDIF
    \ENDFOR
\ENDFOR

\STATE $\mathcal{T}^* \gets \mathcal{R}^* \cup \mathsf{ME}(\dataset,\mathcal{R}^*,\eventspec)$
\STATE \textbf{return} $\mathcal{T}^*$ 
\; 
\end{algorithmic}
\end{algorithm}


\section{HEVA System}\label{casper}
To evaluate the interest of our proposed approach,
we implemented core components of our framework 
using answer set programming (ASP), a prominent declarative programming paradigm\footnote{We assume basic familiarity with ASP, see e.g.~\cite{DBLP:journals/cacm/BrewkaET11,asp-in-prac-2012,DBLP:books/sp/Lifschitz19}}. 
Our prototype system \casper
(\textbf{H}igh-level \textbf{Ev}ents with \textbf{A}SP) currently only supports 
temporal constraints (i.e.\ $\domconstrain= \emptyset$). 
The system accepts termination rules with multiple confidence levels, but the computation of
preferred repairs uses 
Algorithm~\ref{alg:theorem3_preferred_timeline}, 
which requires that termination rules have confidence level~1.
The source code, documentation, and examples are publicly available on GitHub (\url{https://github.com/yvoawk/HEVA}).

\subsubsection*{System Inputs} \casper\ takes three forms of input:  event rules, atemporal facts, and observation facts.
The event rules of the TES ($\simplerules, \metarules$) are specified as ASP rules 
using head predicates \texttt{exists}, \texttt{exists\_pers} (existence conditions for persistent events), \texttt{terminates}, \texttt{pt\_window} (provided time window), and \texttt{m\_event}.
Atemporal facts are encoded as usual ASP facts, e.g.\ \texttt{tki}(\texttt{ceritinib}),
while observation facts are encoded using the \texttt{obs} predicate, e.g.\ $\texttt{obs}(\texttt{has\_adm}, \texttt{p}, \texttt{d}, \texttt{t})$ (note the reified predicate name). In our experiments, these facts were generated by an external Python script using a mapping file that links \texttt{obs} predicates to fields in a relational database.

\subsubsection*{System Components} \casper\ 
is composed of five interac\-ting ASP modules.
The \emph{non-persistent simple event module} computes non-persistent simple events facts from the input facts and \texttt{exists} and \texttt{terminates} rules, by expanding intervals iteratively and pruning non-maximal intervals.
The \emph{persistent simple event module} computes persistent simple events facts from the input facts and \texttt{exists\_pers} and \texttt{terminates} rules.
 The \emph{temporal predicate module} defines standard temporal relations between intervals, like Allen interval relations \cite{allen_Maintaining_1983}, interval manipulation predicates (e.g.\ to compute the $\mathsf{intersect}$ function), and further helper predicates that simplify rule writing. 
These defined predicates can be used in 
meta-event rule bodies, while additional helper predicates, internal to \casper, are provided by the \emph{auxiliary module}.
Finally, the \emph{temporal repair module} computes the set of repairs or the unique preferred repair. 

\subsubsection*{System Functionalities} Based upon the options that have been selected by the user, 
\casper\ passes the required ASP programs and facts to the Clingo\footnote{\url{https://potassco.org/clingo/}} ASP system \cite{DBLP:journals/tplp/GebserKKS19}, which produces \emph{stable model(s)} corresponding to the desired timeline(s).
By default, it returns a single answer set (naïve timeline), as the repair mechanism is disabled. 
When the \texttt{repair} option is enabled, \casper\ may return multiple answer sets, each corresponding to a consistent timeline, 
or the (unique) preferred or cautious timelines, 
if the \texttt{preferred} or \texttt{cautious} mode is specified. 
Note that the restriction to termination rules of confidence 1 is only 
required for the \texttt{preferred} mode.


\section{Experimental Evaluation}\label{eval}
We evaluate our approach on a medical use case based on real clinical data, focusing on both computational performance and the quality of the inferred events.

\subsection{Lung Cancer Use Case}
We formalized a lung cancer use case within our logical framework\footnote{Further details on the modelling of this use case, including the event rules, are can be found in the appendix.}. The objective was to identify six clinical events of interest.
Five were modelled as simple events: 
(a) \emph{primary lung cancer episode}, inferred from diagnostic codes (ADICAP~
and ICD-10
), ordered by confidence level $\ell$: ADI\-CAP codes ($\ell$ = 1), specific ICD-10 codes ($\ell$ = 2), and non‐specific ICD-10 codes ($\ell$ = 3), (b) \emph{secondary
  cancer episode}, inferred from ICD-10 codes, (c) \emph{EGFR and ALK mutations}, both inferred from DNA sequencing results, 
    and (d) \emph{TKI therapy}, inferred from administration ($\ell$ = 1) or prescription records ($\ell$ = 2). 
The sixth event, \emph{lung cancer disease}, is a meta-event, inferred using the primary and secondary episode events. 
The formalization involved 16 event rules with 3 confidence levels and 497 atemporal facts.

Observation facts were extracted from the clinical data warehouse of Bordeaux University Hospital. For this 
use case, we focus on the 322 patients with EGFR- or ALK-mutated lung cancer treated with TKIs, selected from approx.\ 
16,800 lung cancer cases. Table~\ref{execution_stats_lung_cancer} gives 
statistics on the number of observation facts and ground rules per patient.

\subsection{System Performance}
All experiments were run on a machine equipped with an 12th Gen Intel(R) Core(TM) i3-12100T @2.20GHz ×4, 8GB RAM, under Windows 10 Professional 64 bits, with runtimes  averaged over 5 executions.
 This use case involves rules with different confidence levels, and \casper\ was therefore run in its different modes to generate the four kinds of timeline.
As runtime results across the different modes were broadly similar, we only report results for the lung cancer study in the \texttt{repair} mode (consistent timelines)\footnote{Results for the other modes are provided in the appendix.}.
Table~\ref{execution_stats_lung_cancer} provides statistics on runtime and the number of stable mo\-dels (corresponding to consistent timelines) for each of the 322 patients in the lung cancer use case.

    \begin{table}[t]
    \setlength{\tabcolsep}{1.2mm}
    \centering
\begin{threeparttable}
        \caption{
        Statistics for lung cancer use case (322 patients). 
        Execution times for computing all stable models for consistent timelines. Q1, Q2, and Q3 indicate first, second (median), and third quartiles. } 
        \begin{tabular}{lcccccc}
        & Min. & Q1 & Q2 & Mean & Q3 & Max.\\\midrule
       \textbf{Obs.\ facts} & 3.00 & 23.00 & 45.00 & 83.65 & 99.50 & 815.00\\
\textbf{Grnd.\ rules} & 538 & 708 & 841.5 & 983.9 & 1,066.5 & 3,918\\
\textbf{Models} & 1.00 & 2.00 & 2.00 & 3.42 & 4.00 & 24.00 \\
\textbf{Time (s)} & 0.14 & 0.21 & 0.31 & 0.37 & 0.41 & 4.92 \\ \bottomrule
        \end{tabular}
    \label{execution_stats_lung_cancer}
   \end{threeparttable}
    \end{table}

The number of input observation facts varied significantly across patients, reflecting the heterogeneity of patient histories, 
which also led to variability in the number of ground rules and the execution times.  
This meant that while execution time remained low for most patients, typically completing in less than half a second, higher runtimes (up to 5 seconds) were observed for the few `outlier' patients.
The vast majority of cases (295 out of 322) triggered \ncasper's repair mechanism to resolve temporal constraint violations, 
which led to an average of 3.4 (up to a maximum of 24) stable models (
$\sim$ consistent timelines) per patient.

\subsection{Qualitative Study} 
To gain insights into the quality of the inferred events,
we compared the events from \ncasper’s
\emph{consistent timelines} against the individual annotations generated by four experts. 
We randomly selected 30 patients  from the lung cancer cohort
from those with a number of observations between the median 
and third quartile.
Each expert manually examined 15 patient records (giving two sub-cohorts: Annotators~1 \&~2, Annotators~3 \&~4) 
to identify the target clinical events and to indicate
their start date and, when applicable, their end date.
An inter-annotator agreement\footnote{Details on the agreement score computation and a breakdown of the results 
are provided in the appendix.} score was computed using a component-based weighted scheme that accounts for event structure, with higher weights assigned to events requiring finer-grained annotation. 
Figure~\ref{fig:qualitative_overview} summarizes 
the inter-annotator agreement between the two experts handling the same
sub-cohort, as well as 
the mean
agreement ratios between \casper and each annotator.

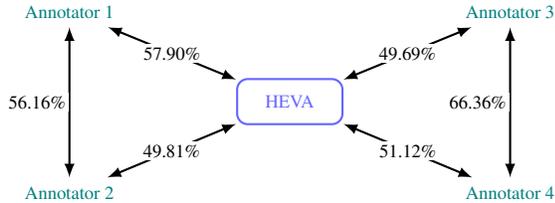
\begin{figure}[t!]
\centering
\begin{tikzpicture}[
    node distance=1.4cm,
    font=\scriptsize,
    casper/.style={
        draw,
        rectangle,
        rounded corners,
        align=center,
        minimum width=1.4cm,
        minimum height=0.6cm,
        thick
    },
    annotator/.style={
        align=center
    },
    edge label/.style={
        fill=white,
        inner sep=1pt,
        font=\scriptsize
    },
    every edge/.style={draw, <->, thick, >=latex}
]

\node[casper, blue!60] (casper) {HEVA};

\node[annotator] (a1) [left=1.5cm of casper, yshift=1.2cm, teal] {Annotator 1};
\node[annotator] (a2) [below=2cm of a1, teal] {Annotator 2};
\node[annotator] (a3) [right=1.5cm of casper, yshift=1.2cm, teal] {Annotator 3};
\node[annotator] (a4) [below=2cm of a3, teal] {Annotator 4};

\draw (a1) edge node[pos=0.5, edge label] {57.90\%} (casper);

\draw (a2) edge node[pos=0.5, edge label] {49.81\%} (casper);

\draw (a3) edge node[pos=0.5, edge label] {49.69\%} (casper);

\draw (a4) edge node[pos=0.5, edge label] {51.12\%} (casper);

\draw (a1) edge[left=20]
    node[pos=0.5, edge label] {56.16\%} (a2);

\draw (a3) edge[left=20]
    node[pos=0.5, edge label] {66.36\%} (a4);

\end{tikzpicture}
\caption{Mean agreement between HEVA (consistent timelines) and annotators,
with inter-annotator agreement per sub-cohort.}
\label{fig:qualitative_overview}
\end{figure}

Overall inter-annotator agreement among medical experts averaged 61\%, reflecting substantial variability, particularly for temporal boundaries (start/end dates), highlighting the intrinsic difficulty of this task 
even for human experts. Comparing the expert annotations, we found that agreement was consistently high for event presence across all categories, while start dates showed the lowest concordance, highlighting the inherent ambiguity of temporal information in EHRs. Primary lung cancer episodes and lung cancer di\-sease achieved the highest agreement, whereas secondary cancer episodes and ALK mutations showed lower consistency, likely due to clinical complexity and data sparsity. 

We compared \ncasper’s inferred events against 
expert annotations using the same scoring scheme. As each consistent timeline gives a plausible interpretation of the data,  
for the evaluation, we retained the consistent timeline that achieved the highest agreement with the corresponding expert annotations. 
Agreement between \casper and experts ranged from 49\% to 58\% per annotator, rising to 60\% when merging annotator pairs. This is close to the inter-annotator agreement, indicating that \ncasper’s outputs are roughly as consistent with expert judgments as experts amongst themselves. 

\section{Related Work}\label{related}
We briefly review approaches to temporal reasoning that are 
closest to our own in terms of motivations or methods. 

The original event calculus (EC), introduced for reasoning about actions and their effects \cite{DBLP:journals/ngc/KowalskiS86}, considers instantaneous events (
akin to our observations) that initiate or terminate fluents (properties whose value may change over time), which persist by default through inertia until terminated. Domain modeling in EC is done via rules for specifying initiation or termination of fluents where rule heads use special predicates \texttt{initiatesAt} and \texttt{terminatesAt}, and rule bodies speak of which other events/actions and fluents (do not) hold at the considered timepoint. Our modeling of persistent simple events is broadly similar to the handling of fluents in EC (but using existence rules rather than initiation rules to determine interval start times and with different restrictions on rule body syntax). By contrast, our formalization of non-persistent simple events via existence, termination, and window rules, equipped with a ``group close existence points together" semantics, has no direct analog in any EC dialects. 
Another common point with the EC is the use of special predicates which avoids the need for temporal logic operators in rule bodies, arguably leading to simpler and more intuitive specifications for domain experts.

Several extensions of the EC, in particular the runtime event calculus (RTEC), have been subsequently developed for \emph{complex (aka composite) event recognition (CER)} \cite{artikis_Event_2015,DBLP:journals/jair/MantenoglouPA25}, with an emphasis on performance to enable real-time processing of streaming data. Some such EC dialects (like RTEC) adopt an interval-based semantics for fluents (akin to our event predicates) and additionally allow for rules to define complex events in terms of other complex events using interval manipulation (intersecting or unioning event intervals) or Allen relations \cite{DBLP:conf/kr/MantenoglouKA23}. Our meta-event rules likewise support hierarchical and compositional modeling of complex events. 

The need to handle various kinds of uncertainty in CER is widely recognized, cf.\ survey by \citeauthor{DBLP:journals/csur/AlevizosSAP17} (\citeyear{DBLP:journals/csur/AlevizosSAP17}), motivating 
the development of probabilistic CER frameworks, including EC-based ones, for 
handling uncertain data, where probabilities are attached to the timestamped facts. Computing the probability of a complex event is computationally challenging and has been recently tackled 
using linear algebraic methods \cite{DBLP:conf/kr/TsilionisAP25}. Uncertainty in event inference (called pattern uncertainty by \citeauthor{DBLP:journals/csur/AlevizosSAP17}) is less explored. Moreover, to 
the best of our knowledge, qualitative approaches to handling uncertainty and inconsistency (like our 
confidence levels and repair-based timeline semantics) have not yet been considered for CER, nor is it evident how they could be simulated using existing probabilistic CER methods. 


Several other rule-based formalisms have been proposed for reasoning over temporal data. DatalogMTL \cite{DBLP:journals/jair/BrandtKRXZ18,DBLP:conf/rweb/Walega25} extends Datalog with metric temporal operators to define complex temporal queries. 
Due to its expressivity, reasoning is highly intractable (\textsc{PSpace} data complexity even for the integer timeline), 
though relevant fragments with lower complexity have been identified and implemented \cite{DBLP:conf/ijcai/WalegaGKK20,DBLP:conf/kr/WalegaGKK20,DBLP:conf/aaai/WangHWG22}, with recent support for streaming data \cite{DBLP:journals/ws/WalegaKWG23}. 
LARS is an expressive rule-based language specifically designed for reasoning over streaming data \cite{DBLP:journals/ai/BeckDE18}, with a dedicated window operator to restrict 
to recent timepoints or atoms, and a recent extension 
to support ontological reasoning \cite{DBLP:conf/kr/UrbaniKE22}. Extensions of ASP with linear and metric temporal operators have also been explored, implemented in the ASP tool telingo \cite{DBLP:conf/lpnmr/CabalarKMS19,cabalar_Temporal_2022}. Differently from these 
works, our existence and termination rule bodies 
do not utilize temporal operators but only conditions close to relational queries with which medical informatics practitioners are typically familiar.  

In the medical domain, the need for presenting temporal information at different levels of abstraction by combining timestamped observations has been long acknow\-ledged \cite{DBLP:journals/ai/Shahar97,DBLP:journals/artmed/ShaharM96}.
Rule-based approaches 
are desirable as they support easy integration of domain knowledge and explainability.  
The advantages of adopting declarative approaches were highlighted in the works of
Falcionelli \textit{et al.}~(\citeyear{falcionelli_Indexing_2019}) who employ an EC-based CER approach to monitor chronic conditions from sensor data,
and 
Dwyer \textit{et al.}~(\citeyear{dwyer_Reasoning_2023}) who employ the Vadalog rule language to extract and analyze patient pathways from EHRs
(neither work 
considers 
consistency handling). 
Temporal rules have also been successfully used to implement domain-specific algorithms, 
as exemplified by the work of Lyu \textit{et al.}~(\citeyear{lyu_Temporal_2022}) on gestational age detection from EHR data.


\section{Conclusion and Future Work}\label{conclusion}
In this paper, we introduced an expressive logical framework for inferring temporally extended events. 
At the heart of our approach is a novel method for specifying simple events through the use of existence and termination conditions and
temporal windows, without the need to write rules with temporal logic operators. Another distinguishing feature is the use of confidence levels, constraints, and a repair mechanism to define different kinds of (preferred) timelines, accounting for the inherent uncertainty in the event detection process (which is a difficult task even for human experts).
This unique combination of language features required us to conduct a new complexity analysis, leading to the identification of relevant special cases with more favorable computational properties, and it also meant that we could not straightforwardly implement our framework on top of existing temporal reasoning systems. 
We therefore developed 
an ASP-based prototype \casper, which showed promising results on a lung cancer use case.

Our decision to use ASP for the implementation was motivated not only by its ease of use for prototyping, but also
by its high expressive power, which will be useful when extending \casper\ to handle arbitrary TESs and new reasoning tasks (e.g.\ temporal query answering over the generated timelines). 
Indeed, in order to handle repairs w.r.t.\ arbitrary constraints, we hope to 
leverage
exis\-ting work on ASP-based repair techniques~\cite{DBLP:journals/tods/EiterFGL08,DBLP:journals/tplp/MannaRT13,KR26-BBJM}. 
It would also be relevant to conduct a detailed expressivity study in order to understand precisely which fragments of our language
can be captured by existing temporal formalisms  
(we expect for instance that simple event inference could be reduced to reasoning in suitably chosen EC and DatalogMTL dialects). 
Such expressivity results may suggest new ideas for optimizing timeline computation or for adapting our framework to handle streaming data. 


While our logical framework is application-independent, it was developed with clinical event detection in mind.
To facilitate its use  
 by medical practitio\-ners, 
we plan to develop a domain-specific language offe\-ring users a simplified syntax and templates covering common medical event types. We also wish to explore how system-generated explanations could 
help domain experts better understand, validate, and potentially revise their own annotations and/or support iterative refinement of the rules. 
The (semi-)automatic generation of domain constraints and atemporal facts
from medical ontologies is another interesting direction. 

\section*{Acknowledgements} This work was partially supported by the ANR AI Chair INTENDED (ANR-19-CHIA-0014) 
and JST CREST Grant Number JPMJCR22D3. The authors would also like to acknowledge Frantz Thiessard, Antoine Lanusse, Guillaume Verdy, Léodoric Ahouanse and Arslane Tedlaouti for their valuable contributions and support.

\bibliographystyle{kr}
\bibliography{refs}

@inproceedings{KR26-BBJM,
  author       = {Meghyn Bienvenu and Camille Bourgaux and Robin Jean and Giuseppe Mazzotta},
  title        = {Using {ASP(Q)} to Handle Inconsistent Prioritized Data},
  booktitle    = {Proc. of KR},
  year         = {2026}
}

@inproceedings{DBLP:conf/rweb/Walega25,
  author       = {Przemyslaw Andrzej Walega},
  title        = {Reasoning About Time in {DatalogMTL}: Course Notes},
  booktitle    = {Proc. of Reasoning Web Summer School},
  series       = {OASIcs},
  pages        = {9:1--9:23},
  year         = {2025}
}

@article{DBLP:journals/ws/WalegaKWG23,
  author       = {Przemyslaw Andrzej Walega and
                  Mark Kaminski and
                  Dingmin Wang and
                  Bernardo Cuenca Grau},
  title        = {Stream reasoning with {DatalogMTL}},
  journal      = {J. Web Semant.},
  volume       = {76},
  pages        = {100776},
  year         = {2023}
}

@inproceedings{DBLP:conf/kr/WalegaGKK20,
  author       = {Przemyslaw Andrzej Walega and
                  Bernardo Cuenca Grau and
                  Mark Kaminski and
                  Egor V. Kostylev},
  title        = {{DatalogMTL} over the Integer Timeline},
  booktitle    = {Proc. of KR},
  pages        = {768--777},
  year         = {2020}
}

@article{DBLP:journals/jair/MantenoglouPA25,
  author       = {Periklis Mantenoglou and
                  Manolis Pitsikalis and
                  Alexander Artikis},
  title        = {Reasoning over Streams of Events with Delayed Effects},
  journal      = {J. Artif. Intell. Res.},
  volume       = {84},
  year         = {2025}
}

@inproceedings{DBLP:conf/kr/UrbaniKE22,
  author       = {Jacopo Urbani and
                  Markus Kr{\"{o}}tzsch and
                  Thomas Eiter},
  title        = {Chasing Streams with Existential Rules},
  booktitle    = {Proc. of KR},
  year         = {2022}
}

@article{DBLP:journals/ai/BeckDE18,
  author       = {Harald Beck and
                  Minh Dao{-}Tran and
                  Thomas Eiter},
  title        = {{LARS:} {A} Logic-based framework for Analytic Reasoning over Streams},
  journal      = {Artif. Intell.},
  volume       = {261},
  pages        = {16--70},
  year         = {2018}
}

@inproceedings{DBLP:conf/kr/TsilionisAP25,
  author       = {Efthimis Tsilionis and
                  Alexander Artikis and
                  Georgios Paliouras},
  title        = {A Tensor-Based Probabilistic Event Calculus},
  booktitle    = {Proc. of KR},
  year         = {2025}
}

@inproceedings{DBLP:conf/kr/MantenoglouKA23,
  author       = {Periklis Mantenoglou and
                  Dimitrios Kelesis and
                  Alexander Artikis},
  title        = {Complex Event Recognition with {Allen} Relations},
  booktitle    = {Proc. of KR},
  pages        = {502--511},
  year         = {2023}
}

@article{DBLP:journals/ai/Shahar97,
  author       = {Yuval Shahar},
  title        = {A Framework for Knowledge-Based Temporal Abstraction},
  journal      = {Artif. Intell.},
  volume       = {90},
  number       = {1-2},
  pages        = {79--133},
  year         = {1997}
}

@article{DBLP:journals/artmed/ShaharM96,
  author       = {Yuval Shahar and
                  Mark A. Musen},
  title        = {Knowledge-based temporal abstraction in clinical domains},
  journal      = {Artif. Intell. Medicine},
  volume       = {8},
  number       = {3},
  pages        = {267--298},
  year         = {1996}
}

@book{DBLP:books/aw/AbiteboulHV95,
  author       = {Serge Abiteboul and
                  Richard Hull and
                  Victor Vianu},
  title        = {Foundations of Databases},
  publisher    = {Addison-Wesley},
  year         = {1995},
  url          = {http://webdam.inria.fr/Alice/},
  isbn         = {0-201-53771-0},
  bibsource    = {dblp computer science bibliography, https://dblp.org}
}

@book{reiter-book,
    author = {Reiter, Raymond},
    title = {Knowledge in Action: Logical Foundations for Specifying and Implementing Dynamical Systems},
    publisher = {MIT Press},
    year = {2001}}

@book{DBLP:books/daglib/0095085,
  author       = {Murray Shanahan},
  title        = {Solving the {Frame} {Problem} - A Mathematical Investigation of the Common
                  Sense Law of Inertia},
  publisher    = {{MIT} Press},
  year         = {1997}
}

@ARTICLE{Augusto2005-ke,
  title     = "Temporal Reasoning for Decision Support in Medicine",
  author    = "Augusto, Juan Carlos",
  journal   = "Artif. Intell. Med.",
  publisher = "Elsevier BV",
  volume    =  33,
  number    =  1,
  pages     = "1--24",
  month     =  jan,
  year      =  2005,
  language  = "en"
}

@ARTICLE{Zhou2007-fw,
  title     = "Temporal Reasoning with Medical Data -- A Review with Emphasis on
               Medical Natural Language Processing",
  author    = "Zhou, Li and Hripcsak, George",
  journal   = "J. Biomed. Inform.",
  publisher = "Elsevier BV",
  volume    =  40,
  number    =  2,
  pages     = "183--202",
  month     =  apr,
  year      =  2007,
  copyright = "https://www.elsevier.com/open-access/userlicense/1.0/",
  language  = "en"
}

@inproceedings{DBLP:conf/ijcai/WalegaGKK20,
  author       = {Przemyslaw Andrzej Walega and
                  Bernardo Cuenca Grau and
                  Mark Kaminski and
                  Egor V. Kostylev},
  title        = {Tractable Fragments of Datalog with Metric Temporal Operators},
  booktitle    = {Proc. of {IJCAI}},
  year         = {2020}
}

@inproceedings{DBLP:conf/aaai/WangHWG22,
  author       = {Dingmin Wang and
                  Pan Hu and
                  Przemyslaw Andrzej Walega and
                  Bernardo Cuenca Grau},
  title        = {{MeTeoR}: Practical Reasoning in Datalog with Metric Temporal Operators},
  booktitle    = {Proc. of {AAAI}},
  year         = {2022}
}

@article{DBLP:journals/jair/BrandtKRXZ18,
  author       = {Sebastian Brandt and
                  Elem G{\"{u}}zel Kalayci and
                  Vladislav Ryzhikov and
                  Guohui Xiao and
                  Michael Zakharyaschev},
  title        = {Querying Log Data with Metric Temporal Logic},
  journal      = {J. Artif. Intell. Res.},
  volume       = {62},
  pages        = {829--877},
  year         = {2018}
}

@article{DBLP:journals/ngc/KowalskiS86,
  author       = {Robert A. Kowalski and
                  Marek J. Sergot},
  title        = {A Logic-based Calculus of Events},
  journal      = {New Gener. Comput.},
  volume       = {4},
  number       = {1},
  pages        = {67--95},
  year         = {1986},
  url          = {https://doi.org/10.1007/BF03037383},
  doi          = {10.1007/BF03037383},
  bibsource    = {dblp computer science bibliography, https://dblp.org}}

@article{DBLP:journals/tplp/MannaRT13,
  author       = {Marco Manna and
                  Francesco Ricca and
                  Giorgio Terracina},
  title        = {Consistent Query Answering via {ASP} From Different Perspectives:
                  Theory and Practice},
  journal      = {Theory Pract. Log. Program.},
  volume       = {13},
  number       = {2},
  pages        = {227--252},
  year         = {2013}
}

@article{DBLP:journals/tods/EiterFGL08,
  author       = {Thomas Eiter and
                  Michael Fink and
                  Gianluigi Greco and
                  Domenico Lembo},
  title        = {Repair Localization for Query Answering from Inconsistent Databases},
  journal      = {{ACM} Trans. Database Syst.},
  volume       = {33},
  number       = {2},
  pages        = {10:1--10:51},
  year         = {2008}
}

@article{DBLP:journals/tplp/GebserKKS19,
  author       = {Martin Gebser and
                  Roland Kaminski and
                  Benjamin Kaufmann and
                  Torsten Schaub},
  title        = {Multi-shot {ASP} Solving with clingo},
  journal      = {Theory Pract. Log. Program.},
  volume       = {19},
  number       = {1},
  pages        = {27--82},
  year         = {2019},
  url          = {https://doi.org/10.1017/S1471068418000054},
  doi          = {10.1017/S1471068418000054},
  bibsource    = {dblp computer science bibliography, https://dblp.org}
}

@book{asp-in-prac-2012,
  author    = {Martin Gebser and
               Roland Kaminski and
               Benjamin Kaufmann and
               Torsten Schaub},
  title     = {Answer Set Solving in Practice},
  publisher = {Morgan {\&} Claypool Publishers},
  year      = {2012},
  url       = {https://doi.org/10.2200/S00457ED1V01Y201211AIM019},
  doi       = {10.2200/S00457ED1V01Y201211AIM019},
  bibsource = {dblp computer science bibliography, https://dblp.org}
}

@article{DBLP:journals/cacm/BrewkaET11,
   author       = {Gerhard Brewka and
                  Thomas Eiter and
                  Miroslaw Truszczynski},
  title        = {Answer Set Programming at a Glance},
  journal      = {Commun. {ACM}},
  volume       = {54},
  number       = {12},
  pages        = {92--103},
  year         = {2011},
  url          = {https://doi.org/10.1145/2043174.2043195},
  doi          = {10.1145/2043174.2043195},
  bibsource    = {dblp computer science bibliography, https://dblp.org}
}

@book{DBLP:books/sp/Lifschitz19,
  author       = {Vladimir Lifschitz},
  title        = {Answer Set Programming},
  publisher    = {Springer},
  year         = {2019},
  url          = {https://doi.org/10.1007/978-3-030-24658-7},
  doi          = {10.1007/978-3-030-24658-7},
  bibsource    = {dblp computer science bibliography, https://dblp.org}
}

@book{ber,
  author       = {Leopoldo E. Bertossi},
  title        = {Database Repairing and Consistent Query Answering},
  series       = {Synthesis Lectures on Data Management},
  publisher    = {Morgan {\&} Claypool Publishers},
  year         = {2011}
}

@inproceedings{biebou,
  author       = {Meghyn Bienvenu and
                  Camille Bourgaux},
  title        = {Inconsistency-Tolerant Querying of Description Logic Knowledge Bases},
  booktitle    = {Reasoning Web Tutorial Lectures},
  	volume = {LNCS 4126},
  year         = {2016}
}

@inproceedings{biebougoa,
  author       = {Meghyn Bienvenu and
                  Camille Bourgaux and
                  Fran{\c{c}}ois Goasdou{\'{e}}},
  title        = {Querying Inconsistent Description Logic Knowledge Bases under Preferred
                  Repair Semantics},
  booktitle    = {Proceedings of {AAAI}},
  year         = {2014}
}

@article{hripcsak_Nextgeneration_2013,
	title = {Next-generation Phenotyping of Electronic Health Records},
	volume = {20},
	issn = {1067-5027},
	url = {https://www.ncbi.nlm.nih.gov/pmc/articles/PMC3555337/},
	doi = {10.1136/amiajnl-2012-001145},
	number = {1},
	urldate = {2021-12-10},
	journal = {J Am Med Inform Assoc},
	author = {Hripcsak, George and Albers, David J},
	year = {2013},
	pmid = {22955496},
	pmcid = {PMC3555337},
	pages = {117--121},
}

@article{li_Time_2020,
	title = {Time Event Ontology ({TEO}): To Support Semantic Representation and Reasoning of Complex Temporal Relations of Clinical Events},
	volume = {27},
	issn = {1067-5027},
	shorttitle = {Time event ontology ({TEO})},
	url = {https://www.ncbi.nlm.nih.gov/pmc/articles/PMC7647306/},
	doi = {10.1093/jamia/ocaa058},
	number = {7},
	urldate = {2023-05-03},
	journal = {J Am Med Inform Assoc},
	author = {Li, Fang and Du, Jingcheng and He, Yongqun and Song, Hsing-Yi and Madkour, Mohcine and Rao, Guozheng and Xiang, Yang and Luo, Yi and Chen, Henry W and Liu, Sijia and Wang, Liwei and Liu, Hongfang and Xu, Hua and Tao, Cui},
	month = jul,
	year = {2020},
	pmid = {32626903},
	pmcid = {PMC7647306},
	pages = {1046--1056},
}

@article{lyu_Temporal_2022,
	title = {Temporal {Events} {Detector} for {Pregnancy} {Care} ({TED}-{PC}): {A} Rule-Based Algorithm to Infer Gestational Age and Delivery Date From Electronic Health Records of Pregnant Women With and Without {COVID}-19},
	volume = {17},
	issn = {1932-6203},
	shorttitle = {Temporal {Events} {Detector} for {Pregnancy} {Care} ({TED}-{PC})},
	url = {https://www.ncbi.nlm.nih.gov/pmc/articles/PMC9621451/},
	doi = {10.1371/journal.pone.0276923},
	number = {10},
	urldate = {2023-05-03},
	journal = {PLoS One},
	author = {Lyu, Tianchu and Liang, Chen and Liu, Jihong and Campbell, Berry and Hung, Peiyin and Shih, Yi-Wen and Ghumman, Nadia and Li, Xiaoming},
	month = oct,
	year = {2022},
	pmid = {36315520},
	pmcid = {PMC9621451},
	pages = {e0276923},
}

@article{dwyer_Reasoning_2023,
	title = {Reasoning over Health Records With {Vadalog}: A Rule-Based Approach to Patient Pathways},
	volume = {3485},
	url = {https://ceur-ws.org/Vol-3485/paper9111.pdf},
	language = {en},
	journal = {Proc.  of Int. Rule Challenge @  RuleML+RR},
	author = {Dwyer, Owen P and Baldazzi, Teodoro and Davies, Jim and Sallinger, Emanuel and Vlad, Adriano},
	year = {2023},
	pages = {15},
}

@article{tsai_Effects_2020,
	title = {Effects of Electronic Health Record Implementation and Barriers to Adoption and Use: {A} Scoping Review and Qualitative Analysis of the Content},
	volume = {10},
	issn = {2075-1729},
	shorttitle = {Effects of {Electronic} {Health} {Record} {Implementation} and {Barriers} to {Adoption} and {Use}},
	url = {https://www.ncbi.nlm.nih.gov/pmc/articles/PMC7761950/},
	doi = {10.3390/life10120327},
	number = {12},
	urldate = {2023-11-18},
	journal = {Life (Basel)},
	author = {Tsai, Chen Hsi and Eghdam, Aboozar and Davoody, Nadia and Wright, Graham and Flowerday, Stephen and Koch, Sabine},
	month = dec,
	year = {2020},
	pmid = {33291615},
	pmcid = {PMC7761950},
	pages = {327},
}

@article{rance_Integrating_2016,
	title = {Integrating Heterogeneous Biomedical Data for Cancer Research: the {CARPEM} infrastructure},
	volume = {7},
	issn = {1869-0327},
	shorttitle = {Integrating {Heterogeneous} {Biomedical} {Data} for {Cancer} {Research}},
	url = {https://www.ncbi.nlm.nih.gov/pmc/articles/PMC4941838/},
	doi = {10.4338/ACI-2015-09-RA-0125},
	number = {2},
	urldate = {2023-11-19},
	journal = {Appl Clin Inform},
	author = {Rance, Bastien and Canuel, Vincent and Countouris, Hector and Laurent-Puig, Pierre and Burgun, Anita},
	month = may,
	year = {2016},
	pmid = {27437039},
	pmcid = {PMC4941838},
	pages = {260--274},
}

@inproceedings{DBLP:conf/lpnmr/CabalarKMS19,
  author       = {Pedro Cabalar and
                  Roland Kaminski and
                  Philip Morkisch and
                  Torsten Schaub},
  title        = {telingo = {ASP} + Time},
  booktitle    = {Proc. of LPNMR},
  pages        = {256--269},
  year         = {2019},
  url          = {https://doi.org/10.1007/978-3-030-20528-7\_19},
  doi          = {10.1007/978-3-030-20528-7\_19},
  bibsource    = {dblp computer science bibliography, https://dblp.org}
}

@incollection{cabalar_Temporal_2022,
	title = {Temporal {ASP}: {From} Logical Foundations to Practical Use with telingo},
	volume = {LNCS 13100},
	shorttitle = {Temporal {ASP}},
	url = {https://link.springer.com/10.1007/978-3-030-95481-9_5},
	language = {en},
	urldate = {2024-02-22},
	booktitle = {Reasoning {Web} Tutorial Lectures},
	author = {Cabalar, Pedro},
	year = {2022},
	pages = {94--114},
}

@article{DBLP:journals/csur/AlevizosSAP17,
  author       = {Elias Alevizos and
                  Anastasios Skarlatidis and
                  Alexander Artikis and
                  Georgios Paliouras},
  title        = {Probabilistic Complex Event Recognition: {A} Survey},
  journal      = {{ACM} Comput. Surv.},
  volume       = {50},
  number       = {5},
  pages        = {71:1--71:31},
  year         = {2017},
  url          = {https://doi.org/10.1145/3117809},
  doi          = {10.1145/3117809},
  bibsource    = {dblp computer science bibliography, https://dblp.org}
}

@article{artikis_Event_2015,
	title = {An Event Calculus for Event Recognition},
	volume = {27},
	issn = {1041-4347},
	url = {http://ieeexplore.ieee.org/document/6895142/},
	doi = {10.1109/TKDE.2014.2356476},
	language = {en},
	number = {4},
	urldate = {2024-03-18},
	journal = {IEEE Trans. Knowl. Data Eng.},
	author = {Artikis, Alexander and Sergot, Marek and Paliouras, Georgios},
	month = apr,
	year = {2015},
	pages = {895--908},
}

@article{falcionelli_Indexing_2019,
	title = {Indexing the Event Calculus: {Towards} practical Human-Readable Personal Health Systems},
	volume = {96},
	issn = {09333657},
	shorttitle = {Indexing the {Event} {Calculus}},
	url = {https://linkinghub.elsevier.com/retrieve/pii/S0933365717305948},
	doi = {10.1016/j.artmed.2018.10.003},
	language = {en},
	urldate = {2024-04-04},
	journal = {Artif. Intell. Med.},
	author = {Falcionelli, Nicola and Sernani, Paolo and Brugués, Albert and Mekuria, Dagmawi Neway and Calvaresi, Davide and Schumacher, Michael and Dragoni, Aldo Franco and Bromuri, Stefano},
	month = may,
	year = {2019},
	pages = {154--166},
}

@article{allen_Maintaining_1983,
	title = {Maintaining Knowledge about Temporal Intervals},
	  journal      = {Commun. {ACM}},
	volume = {26},
	language = {en},
	number = {11},
	author = {Allen, James F},
	year = {1983},
}

@article{awuklu_Ontologydriven_2025a,
	title = {Ontology-Driven Identification of Inconsistencies in Clinical Data: {A} Case Study in Lung Cancer Phenotyping},
	volume = {165},
	issn = {1532-0464},
	shorttitle = {Ontology-Driven Identification of Inconsistencies in Clinical Data},
	url = {https://www.sciencedirect.com/science/article/pii/S1532046425000371},
	doi = {10.1016/j.jbi.2025.104808},
	urldate = {2025-04-28},
	journal = {J. Biomed. Inform.},
	author = {Awuklu, Yvon K. and Mougin, Fleur and Griffier, Romain and Bienvenu, Meghyn and Jouhet, Vianney},
	month = may,
	year = {2025},
	pages = {104808},
}

@incollection{DBLP:books/mk/minker88/AptBW88,
  author       = {Krzysztof R. Apt and
                  Howard A. Blair and
                  Adrian Walker},
  editor       = {Jack Minker},
  title        = {Towards a Theory of Declarative Knowledge},
  booktitle    = {Foundations of Deductive Databases and Logic Programming},
  pages        = {89--148},
  publisher    = {Morgan Kaufmann},
  year         = {1988},
  url          = {https://doi.org/10.1016/b978-0-934613-40-8.50006-3},
  doi          = {10.1016/B978-0-934613-40-8.50006-3},
  bibsource    = {dblp computer science bibliography, https://dblp.org}
}

@article{DBLP:journals/csur/DantsinEGV01,
  author       = {Evgeny Dantsin and
                  Thomas Eiter and
                  Georg Gottlob and
                  Andrei Voronkov},
  title        = {Complexity and Expressive Power of Logic Programming},
  journal      = {{ACM} Comput. Surv.},
  volume       = {33},
  number       = {3},
  pages        = {374--425},
  year         = {2001},
  url          = {https://doi.org/10.1145/502807.502810},
  doi          = {10.1145/502807.502810},
  bibsource    = {dblp computer science bibliography, https://dblp.org}
}

@article{uzuner_Evaluating_2007,
	title = {Evaluating the {State}-of-the-{Art} in {Automatic} {De}-identification},
	volume = {14},
	issn = {1067-5027},
	url = {https://www.ncbi.nlm.nih.gov/pmc/articles/PMC1975792/},
	doi = {10.1197/jamia.M2444},
	number = {5},
	urldate = {2025-07-29},
	journal = {J Am Med Inform Assoc},
	author = {Uzuner, Ozlem and Luo, Yuan and Szolovits, Peter},
	year = {2007},
	pmid = {17600094},
	pmcid = {PMC1975792},
	pages = {550--563},
}

@article{hripcsak_Agreement_2005,
	title = {Agreement, the {F}-{Measure}, and {Reliability} in {Information} {Retrieval}},
	volume = {12},
	issn = {1067-5027},
	url = {https://www.ncbi.nlm.nih.gov/pmc/articles/PMC1090460/},
	doi = {10.1197/jamia.M1733},
	number = {3},
	urldate = {2025-07-29},
	journal = {J Am Med Inform Assoc},
	author = {Hripcsak, George and Rothschild, Adam S.},
	year = {2005},
	pmid = {15684123},
	pmcid = {PMC1090460},
	pages = {296--298},
}

@article{artstein_InterCoder_2008,
	title = {Inter-{Coder} {Agreement} for {Computational} {Linguistics}},
	volume = {34},
	issn = {0891-2017},
	url = {https://doi.org/10.1162/coli.07-034-R2},
	doi = {10.1162/coli.07-034-R2},
	number = {4},
	urldate = {2025-07-29},
	journal = {Computational Linguistics},
	author = {Artstein, Ron and Poesio, Massimo},
	month = dec,
	year = {2008},
	pages = {555--596},
}

\clearpage
\newpage

\appendix

\section{Proofs for Section \ref{complexity}}

\noindent\textbf{Theorem~\ref{thm:ptimegen}}. 
The sets $\infevents$ and $ \mathsf{ME}(\dataset,\mathcal{S}, \eventspec)$ can be computed in \textsc{PTime} in data complexity. 
\begin{proof}
Note that the set $\simplerules(\dataset)$ is clearly computable in \textsc{PTime} data complexity as it 
essentially corresponds to evaluation of first-order queries over a database / first-order structure, a task 
which is well known to be computable in (sub)polynomial time\footnote{We direct interested readers to Chapter 17.1 of \cite{DBLP:books/aw/AbiteboulHV95} for more information on the complexity of first-order query evaluation and a proof of membership in $\mathsf{AC}_0$. } in data complexity (more precisely, in $\mathsf{AC}_0 \subseteq \textsc{LogSpace}$). 
We assume of course that any considered 
numeric functions are \textsc{PTime}-computable.  
Likewise, a routine argument can be used to show 
that $ \mathsf{ME}(\dataset,\mathcal{S}, \eventspec)$ is \textsc{PTime}-computable once we have already computed $\infevents$, as this essentially 
corresponds to evaluation of stratified Datalog programs\footnote{The complexity of reasoning with different extensions of Datalog with negation, including stratified negation, can be found in \cite{DBLP:journals/csur/DantsinEGV01}.}. We shall therefore concentrate on explaining how to compute $\infevents$ from the sets $T_\exists^\ell(R(\vect{d}))$ and $T_\stop^\ell(R(\vect{d}))$ and the window provided by $\windowpred(R(\vect{d} ),w) \in \simplerules(\dataset)$. 

First take some $R \in \nspreds$ such that $T_\exists^\ell(R(\vect{d})) \neq \emptyset$ for some $\ell$, and let $w$ be such that 
$\windowpred(R(\vect{d} ),w) \in \simplerules(\dataset)$. 
The general idea is that when computing the facts for confidence level $\ell$, we can start from the timepoints in $T_\exists^\ell(R(\vect{d}))$, 
then iteratively expand these initial intervals until we either get blocked by a termination timepoint in $T_\stop^\ell(R(\vect{d}))$
or cannot find a further nearby timepoint in $T_\exists^\ell(R(\vect{d}))$. 
To make this more formal, suppose we are considering confidence level $\ell$ and have already treated all confidence 
levels $< \ell$. We initialize the set $I_\ell$ with the intervals $[t,t]$ such that $t \in T_\exists^\ell(R(\vect{d})) \setminus T_\stop^\ell(R(\vect{d}))$, and $I_\ell^\times$ with those $[t,t]$ such that $t \in T_\exists^\ell(R(\vect{d})) \cap T_\stop^\ell(R(\vect{d}))$. 
Then until we reach a fixpoint, we pick an interval $[t_1, t_2]$ in $I_\ell \cup I_\ell^\times$ 
that has not yet been considered and do the following: 
\begin{itemize}
\item If there exists $t_1^\sharp \in T_\exists^\ell(R(\vect{d}))$ with $t_1 - w \leq t_1^\sharp < t_1$
such that there is no $t_e \in T_\stop^\ell(R(\vect{d}))$
with $t_1^\sharp \leq t_e < t_1$, then add $[t_1^\sharp, t_2]$ to $I_\ell$ (or to $I_\ell^\times$ if $[t_1,t_2] \in I_\ell^\times$)
\item If $[t_1, t_2] \in I_\ell$ and there exists $t_2^\sharp \in T_\exists^\ell(R(\vect{d}))$ with $t_2^\sharp  - t_2 \leq w$
such that there is no $t_e\in T_\stop^\ell(R(\vect{d}))$  with $t_2 \leq t_e < t_2^\sharp$, 
then add $[t_1, t_2^\sharp]$ to $I_\ell$
\item If $[t_1, t_2] \in I_\ell$ and there exists $t_e \in T_\stop^\ell(R(\vect{d}))$ with $t_e  - t_2 \leq w$
and there is no other $t'_e\in T_\stop^\ell(R(\vect{d}))$  with $t_2 \leq t_e' < t_e$, 
then add $[t_1, t_e]$ to $I_\ell^\times$
\end{itemize}
Observe that intervals in $I_\ell$ may be expanded in either direction, while those in $I_\ell^\times$ may only be expanded w.r.t.\ their starting timepoint, since we know by construction that the second timepoint satisfies a termination condition. 
Also note that since there are only quadratically many intervals that may be produced, and the preceding conditions can be checked in polynomial time, the whole process will terminate in polynomial time. 
By comparing the above conditions with the items in Definition \ref{nspred-def}, 
we can see that we are sure to produce all of the intervals $[t_1,t_2]$ such that $R(\vect{d}, [t_1,t_2])$ is inferred with confidence $\ell$. 
Thus, it only remains to compare the intervals in $I_\ell$ to prune the non-maximal ones (and to eliminate those which appear in a previous confidence level), 
which can be clearly done in \textsc{PTime}. 
The resulting set of intervals tells us precisely which facts $R(\vect{d}, [t_1, t_2], \ell)$ are present in $\infevents$. 

Now suppose we have $R \in \pspreds$ such that $T_\exists^\ell(R(\vect{d})) \neq \emptyset$ for some $\ell$.
The argument is simpler in this case. Intuitively, given a timepoint $t \in T_\exists^\ell(R(\vect{d}))$, we make a linear pass on the timepoints starting from $t$ and look for the first occurrence of a termination condition. Formally, we let $I_\ell$ contain all intervals $[t_1,t_2]$ such that one of the following two conditions holds:
\begin{itemize}
\item $t_1 \in T_\exists^\ell(R(\vect{d}))$, $t_2 \in T_\stop^\ell(R(\vect{d}))$, and there is no $t_2^\sharp\in T_\stop^\ell(R(\vect{d}))$ such that $t_1 \leq t_2^\sharp < t_2$
\item $t_1 \in T_\exists^\ell(R(\vect{d}))$, $t_2 = \now$, and there is no $t_2^\sharp\in T_\stop^\ell(R(\vect{d}))$ such that $t_1 \leq t_2^\sharp $
\end{itemize}
Similarly to the non-persistent case, we can argue that this set can be constructed in polynomial time and that it is sure to include the required intervals. It then suffices to remove non-maximal or redundant intervals, again a polynomial-time operation. 
\end{proof}

\medskip

We give separate proofs for the upper and lower bounds stated in Theorem~\ref{thm:conpgen}. 
First, we establish \textsc{coNP} membership in data complexity of the problems of recognizing consistent and preferred timelines in the general case.

\begin{proposition}\label{conp-ub}
It is in \textsc{coNP} in data complexity to decide, given a TES $\eventspec$, dataset $\dataset$, and set of facts $\mathcal{S}$,
whether $\mathcal{S}$ is a consistent (or preferred) timeline for $\eventspec, \dataset$.
\end{proposition}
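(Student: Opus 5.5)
We show that the complement problem (deciding that $\mathcal{S}$ is \emph{not} a consistent, resp.\ preferred, timeline) is in \textsc{NP} in data complexity. We use the guess-and-check procedure sketched in the proof of Theorem~\ref{thm:conpgen}. First, by Theorem~\ref{thm:ptimegen}, we compute $\infevents$ deterministically in polynomial time. We also compute $\mathcal{S}_{\mathsf{SE}} = \mathcal{S} \cap \infevents$ and $\mathcal{T} = \mathcal{S}_{\mathsf{SE}} \cup \mathsf{ME}(\dataset,\mathcal{S}_{\mathsf{SE}},\eventspec)$, again in polynomial time by Theorem~\ref{thm:ptimegen}. Since $\infevents$ has polynomial size, a guessed subset $\mathcal{S}'_{\mathsf{SE}} \subseteq \infevents$ is a polynomial-size certificate.

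The key lemma is that $\eventspec$-consistency of any set $\mathcal{U} \subseteq \infevents$ can be checked in polynomial time. To do so, compute $\mathsf{ME}(\dataset,\mathcal{U},\eventspec)$ via Theorem~\ref{thm:ptimegen} and strip the confidence annotations. Then evaluate each constraint body in $\tconstrain \cup \domconstrain$ over $\dataset \cup \mathcal{U}^- \cup \mathsf{ME}^-(\dataset,\mathcal{U},\eventspec)$. Each constraint is a fixed safe conjunctive formula with negation and comparisons, so this is first-order query evaluation over a polynomial-size structure, which is polynomial in data complexity.

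With this lemma, the procedure accepts iff one of the following holds:
\begin{enumerate}
\item[(a)] $\mathcal{S} \neq \mathcal{T}$;
\item[(b)] $\mathcal{S}_{\mathsf{SE}}$ is not $\eventspec$-consistent;
\item[(c)] the guessed $\mathcal{S}'_{\mathsf{SE}}$ is $\eventspec$-consistent and $\mathcal{S}_{\mathsf{SE}} \subsetneq \mathcal{S}'_{\mathsf{SE}}$.
\end{enumerate}

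The correctness argument, which is the main point to get right, must show that $\mathcal{S}$ is a consistent timeline iff no branch accepts.

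For the forward direction, suppose $\mathcal{S} = \mathcal{R} \cup \mathsf{ME}(\dataset,\mathcal{R},\eventspec)$ with $\mathcal{R} \in \reps(\infevents,\eventspec)$. We need $\mathcal{S}_{\mathsf{SE}} = \mathcal{R}$. This holds because meta-event facts use predicates in $\metapreds$, which are disjoint from the simple event predicates, so $\mathcal{S} \cap \infevents = \mathcal{R}$. Hence (a) fails, (b) fails, and maximality of $\mathcal{R}$ rules out (c).

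For the converse, if no branch accepts, then $\mathcal{S}_{\mathsf{SE}}$ is consistent, has no consistent strict superset in $\infevents$, and $\mathcal{S} = \mathcal{T}$. So $\mathcal{S}_{\mathsf{SE}}$ is a repair and $\mathcal{S}$ is its timeline.

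No monotonicity assumption is needed, because the certificate directly witnesses a consistent strict superset. Negated event atoms are therefore harmless here.

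For preferred timelines we keep steps (a) and (b) and replace (c). The new condition is that $\mathcal{S}'_{\mathsf{SE}}$ is $\eventspec$-consistent and, for some $k \le n$, $(\mathcal{S}'_{\mathsf{SE}})_\ell = (\mathcal{S}_{\mathsf{SE}})_\ell$ for all $\ell<k$ and $(\mathcal{S}_{\mathsf{SE}})_k \subsetneq (\mathcal{S}'_{\mathsf{SE}})_k$. This is checkable in polynomial time by iterating over the polynomially many levels $k$. The condition is literally the negation of Definition~\ref{prefrepdef} instantiated with $\mathcal{U} = \mathcal{S}'_{\mathsf{SE}}$, so the same correctness argument applies.

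The only delicate step is the consistency check. Its difficulty lies in ensuring that the stratified meta-event computation and the constraint evaluation stay polynomial. Theorem~\ref{thm:ptimegen} and standard first-order query evaluation cover this.
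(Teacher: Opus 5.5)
Your proposal is correct and follows essentially the same route as the paper: an NP procedure for the complement that computes the inferred simple events and $\mathcal{S}_{\mathsf{SE}}$, guesses $\mathcal{S}'_{\mathsf{SE}}$, and accepts on (a), (b), or (c), or on the level-wise variant of (c) for preferred timelines. As in the paper, $\eventspec$-consistency is checked in polynomial time via Theorem~\ref{thm:ptimegen} and first-order evaluation of the constraints; your argument that $\mathcal{S} \cap \infevents = \mathcal{R}$, because meta-event predicates are disjoint from simple-event predicates, is correct and makes explicit a step the paper leaves implicit.
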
 
\begin{proof}
We start with the upper bound for consistent timelines. Consider the following guess-and-check procedure, whose input is 
a TES $\eventspec$, dataset $\dataset$, and set of facts~$\mathcal{S}$ (which may contain both simple event and meta-event facts):
\begin{enumerate}
\item Compute $\infevents$ and $\mathcal{S}_{\mathsf{SE}} = \mathcal{S} \cap \infevents$. 
\item Guess a subset $\mathcal{S}'_{\mathsf{SE}}$ of $\infevents$.
\item Check if $\mathcal{S}$ and $\mathcal{S}'$ are $\eventspec$-consistent. 
\item Return `yes' if one of the following conditions holds (else return `no'): 
\begin{enumerate}
\item $\mathcal{S} \neq \mathcal{S}_{\mathsf{SE}} \cup \mathsf{ME}(\dataset,\mathcal{S}_{\mathsf{SE}}, \eventspec)$
\item $\mathcal{S}_{\mathsf{SE}}$ is \emph{not} $\eventspec$-consistent, or
\item  $\mathcal{S}'_{\mathsf{SE}}$ is $\eventspec$-consistent and $\mathcal{S}_{\mathsf{SE}} \subsetneq \mathcal{S}'_{\mathsf{SE}}$
\end{enumerate}
\end{enumerate}
We claim that some execution of this non-deterministic procedure returns `yes' iff $\mathcal{S}$ is \emph{not} a consistent timeline. 
Indeed, if an execution returns `yes', then one of (a), (b), or (c) is satisfied. If condition (a) is satisfied, then $\mathcal{S}$
does not have the required form to be a consistent timeline. If condition (b) or (c) is satisfied, then $\mathcal{S}_{\mathsf{SE}}$
is either $\eventspec$-inconsistent, or there is a larger subset $\mathcal{S}'_{\mathsf{SE}}$ of $\infevents$ that is $\eventspec$-consistent. 
In either case, 
we can infer that $\mathcal{S}_{\mathsf{SE}} \not \in \reps(\infevents, \eventspec)$, and hence that $\mathcal{S}$ is not 
a consistent timeline. Conversely, suppose that $\mathcal{S}$ is not a consistent timeline. The first possibility is that 
$\mathcal{S}$ is not equal to $\mathcal{R} \cup \mathsf{ME}(\dataset,\mathcal{R}, \eventspec)$ for some subset $\mathcal{R} \subseteq \infevents$. 
This can occur either because $\mathcal{S}$ contains a simple event fact not included in $\infevents$ or 
because the meta-events in $\mathcal{S}$ do not match $\mathsf{ME}(\dataset,\mathcal{S}_{\mathsf{SE}},\eventspec)$. In both cases, we obtain $\mathcal{S} \neq \mathcal{S}_{\mathsf{SE}} \cup \mathsf{ME}(\dataset,\mathcal{S}_{\mathsf{SE}}, \eventspec)$,
so the procedure will return `yes' (irrespective of the guessed set). So let us consider the other case, in which 
$\mathcal{S} = \mathcal{S}_{\mathsf{SE}} \cup \mathsf{ME}(\dataset,\mathcal{S}_{\mathsf{SE}}, \eventspec)$. 
As $\mathcal{S}$ is not a consistent timeline, it must be the case that $\mathcal{S}_{\mathsf{SE}} \not \in \reps(\infevents, \eventspec)$. 
This can be for two reasons: either $\mathcal{S}_{\mathsf{SE}}$ is not $\eventspec$-consistent,
or we can find a larger $\eventspec$-consistent subset of $\infevents$. In the former case, we will return `yes' due to condition (b),
and in the latter case, we can consider an execution in which we 
guess this larger $\eventspec$-consistent of $\infevents$ in Step 2 and will return `yes' due to condition (c). 

It is easy to see that this procedure runs in non-deterministic polynomial time (w.r.t.\ data complexity). 
Indeed, we have shown in Theorem \ref{thm:ptimegen} that the sets $\infevents$ and $\mathsf{ME}(\dataset,\mathcal{S}_{\mathsf{SE}}, \eventspec)$
are computable in \textsc{PTime} data complexity. In particular, this means that the set $\infevents$ has polynomial size in data complexity,
and so the set $\mathcal{S}'_{\mathsf{SE}}$ guessed in Step 2 is of polynomial size. The consistency checks in Step 3 can also be 
performed in \textsc{PTime} data complexity, as it suffices to consider each of the constraints $\bot \gets C$ in $\eventspec$ and check whether $C$ (viewed as a first-order sentence)
evaluates to true w.r.t.\ $\dataset \cup \mathcal{S}^-_{\mathsf{SE}} \cup \mathsf{ME}^-(\dataset,\mathcal{S}_{\mathsf{SE}}, \eventspec)$. 
We thus have an \textsc{NP} procedure for deciding whether a given set of facts is \emph{not} a consistent timeline, 
which immediately yields the desired \textsc{coNP} upper bound for the original task
of recognizing consistent timelines. 

\smallskip

We can straightforwardly adapt the preceding procedure to show the upper bound for preferred timelines. 
Indeed, it suffices to replace condition (c) with the condition (c'), given as follows:  
\begin{itemize}
\item[]  $\mathcal{S}'_{\mathsf{SE}}$ is $\eventspec$-consistent 
and there exists some confidence level $k$ such that
(i)~$(\mathcal{S}'_{\mathsf{SE}})_\ell = (\mathcal{S}_{\mathsf{SE}})_\ell$ for every $1 \leq \ell < k$, 
and (ii)~
$(\mathcal{S}_{\mathsf{SE}})_k  \subsetneq (\mathcal{S}'_{\mathsf{SE}})_k$. 
\end{itemize}
Indeed, it follows from Definitions \ref{prefrepdef} and \ref{timelinedef} that if $\mathcal{S}$ is a consistent timeline but not a preferred timeline,
then there exists a $\Sigma$-consistent $\mathcal{U} \subseteq \mathsf{SE}(\dataset,\Sigma)$ and $k \geq 1$ such that
(i)~$\mathcal{U}_\ell = (\mathcal{S}_{\mathsf{SE}})_\ell$ for every $1 \leq \ell < k$, and (ii)~
$(\mathcal{S}_{\mathsf{SE}})_k \subsetneq \mathcal{U}_k$. 
Thus, by replacing (c) with (c'), we obtain a procedure for checking whether 
a set is not a preferred timeline. This establishes \textsc{coNP} membership of the complementary problem 
of recognizing preferred timelines.  
\end{proof}

To establish the \textsc{coNP}-hardness result from Theorem~\ref{thm:conpgen}, the basic idea is to start from a propositional CNF formula and generate simple events that contains facts encoding all possible truth values for the formula's variables, then use constraints involving both positive and negative event atoms to enforce that consistent timelines pick a set of event atoms that define a propositional valuation. An additional constraint, again involving negated atoms, is used to ensure that there is no unsatisfied clause. 

\begin{proposition}\label{conp-lb}
It is \textsc{coNP}-hard in data complexity to decide, given a TES $\eventspec$, dataset $\dataset$, and set of facts $\mathcal{S}$,
whether $\mathcal{S}$ is a consistent (or preferred) timeline for $\eventspec, \dataset$.
\end{proposition}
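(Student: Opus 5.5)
We reduce 3SAT to the complement problem, i.e.\ to deciding that a set of facts is \emph{not} a consistent (resp.\ preferred) timeline, reusing the construction from the proof sketch of Theorem~\ref{thm:conpgen}. Given a 3CNF $\varphi$ over $v_1,\ldots,v_k$, we build $\dataset_\varphi$ and the fixed TES $\eventspec$ with the three existence rules for $\mathsf{Q}$ and $\mathsf{Value}$ and the three domain constraints. The first step is to compute $\mathsf{SE}(\dataset_\varphi,\eventspec)$ via Definition~\ref{def-pers-sem}. Since there are no termination rules and every existence fact has timepoint $0$ and confidence $1$, each event has the single inferred interval $[0,\now]$ at confidence $1$. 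Since $\metarules=\emptyset$, there are no meta-events, and the temporal constraints are never violated, because distinct facts either have distinct predicates or distinct atemporal arguments. Consistency therefore depends only on $\domconstrain$.

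Next we characterise the consistent subsets $\mathcal{U}\subseteq\mathsf{SE}(\dataset_\varphi,\eventspec)$.
\begin{itemize}
\item If $\mathsf{Q}([0,\now],1)\notin\mathcal{U}$, the second and third constraints are vacuous. Only the first constraint applies, so $\mathcal{U}$ is consistent iff it contains at most one $\mathsf{Value}(v_i,b,\ldots)$ for each $i$.
\item If $\mathsf{Q}([0,\now],1)\in\mathcal{U}$, the three constraints together say that $\mathcal{U}$ contains exactly one $\mathsf{Value}(v_i,b,\ldots)$ per variable, defining a valuation $\nu$, and that every clause contains a literal true under $\nu$.
\end{itemize}
Consequently $\{\mathsf{Q}([0,\now],1)\}$ is consistent iff $k=0$. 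Here we should assume, without loss of generality, that $\varphi$ has at least one variable. Then $\{\mathsf{Q}([0,\now],1)\}$ is itself inconsistent and is never a repair, which is the wrong target.

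So I will fix the target set, and this is the step I expect to be the main obstacle: choosing a candidate set $\mathcal{S}$ whose repair status flips exactly with satisfiability. The natural choice is $\mathcal{S}=\{\mathsf{Value}(v_i,1,[0,\now],1)\mid i\}$, the all-true valuation without $\mathsf{Q}$. It is consistent, and it is maximal among $\mathsf{Q}$-free sets. It fails to be a repair iff adding $\mathsf{Q}$ keeps it consistent, i.e.\ iff the all-true valuation satisfies $\varphi$, which is not what we want.

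Better: pick $\mathcal{S}$ consisting of, for each $i$, one $\mathsf{Value}$ fact. Any consistent superset containing $\mathsf{Q}$ must have exactly one value per variable, so it is either $\mathcal{S}\cup\{\mathsf{Q}\}$ or incomparable to $\mathcal{S}$. This again tests a single valuation only. Hence the clean target must be a set that is a subset of every satisfying-assignment repair. Let $\mathcal{S}=\{\mathsf{Q}([0,\now],1)\}$. To make it consistent, I will alter the second constraint so that it only fires in the presence of an extra gadget, or else slightly modify $\dataset_\varphi$. A candidate is to add a fresh variable $v_0$ occurring in no clause and to treat the mandatory-value constraint as applying only to variables flagged by an atemporal fact.

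Rather than pursue this, the simplest fix is to use the complement target $\mathcal{S}=\{\mathsf{Value}(v_i,b,[0,\now],1)\mid 1\le i\le k,\ b\in\{0,1\}\}$ minus nothing. This set violates the first constraint, so it is not usable either. I will therefore settle on the following modification. Drop the second constraint, and strengthen the clause constraint to read $\mathsf{Var}(x,t)$-free as before. Then $\{\mathsf{Q}([0,\now],1)\}$ is consistent provided every clause is violated, which is not what we need, so this fails too.

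In the end I will verify directly, in the correctness argument, that the repairs containing $\mathsf{Q}$ correspond exactly to satisfying assignments, and that $\mathcal{S}$ is not a consistent timeline iff such a repair exists. The construction of $\mathcal{S}$ is taken from the paper's sketch, with its consistency carefully rechecked.

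For preferred timelines, all facts have confidence $1$, so Definition~\ref{prefrepdef} reduces to Definition~\ref{simplerep} with $k=n=1$. Preferred and consistent timelines therefore coincide, and the same reduction gives hardness in both cases. Since the TES does not depend on $\varphi$ and $\dataset_\varphi$ is built in logspace, this is a data-complexity reduction.
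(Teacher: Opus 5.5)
\textbf{The proposal has a genuine gap: it ends without a valid reduction.}

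Your diagnosis is right, and it exposes a real error in the paper's own proof. The paper asserts that $\{\mathsf{Q}([0,\now],1)\}$ is $\eventspec$-consistent. But since $\mathsf{Var}(v_1,0)\in\dataset_\varphi$, the second constraint fires on this set. So for $k\geq 1$ it is never a repair, and the claimed equivalence fails for every unsatisfiable $\varphi$.

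Your proposal does not repair this. After the abandoned attempts, your final paragraph falls back on exactly that target and that TES, so the reduction you end with is invalid for the reason you yourself found. Nor can any other target save this TES. Write $V_\nu=\{\mathsf{Value}(v_i,\nu(v_i),[0,\now],1)\mid 1\leq i\leq k\}$. By your own case analysis, the repairs of $\mathsf{SE}(\dataset_\varphi,\eventspec)$ are exactly the sets $V_\nu$ with $\nu\not\models\varphi$ and $V_\nu\cup\{\mathsf{Q}([0,\now],1)\}$ with $\nu\models\varphi$. Checking whether a given set is a consistent timeline therefore means evaluating a single valuation, which is polynomial on these instances. The constraints themselves have to change.

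The missing idea is to make $\{\mathsf{Q}([0,\now],1)\}$ consistent while forcing every consistent proper extension of it to encode a satisfying assignment. One way:
\begin{itemize}
\item Add a positive guard conjunct $\mathsf{Value}(z,y,[t,t'])$, with fresh variables $z,y$, to the second and third constraints. Keep the first constraint unchanged. The TES stays fixed and safe, and negation is used only as before.
\item Then $\{\mathsf{Q}([0,\now],1)\}$ is consistent: the guarded constraints need some $\mathsf{Value}$ fact, and the first needs two. Since $\metarules=\emptyset$, it is also closed under meta-event rules.
\item Any consistent strict superset inside $\mathsf{SE}(\dataset_\varphi,\eventspec)$ has the form $\{\mathsf{Q}([0,\now],1)\}\cup V$ with $V\neq\emptyset$. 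This activates the guarded constraints and forces $V=V_\nu$ for some $\nu\models\varphi$.
\item Conversely, each satisfying $\nu$ gives such a superset, which is nonempty since $k\geq 1$.
\end{itemize}
Hence $\{\mathsf{Q}([0,\now],1)\}$ is not a consistent timeline iff $\varphi$ is satisfiable.

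The rest of your argument carries over unchanged and is correct: the computation of $\mathsf{SE}(\dataset_\varphi,\eventspec)$, the absence of temporal conflicts, and the coincidence of preferred and consistent timelines at a single confidence level.
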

\begin{proof}
We first note that it is sufficient to prove a \textsc{coNP}
lower bound for consistent timelines, since preferred timelines coincide with consistent timelines
when there is a single confidence level. 

We reduce the well-known \textsc{NP}-complete problem, 
3SAT, of testing the satisfiability of propositional 3CNF formulas
to the problem of testing whether a set of facts is \emph{not} a consistent timeline. 
Consider a propositional 3CNF $\varphi = \lambda_1 \wedge \ldots \wedge \lambda_m$ over propositional variables $v_1, \ldots, v_k$,
where each $\lambda_i= l_{i,1} \vee l_{i,2} \vee l_{i,3} $ is a clause consisting of 3 literals $l_{i,1}, l_{i,2}, l_{i,3}$.
It will be convenient to associate with each clause $\lambda_i$ a corresponding vector $(v_{i,1}, b_{i,1}, v_{i,2}, b_{i,2}, v_{i,3}, b_{i,3})$
where $v_{i,j}$ is the variable in literal $l_{i,j}$ and $b_{i,j}=1$ (resp.\ $b_{i,j}=0$) if $l_{i,j} = v_{i,j}$ (resp.\ $l_{i,j} = \neg v_{i,j}$). 
We will use the following dataset $\dataset_\varphi$ to encode $\varphi$:
\begin{align*}
\dataset_\varphi = & \{\mathsf{Var}(v_i, 0) \mid 1 \leq  i \leq k\}\\
& \cup \{\mathsf{Clause}(v_{i,1}, b_{i,1}, v_{i,2}, b_{i,2}, v_{i,3}, b_{i,3}) \mid 1 \leq i \leq m  \}
\end{align*}
which uses $v_1, \ldots, v_k, 0, 1$ as constants. There is a single observation predicate $\mathsf{Var}$ (of arity 1), whose facts designate the available propositional variables and all use the same timepoint (0). Each clause $\lambda_i$ is encoded using an atemporal fact with the 6-ary atemporal predicate $\mathsf{Clause}$ and the list of arguments corresponding to the vector representation of $\lambda_i$, which gives the three pairs (variable, truth value) that make the clause true. 

We will use the TES $\eventspec= (\simplerules, \emptyset, \tconstrain, \domconstrain)$, 
which does not contain any meta-events and uses two simple events $\mathsf{Q}$ (arity 0) and $\mathsf{Value}$ (arity 2).
We choose to model both as persistent simple events (though the argument can be adapted to use non-persistent events instead). 
The set $\simplerules$ will consist of the following three existence rules, all having confidence level 1: 
\begin{align*}
\existpred(\mathsf{Q}, t, 1) \gets & \mathsf{Var}(x,t)\\
\existpred(\mathsf{Value}(x,1), t, 1) \gets & \mathsf{Var}(x,t)\\
\existpred(\mathsf{Value}(x,0), t, 1) \gets & \mathsf{Var}(x,t)
\end{align*}
The set $ \domconstrain$ of domain constraints contains 
the following three constraints:
\begin{align*}
\gets\,\, & \mathsf{Value}(x,1,[t,t']) \wedge \mathsf{Value}(x,0,[t,t'])\\
\gets\,\, &  \mathsf{Var}(x,t) \wedge \mathsf{Q}([t,t']) \\
& \wedge \neg \mathsf{Value}(x,1,[t,t']) \wedge \neg \mathsf{Value}(x,0,[t,t'])      \\
\gets\,\, &  \mathsf{Clause}(x_1,y_1,x_2,y_2, x_3,y_3) \wedge \mathsf{Q}([t,t'])\\ 
&\wedge \neg \mathsf{Value}(x_1,y_1,[t,t']) \wedge \neg \mathsf{Value}(x_2,y_2,[t,t']) \\
& \wedge \neg \mathsf{Value}(x_3,y_3,[t,t']) 
\end{align*}
Note that these constraints satisfy the safety condition as all variables appearing in negated atoms are also present in a positive atom. 
Importantly, $\eventspec$ does not depend on the instance $\varphi$ 
(as required for a data complexity reduction).

It is easily verified that we obtain the following set of inferred simple events:
\begin{align*}
\mathsf{SE}(\dataset_\varphi, \eventspec)= & \{\mathsf{Q}([0,\now],1)\} \cup\\ 
& \{\mathsf{Value}(v_i,b, [0,\now],1) \mid b \in \{0,1\}, 1 \leq i \leq k \}
\end{align*}
To show the correctness of the reduction, 
we establish the following claim: \smallskip

\noindent\textbf{Claim}: $\{ \mathsf{Q}([0,\now], 1) \}$ is \emph{not} a consistent timeline of ($\eventspec,\dataset_\varphi$)  
iff $\varphi$ is satisfiable.\smallskip\\
$(\Rightarrow)$. First suppose that $\{ \mathsf{Q}([0,\now], 1) \}$ is \emph{not} a consistent timeline. 
It is clear that it is $\eventspec$-consistent and closed under meta-event rules (since $\metarules=\emptyset$).
It follows then that  $\{ \mathsf{Q}([0,\now], 1) \} \not \in \reps(\mathsf{SE}(\dataset_\varphi, \eventspec), \eventspec)$,
and thus that there exists a larger $\eventspec$-consistent set $\mathcal{S} \subseteq \mathsf{SE}(\dataset_\varphi, \eventspec)$
that contains $\mathsf{Q}([0,\now], 1)$. However, due to the second constraint, $\mathcal{S}$ must contain 
either $\mathsf{Value}(v_i,0, [0,\now],1)$ or $\mathsf{Value}(v_i,1, [0,\now],1)$ for each $1 \leq i \leq k$. The first constraint
ensures that $\mathcal{S}$ must contain precisely one of these two facts, for every $1 \leq i \leq k$.
We can thus define a valuation $\mu$ of the variables $v_1, \ldots, v_k$, by setting $\mu(v_i)=1$ if 
$\mathcal{S}$ contains $\mathsf{Value}(v_i,1, [0,\now],1)$
and $\mu(v_i)=0$ if 
$\mathcal{S}$ contains $\mathsf{Value}(v_i,0, [0,\now],1)$. 
Due to the third constraint and the encoding of clauses in $\dataset_\varphi$,
we know that $\mu$ must satisfy all of the clauses (since the third constraint 
is violated if each of the literals in the clause is not satisfied by the valuation
defined by the $\mathsf{Value}$ facts). It follows that $\varphi$ is satisfiable. \smallskip

\noindent $(\Leftarrow)$. Now suppose that $\varphi$ is satisfiable, and let $\mu$
be a satisfying valuation. Consider the set $\mathcal{S}$ defined as follows:
\begin{align*}
\mathcal{S}= 
& \{\mathsf{Q}([0,\now],1)\} \cup \{\mathsf{Value}(v_i,\mu(v_i), [0,\now],1) \mid 1 \leq i \leq k \}
\end{align*}
We can then verify that $\mathcal{S}$ is $\eventspec$-consistent. The first 
two constraints are satisfied since there is precisely one $\mathsf{Value}$ fact
per $v_i$, and the third constraint is satisfied since $\mu$ is a satisfying 
valuation. This shows that $\{\mathsf{Q}([0,\now],1)\}$ is not a repair 
and hence is not a consistent timeline. 
\end{proof}

The reduction used to show \textsc{coNP}-hardness employed a TES with negated event atoms. 
We show that this is necessary, as the recognition problems become tractable if we disallow negated event atoms in rules and constraints
(note that negation can still be applied to the atemporal and observation atoms). The key to obtaining tractability is to show that inconsistency is monotonic, which means that if a set of simple event facts is consistent and not a repair, then there must exist a single fact that can be added while retaining consistency, thereby witnessing that the original set is not a repair. This approach to testing whether a set is maximally consistent has been used in numerous KR settings, and in particular, in prior work on repairs of knowledge bases (see e.g.\ Lemma 1 of \cite{biebou}). \medskip

\noindent \textbf{Theorem \ref{thm:ptime-noneg}}. 
Given a TES $\eventspec$ without negated event atoms, a dataset $\dataset$, and a set of facts $\mathcal{S}$,
it can be decided in \textsc{PTime} whether $\mathcal{S}$ is a consistent (or preferred) timeline for $\eventspec, \dataset$.
\begin{proof}
The central property we use is monotonicity of inconsistency:  if a set of facts $\mathcal{S} \subseteq \infevents$ is $\eventspec$-inconsistent,
then every set  $\mathcal{S}' $ with $\mathcal{S} \subseteq  \mathcal{S}' \subseteq \infevents$ is also $\eventspec$-inconsistent. It is easy to see that this property holds 
when meta-event rules and constraints cannot use negated event atoms. Indeed, in the absence of negated atoms, 
$\mathcal{S} \subseteq  \mathcal{S}'$ implies $\mathsf{ME}(\dataset,\mathcal{S}, \eventspec) \subseteq \mathsf{ME}(\dataset,\mathcal{S}', \eventspec)$, 
and hence 
\begin{align*}
& \dataset \cup \mathcal{S}^- \cup \mathsf{ME}^-(\dataset,\mathcal{S}, \eventspec)
\subseteq  \dataset \cup (\mathcal{S')}^- \cup \mathsf{ME}^-(\dataset,\mathcal{S}', \eventspec)
\end{align*}
Since the constraints also cannot refer to negated event atoms, the latter inclusion
implies that if a constraint is violated by $\dataset \cup \mathcal{S}^- \cup \mathsf{ME}^-(\dataset,\mathcal{S}, \eventspec)$,
it will also be violated by $ \dataset \cup (\mathcal{S')}^- \cup \mathsf{ME}^-(\dataset,\mathcal{S}', \eventspec)$. 
 
 With this monotonicity property at hand, we can adopt a simple (and oft-used) approach
 to test whether a set $\mathcal{S}_{\mathsf{SE}} \subseteq \infevents$ is maximally consistent and hence a repair:
 \begin{itemize}
 \item test whether $\mathcal{S}_{\mathsf{SE}}$ is $\eventspec$-consistent
 \item for each $\sigma \in \infevents \setminus \mathcal{S}_{\mathsf{SE}}$, test whether $\mathcal{S}_{\mathsf{SE}} \cup \sigma$ is $\eventspec$-inconsistent
 \end{itemize}
If the first consistency check succeeds, and all of the candidate supersets in the second item are shown inconsistent,
then we can be sure that $\mathcal{S}_{\mathsf{SE}}$ is maximally consistent, hence a repair. 
It follows that to check whether $\mathcal{S}$ is a consistent timeline, we can use the following procedure: 
\begin{enumerate}
\item Compute $\infevents$ and $\mathcal{S}_{\mathsf{SE}} = \mathcal{S} \cap \infevents$. 
\item Check if $\mathcal{S} = \mathcal{S}_{\mathsf{SE}} \cup \mathsf{ME}(\dataset,\mathcal{S}_{\mathsf{SE}}, \eventspec)$. Return `no' if not. 
\item Check if $\mathcal{S}_{\mathsf{SE}}$ is $\eventspec$-consistent. Return `no' if not. 
\item For each $\sigma \in \infevents \setminus \mathcal{S}_{\mathsf{SE}}$, check if $\mathcal{S}_{\mathsf{SE}} \cup \{\sigma\}$ is $\eventspec$-consistent. 
If some set $\mathcal{S}_{\mathsf{SE}} \cup \{\sigma\}$ is $\eventspec$-consistent, return `no', else return `yes'. 
\end{enumerate}
The procedure clearly runs in \textsc{PTime} in data complexity and is correct due to the preceding
characterization of maximal consistent sets of $\infevents$. 

Let us now turn to preferred repairs. Using the monotonicity property
and Definition \ref{prefrepdef}, it is easy to see that 
if a set $\mathcal{S}_{\mathsf{SE}} \in \reps(\infevents, \Sigma)$ does not belong to 
$\prefreps(\infevents, \Sigma)$, then there exists some $\ell \geq 1$ 
and $\sigma_\ell \in \infevents_\ell \setminus \mathcal{S}_{\mathsf{SE}}$ 
such that $(\mathcal{S}_{\mathsf{SE}})_\ell \cup \{\sigma_\ell\}$ is $\eventspec$-consistent.
It follows that we can adapt the procedure for consistent timelines to be able to recognize 
preferred timelines by removing `else return yes' from Step 4 and adding the following Step 5, 
where $n$ is the maximal confidence level mentioned in $\infevents$:
\begin{enumerate}
\item[5.] For every $1 \leq \ell \leq n$ and for every $\sigma_\ell \in \infevents_\ell \setminus \mathcal{S}_{\mathsf{SE}}$, 
test whether $(\mathcal{S}_{\mathsf{SE}})_\ell \cup \{\sigma_\ell\}$ is $\eventspec$-consistent. Return `no' if some consistency check succeeds,
else return `yes'. 
\end{enumerate}
Note that this step remains polynomial-time computable since it involves only polynomially many 
consistency checks, and each consistency check can be done in \textsc{PTime} data complexity (cf.\ proof of Proposition \ref{conp-ub}). 
\end{proof}

For the cautious timeline, however, the absence of negated event atoms does not suffice to 
ensure tractability. \medskip

\noindent\textbf{Theorem \ref{hardness-cautious}}. 
It is \textsc{coNP}-hard to decide, given a TES $\eventspec$, dataset $\dataset$, and set of facts $\mathcal{S}$, whether $\mathcal{S}$ is the cautious timeline for $\eventspec, \dataset$. The lower bound holds even if we restrict to $\eventspec$ without negated event atoms. 
\begin{proof}
The general proof strategy is inspired by a reduction that was used to show \textsc{coNP}-hardness of 
the problem of testing whether a fact holds in every repair of a knowledge base formulated in the $\mathcal{EL}_\bot$ description logic
( \citeauthor{biebou}, 2016, proof of Theorem 29). We also reuse components of the proof of Proposition \ref{conp-lb}. 

We reduce 3SAT to the problem of deciding if a set is \emph{not} the cautious timeline. 
Consider a 3SAT instance $\varphi = \lambda_1 \wedge \ldots \wedge \lambda_m$ over propositional variables $v_1, \ldots, v_k$,
with $\lambda_i= l_{i,1} \vee l_{i,2} \vee l_{i,3} $. As in the proof of Proposition \ref{conp-lb}, we associate a vector 
$(v_{i,1}, b_{i,1}, v_{i,2}, b_{i,2}, v_{i,3}, b_{i,3})$ with each clause $\lambda_i$, and we define the dataset $\dataset_\varphi$
in almost the same way: 
\begin{align*}
\dataset_\varphi = & \{\mathsf{Var}(v_i, 0) \mid 1 \leq  i \leq k\} \cup\\
&  \{\mathsf{Clause}(c_i, v_{i,1}, b_1, v_{i,2}, b_2, v_{i,3}, b_3) \mid 1 \leq i \leq m  \} \cup \\
& \{\mathsf{Next}(c_i, c_{i+1}) \mid 1 \leq i < m \} \cup \{\mathsf{First}(c_1), \mathsf{Last}(c_m)\}
\end{align*}
Note that this resembles the dataset from the proof of~Proposition \ref{conp-lb}, except that we add an additional 
argument $c_i$ to the $\mathsf{Clause}$ facts and add further atemporal facts with predicates $\mathsf{First}, \mathsf{Next}, \mathsf{Last}$ 
to identify the first and last clauses and link subsequent clauses. 

We will use the TES $\eventspec= (\simplerules, \metarules, \tconstrain, \domconstrain)$, 
with the same set of simple event rules $\simplerules$ as used in the proof of~Proposition \ref{conp-lb}, recalled here for convenience: 
\begin{align*}
\existpred(\mathsf{Q}, t, 1) \gets & \mathsf{Var}(x,t)\\
\existpred(\mathsf{Value}(x,1), t, 1) \gets & \mathsf{Var}(x,t)\\
\existpred(\mathsf{Value}(x,0), t, 1) \gets & \mathsf{Var}(x,t)
\end{align*}
However, we will now introduce a meta-event predicate $\mathsf{Sat}$
defined using the following six rules in $\metarules$ (note that each of the two rules below 
is instantiated for $j=1,2,3$):
\begin{align*}
\mathsf{Sat}(z, [t,t'], 1) \gets &  \mathsf{Q}([t,t']) \wedge  \mathsf{First}(z) \qquad (1 \leq j \leq 3) \\
& \wedge \mathsf{Clause}(z, x_1,y_1,x_2,y_2, x_3,y_3) \\
& \wedge \mathsf{Value}(x_j,y_j,[t,t']) \\
\mathsf{Sat}(z', [t,t'], 1) \gets & \mathsf{Sat}(z, [t,t'], 1) \wedge \mathsf{Q}([t,t']) \\
& \wedge  \mathsf{Next}(z,z') \qquad (1 \leq j \leq 3) \\ 
& \wedge \mathsf{Clause}(z', x_1,y_1,x_2,y_2, x_3,y_3) \\
& \wedge \mathsf{Value}(x_j,y_j,[t,t'])
\end{align*}
The set of domain constraints $\domconstrain$ consists of:
\begin{align*}
\gets\,\, & \mathsf{Value}(x,1,[t,t']) \wedge \mathsf{Value}(x,0,[t,t'])\\
\gets\,\, &  \mathsf{Q}([t,t']) \wedge \mathsf{Last}(z) \wedge \mathsf{Sat}(z,[t,t'])
\end{align*}
Observe that no negated atoms appear in the TES and that we have the 
same set of inferred simple events as in the proof of~Proposition \ref{conp-lb}: 
\begin{align*}
\mathsf{SE}(\dataset_\varphi, \eventspec)= & \{\mathsf{Q}([0,\now],1)\} \cup\\ 
& \{\mathsf{Value}(v_i,b, [0,\now],1) \mid b \in \{0,1\}, 1 \leq i \leq k \}
\end{align*}
To show the correctness of the reduction, 
we establish the following claim: \smallskip

\noindent\textbf{Claim}: $\{ \mathsf{Q}([0,\now], 1) \}$ is \emph{not} the cautious timeline of ($\eventspec,\dataset_\varphi$)  
iff $\varphi$ is satisfiable.\smallskip\\
$(\Rightarrow)$. First suppose that $\{ \mathsf{Q}([0,\now], 1) \}$ is \emph{not} the cautious timeline. 
Note that this means that the cautious timeline must be equal to the empty set. To see why, first observe 
that for every fact 
$\mathsf{Value}(v_i,b, [0,\now],1) \in \mathsf{SE}(\dataset_\varphi, \eventspec)$, 
we can find a repair that contains $\mathsf{Value}(v_i,1-b, [0,\now],1)$ and hence must omit 
$\mathsf{Value}(v_i,b, [0,\now],1)$ to ensure consistency with the first constraint. It follows that 
no fact of the form $\mathsf{Value}(v_i,b, [0,\now],1)$ can appear in the intersection of the repairs in $\reps(\mathsf{SE}(\dataset_\varphi, \eventspec), \Sigma)$, and thus cannot appear in the cautious timeline. 
This means that the intersection of the repairs in $\reps(\mathsf{SE}(\dataset_\varphi, \eventspec), \Sigma)$ 
is contained in $\{ \mathsf{Q}([0,\now], 1) \}$. Further note that no meta-event rule is applicable in 
the absence of $\mathsf{Value}$ facts, i.e.\ $\mathsf{ME}(\dataset_\varphi, {\mathsf{Q}([0,\now], 1)}, \Sigma)) = \emptyset$. Given our assumption that 
$\{ \mathsf{Q}([0,\now], 1) \}$ is not the cautious timeline, it follows that the intersection of repairs in
$\reps(\mathsf{SE}(\dataset_\varphi, \eventspec), \Sigma)$ yields the empty set. However, this means 
that there must exist a repair $\mathcal{R} \in \reps(\mathsf{SE}(\dataset_\varphi, \eventspec), \Sigma)$
such that $\mathsf{Q}([0,\now], 1) \not \in \mathcal{R}$. Due to the maximality of repairs and the absence 
of negated atoms (cf.\ monotonicity property discussed in proof of Theorem \ref{thm:ptime-noneg}),
this means that adding $\mathsf{Q}([0,\now], 1)$ would result in a constraint violation. 
However, this can only happen if $\mathcal{R}$ contains $\mathsf{Sat}(c_m,[0,\now])$, 
which due to the definition of the rules in $\metarules$ implies that all of the clauses in $\varphi$ are satisfied. 
It follows that $\varphi$ is satisfiable, with a satisfying valuation defined by the $\mathsf{Value}$ facts 
retained in $\mathcal{R}$. \smallskip\\
\noindent$(\Leftarrow)$. Suppose that $\varphi$ is satisfiable, with satisfying valuation $\mu$. 
Define $\mathcal{R} \subseteq \mathsf{SE}(\dataset_\varphi, \eventspec)$ as follows:
\begin{align*}
\mathcal{R}= & \{\mathsf{Value}(v_i,\mu(v_i), [0,\now],1) \mid 1 \leq i \leq k \}
\end{align*}
It is easily verified that $\mathcal{R}$ is $\eventspec$-consistent, since there is a single $\mathsf{Value}$ fact
per $v_i$ and the fact $\mathsf{Q}([0,\now],1)$ is absent. It is also maximally consistent. Indeed: 
\begin{itemize}
\item No further $\mathsf{Value}$ fact from $\mathsf{SE}(\dataset_\varphi, \eventspec)$ can be added, else the first constraint is violated
\item Due to the fact that the $\mathsf{Value}$ facts in $\mathcal{R}$ assign truth values to variables according to the satisfying valuation $\mu$, the fact $\mathsf{Sat}(c_m,[0,\now])$ can be derived using the meta-rules, and so adding $\mathsf{Q}([0,\now],1)$ would lead to a violation of the second constraint. 
\end{itemize}
It follows that $\mathsf{Q}([0,\now],1)$ does not belong to the intersection of the repairs in $\reps(\mathsf{SE}(\dataset_\varphi, \eventspec), \Sigma)$, hence it is not possible for $\{ \mathsf{Q}([0,\now], 1) \}$ to be the cautious timeline.
\end{proof}

We observe that the preceding reduction crucially relies upon using
recursion in the meta-rules. It would therefore be relevant to consider 
TESs with only non-recursive meta-event rules (together with additional 
restrictions, like no negated event atoms) to identify fragments for 
which the cautious timeline can be tractably computed.

\medskip

In preparation for Theorem \ref{thm:onlytemp}, we prove the following lemma,
which clarifies the possible relationships between intervals
associated with the same $R(\vect{d})$, in the case of restricted specifications.
In particular, it implies that if 
a lower confidence interval non-trivially overlaps with a higher confidence interval,
it must fully contain it.

\begin{lemma}\label{nest-lemma}
Consider a TES $\eventspec$ such that $\domconstrain= \emptyset$ 
and termination rules all have confidence $1$, and 
suppose that $\simplerules, \dataset \models_\ell R(\vect{d} , [t_1,t_2])$
and $\simplerules, \dataset \models_{\ell'} R(\vect{d} , [t_1',t_2'])$. Then:
\begin{enumerate}
\item If $\ell = \ell'$ and $ [t_1',t_2'] \neq  [t_1,t_2]$, then $t_1 \leq t_2 \leq t_1' \leq t_2'$ or 
$t_1' \leq t_2' \leq t_1\leq t_2$ 
\item If $\ell' > \ell$, then $ [t_1',t_2'] \neq  [t_1,t_2]$ and we cannot have $t_1 <t_1'<  t_2 \leq t_2'$ nor 
$t_1' \leq t_1 < t_2' < t_2$
\end{enumerate}
\end{lemma}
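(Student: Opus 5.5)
The plan is to first record two facts that follow from the hypothesis that all termination rules have confidence $1$. First, $T_\stop^\ell(R(\vect{d})) = T_\stop^1(R(\vect{d}))$ for every $\ell$; write $T_\stop$ for this common set. Second, by Definition~\ref{sefacts}, $T_\exists^\ell(R(\vect{d})) \subseteq T_\exists^{\ell'}(R(\vect{d}))$ whenever $\ell \leq \ell'$. So a lower-confidence level sees at least as many existence points and exactly the same termination points. Both items will then follow from two ``transfer'' arguments, one acting on the start of an interval and one on its end, run uniformly for any level $\ell'' \geq \ell$.

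\emph{Start-transfer.} Let $[t_1,t_2]$ be inferred at level $\ell$ with chain $t_0',\dots,t_n'$ (item 2 of Definition~\ref{nspred-def}). Let $t_1 < s < t_2$ with $s \in T_\exists^{\ell''}$, where $\ell'' \geq \ell$. I claim there is some $t^\sharp \in T_\exists^{\ell''}$ with $s-w \leq t^\sharp < s$ and no termination point in $[t^\sharp, s)$.
\begin{itemize}
\item If $s \leq t_n'$, take $t^\sharp$ to be the largest chain point below $s$. The next chain point is $\geq s$ and lies within distance $w$ of $t^\sharp$, so $s - t^\sharp \leq w$.
\item If $s > t_n'$, take $t^\sharp = t_n'$. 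Then $t_2 \neq t_n'$, so item 6 gives $t_2 - t_n' \leq w$, hence $s - t_n' < w$.
\end{itemize}
In both cases $[t^\sharp, s) \subseteq [t_1,t_2)$, and item 3 guarantees there is no termination point in $[t_1,t_2)$, so none in $[t^\sharp,s)$. Chain points lie in $T_\exists^\ell \subseteq T_\exists^{\ell''}$, and the termination set is shared across levels. Consequently, no interval inferred at any level $\ell'' \geq \ell$ can start strictly inside $(t_1,t_2)$, since that would violate its item 4.

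\emph{End-transfer.} Let $[t_1',t_2']$ be inferred at level $\ell''$ with $t_2' = t_n''$, the last point of its own chain (so item 5 applies to it). Then no level-$\ell$ interval $[t_1,t_2]$ with $t_1' \leq t_1 \leq t_2' < t_2$ can exist. To see this, note that $t_1 \leq t_2'$ is a chain point of $[t_1,t_2]$, so some chain step goes from a point $p \leq t_2'$ to a point $q > t_2'$, or else the level-$\ell$ interval ends at a stop point $t_2 \leq t_n' + w$ with $t_n' \leq t_2'$. In either case we obtain a point of $T_\exists^{\ell''} \cup T_\stop$ lying in $(t_2', t_2'+w]$, since $q \leq p + w \leq t_2' + w$. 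This contradicts item 5. If instead $t_2' \neq t_n''$, then $t_2' \in T_\stop$ with $t_1 \leq t_2' < t_2$, which directly contradicts item 3 for $[t_1,t_2]$.

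For item 1 ($\ell = \ell'$), suppose without loss of generality that $t_1 \leq t_1'$. If $t_1 < t_1' < t_2$, start-transfer gives a contradiction. If $t_1 = t_1'$ and $t_2 \neq t_2'$, say $t_2 < t_2'$, apply end-transfer with the roles of the two intervals swapped. This yields $t_2 \leq t_1'$, and we are done. For item 2 ($\ell' > \ell$), distinctness is immediate from item 7 applied to the $\ell'$-fact. The pattern $t_1 < t_1' < t_2 \leq t_2'$ is excluded by start-transfer with $\ell'' = \ell'$. The pattern $t_1' \leq t_1 < t_2' < t_2$ is excluded by end-transfer with $\ell'' = \ell'$.

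For persistent events (Definition~\ref{def-pers-sem}) the same two arguments go through in a simpler form. For the start, $t_1 \in T_\exists^{\ell''}$ itself witnesses a violation of item 3 for any start $t_1' \in (t_1,t_2)$, because there is no stop point in $[t_1, t_1')$. For the end, the end of an interval is the first stop point after its start, or $\now$. Hence two intervals whose starts both lie before the first stop point share the same end, and an earlier end contradicts items 4 and 5 for the other interval. I expect the end-transfer step for non-persistent events to be the main obstacle. It is the only place where a chain step ``jumping over'' $t_2'$ must be shown to land within the window protected by item 5, and the boundary case where the level-$\ell$ interval ends at a termination point must be handled separately.
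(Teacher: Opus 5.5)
Your proposal is correct and takes essentially the same route as the paper's proof. In both, a later start strictly inside the other interval contradicts item~4 via the earlier interval's chain, and an earlier end contradicts item~5 of the shorter interval, with the level-independence of $T_\stop^\ell$ (all termination rules have confidence~$1$) and $T_\exists^\ell \subseteq T_\exists^{\ell'}$ making the argument work across confidence levels. Your version is more careful than the paper's sketch: it explicitly handles the cases $s > t_n'$ (via item~6) and $t_2' \neq t_n''$ (via item~3 of the longer interval), which the paper leaves implicit.
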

\begin{proof}
We give the proof for non-persistent events, the argument for persistent events is similar but simpler.

To show point 1, suppose we have $\simplerules, \dataset \models_\ell R(\vect{d}, [t_1,t_2])$
and $\simplerules, \dataset \models_{\ell} R(\vect{d} , [t_1',t_2'])$ with $ [t_1',t_2'] \neq  [t_1,t_2]$. 
We may suppose w.l.o.g.\ that $t_1 \leq t_1'$. We thus aim to show that $t_1 \leq t_2 \leq t_1' \leq t_2'$. 
Assume for a contradiction that $t_1 \leq t_1' < t_2$. 
If $t_1=t_1'$, then $t_2 \neq t_2'$ (as we know $ [t_1',t_2'] \neq  [t_1,t_2]$), 
which implies that one of the intervals could have been further extended, 
violating one of the conditions of Definition \ref{nspred-def} (Items~4–5). 
Thus we have $t_1 < t_1' < t_2$. However, this also yields a contradiction, since 
whichever timepoints in $T_\exists^\ell(R(\vect{d}))$ were used to validate 
item 2 of Definition \ref{nspred-def} to witness that $\simplerules, \dataset \models_\ell R(\vect{d}, [t_1,t_2])$
could also be used to show that $[t_1',t_2']$ does \emph{not} 
verify item 4, contradicting our assumption that $\simplerules, \dataset \models_{\ell} R(\vect{d} , [t_1',t_2'])$. 
Thus, it must be the case that $t_1 \leq t_2 \leq t_1' \leq t_2'$. 

To show point 2, suppose we have $\simplerules, \dataset \models_\ell R(\vect{d} , [t_1,t_2])$
and $\simplerules, \dataset \models_{\ell'} R(\vect{d} , [t_1',t_2'])$ with $\ell' > \ell$. 
By item 7 of Definition \ref{nspred-def}, we directly get $[t_1',t_2'] \neq  [t_1,t_2]$. 
Suppose for a contradiction that $t_1 <t_1'<  t_2 \leq t_2'$. 
Then the timepoints in $T_\exists^\ell(R(\vect{d}))$ which were used to validate 
item 2 of Definition \ref{nspred-def} for $ [t_1,t_2]$ are also present in $T_\exists^{\ell'}(R(\vect{d}))$ (since $\ell'>\ell$)
and so can be used to show that $[t_1',t_2']$ does not verify item 4 (as an earlier start is possible). 
However, we must also argue that there is no $t_e \in T_\stop^{\ell'}(R(\vect{d}))$
that could block such an extension. It is here that we must use the fact that 
termination rules all have confidence 1, which means in particular that 
$T_\stop^\ell(R(\vect{d})) = T_\stop^{\ell'}(R(\vect{d}))$. Since there was no 
blocking termination timepoint at level $\ell$, there cannot be any such timepoint w.r.t.\ $\ell'$.
Now suppose for a contradiction that $t_1' \leq t_1 < t_2' < t_2$. 
Then we can use the timepoints in $T_\exists^\ell(R(\vect{d}))$ that permit the interval to continue $t_2$
to show that we could have chosen a later end for $[t_1',t_2']$. Here again we use the
assumption that termination rules all have confidence 1 to infer that there is no termination timepoint 
that can block this extension. 
We thus obtain the desired contradiction. 
\end{proof}

\noindent \textbf{Theorem \ref{thm:onlytemp}}. 
When $\domconstrain= \emptyset$ and termination rules all have confidence $1$, 
there is a unique preferred repair, 
and both the preferred timeline and cautious timeline can be computed in \textsc{PTime} in data complexity. 
\begin{proof}
Let $\eventspec = (\simplerules, \metarules, \tconstrain, \domconstrain)$ be a TES satisfying the conditions of the statement, 
i.e.\ $\domconstrain= \emptyset$ and all termination rules in $\simplerules$ have the same confidence level of~1. 

For the cautious timeline, we simply note that a fact $R(\vect{d} , [t_1,t_2], \ell) \in \simplerules(\dataset)$ will belong to the intersection of all repairs 
just in the case that there does not exist another fact $R(\vect{d} , [t_1',t_2'], \ell') \in \simplerules(\dataset)$ such that 
$R(\vect{d} , [t_1,t_2])$ and $R(\vect{d} , [t_1',t_2'])$ together violate one of the temporal constraints. We can thus iterate over all (polynomially many) 
pairs of facts and remove those that participate in at least one constraint violation. The remaining facts give us the intersection of repairs, from which we can construct the cautious timeline, by applying the meta-event rules.

Next, we aim to show that there is a unique preferred repair. To this end, let $\mathcal{S}= \simplerules(\dataset)$, and consider the repair $\mathcal{R}^*$ constructed greedily as follows (and which is formalized in Algorithm \ref{alg:theorem3_preferred_timeline}): 
\begin{itemize}
\item Initialize $\mathcal{R}^*$ with all facts in $\mathcal{S}_m$, where $m$ is the minimum level appearing in $\mathcal{S}$
\item For each $\ell$ from $m+1$ to $n$ (with $n$ the maximum level in $\mathcal{S}$): 
add $R(\vect{d} , [t_1,t_2], \ell) \in \mathcal{S}_\ell$ to $\mathcal{R}^*$ if there is no $R(\vect{d} , [t_1',t_2'], \ell') \in \mathcal{R}^*$ such that $R(\vect{d} , [t_1',t_2'])$ and $R(\vect{d} , [t_1,t_2])$ 
violate a temporal constraint in $\tconstrain$. 
\end{itemize}
Note that $\mathcal{S}_m$ is $\eventspec$-consistent, since by Lemma \ref{nest-lemma}, there cannot exist annotated event facts 
$R(\vect{d} , [t_1,t_2], m)$ and $R(\vect{d} , [t_1',t_2'], m)$ whose intervals non-trivially overlap. Moreover, the same will hold within each single confidence level. Moreover, due to the way we define $\mathcal{R}^*$, we will never add facts which are in conflict with a fact already selected. 
Thus, $\mathcal{R}^*$ is $\eventspec$-consistent. It is also maximal, as every fact that is excluded would introduce a constraint violation. Furthermore, due to the level-by-level construction (starting from the best confidence level), $\mathcal{R}^*$ is a preferred repair. Finally, we note that due to the lack of constraint violations within a given level (due to Lemma \ref{nest-lemma}), there is never a decision as to which facts from a given level can be added. Thus, 
$\mathcal{R}^*$ is the only preferred repair, and it is clear from its definition that it can be computed in polynomial time (w.r.t.\ data complexity). \end{proof}

\section{HEVA System}\label{ap:casper_system}

The diagram (Figure~\ref{casper_architecture}) provides an overview of the \casper architecture and its main components. The top section shows the three main inputs to \casper. In the yellow box, users define temporal event rules and atemporal facts. Observation facts, shown in the purple box, are automatically generated via an external Python script using a mappings file that links observation predicates to fields in a relational database.
The central blue box represents the \casper system itself, composed of several modules: the non-persistent simple event module, persistent simple event module, temporal predicate module, and auxiliary module. An additional temporal repair module can be activated if the repair option is enabled.
Finally, the bottom red box shows the different types of output produced by \casper, corresponding to various timelines (see Section~\ref{semantics}, Definition 9): the naïve timeline, consistent timelines, preferred timeline, and the cautious timeline.

We now provide a more detailed description of the non-persistent simple event, persistent simple event, temporal predicate and temporal repair modules.
\begin{figure}[ht]
    \centering
    \includegraphics[width=0.46\textwidth]{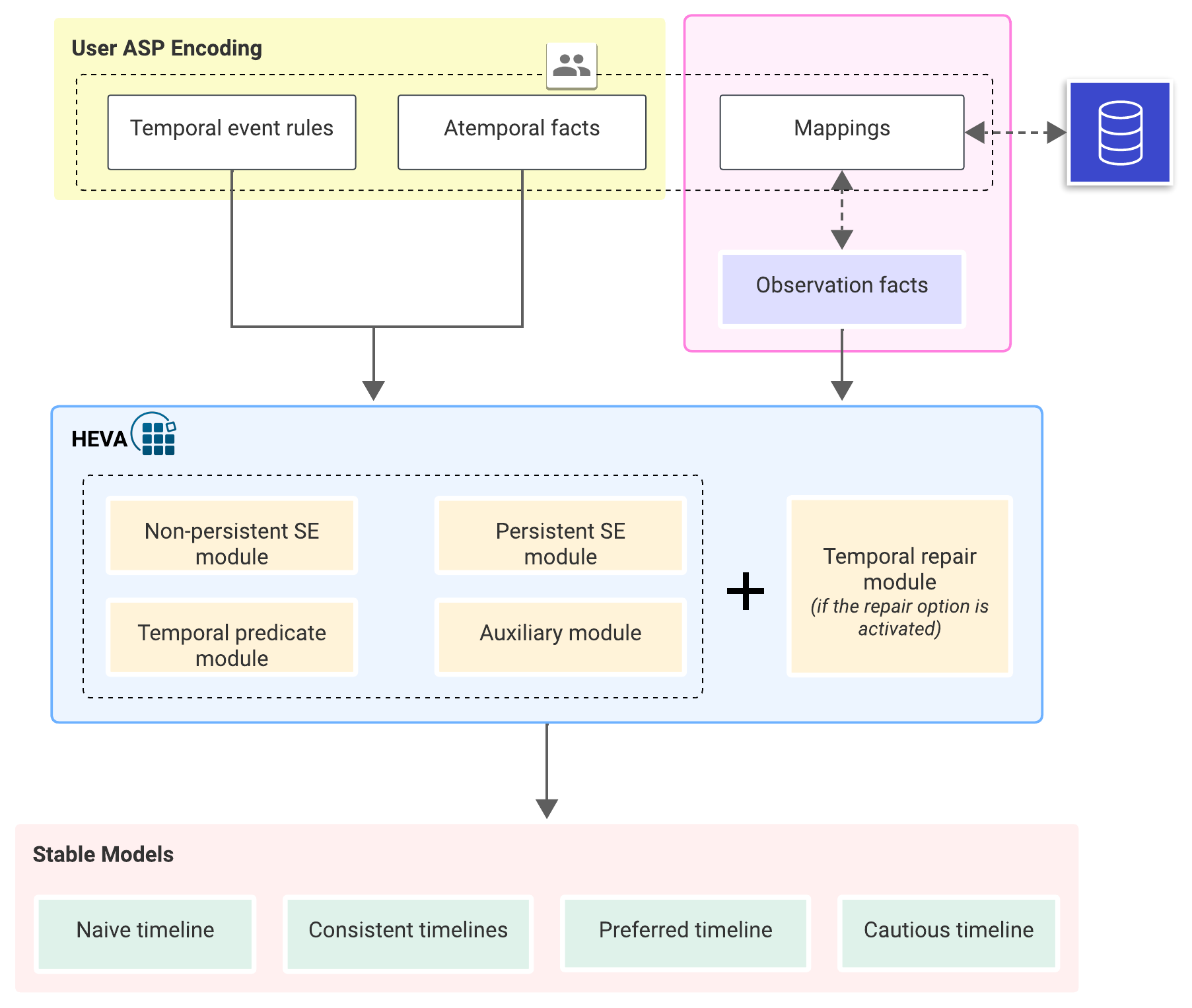}
    \caption{
    Overview of the \casper system architecture.
    }
    \label{casper_architecture}
\end{figure}

\subsection{Non-persistent Simple Event Module}

This module computes inferred \emph{non-persistent simple event facts}
from the predicates \texttt{exists} and \texttt{terminates},
following the formal construction of Definition~\ref{nspred-def}.
The computation proceeds by iterative interval construction 
with level-aware expansion and pruning.

\paragraph{Initialization.}
For a non-persistent event predicate $R \in \nspreds$ and arguments $\vect{d}$,
we collect all timepoints
$T^\ell_{\exists}(R(d))$ at which an existence condition
holds with confidence level $\ell$.
Each such timepoint $t$ initializes a singleton interval $[t,t]$,
encoded by the predicate \texttt{pre\_candidate} (see line~2 of Listing~\ref{lst:casper_code}). If multiple existence facts occur at the same timepoint,
the minimal (best) confidence level is selected.

\paragraph{Level-wise expansion.}
Interval expansion is performed hierarchically by confidence level,
starting from the highest-priority (lowest numerical) level.

Let $[t_1,t_2]$ be a current candidate interval at level $\ell$,
and let $w$ be the temporal window given by \texttt{pt\_window}
(corresponding to $\windowpred(R(\vect{d}),w)$).

\begin{enumerate}
    \item \textbf{Forward expansion within the window.}
    The interval is extended to $[t_1,t_2']$ if there exists
    $t_2' \in T^\ell_{\exists}(R(d))$ such that
    \[
        t_2 < t_2' \leq t_2 + w
    \]
    and there is no termination timepoint
    $t \in T^\ell_{\times}(R(d))$ with
    $t_2 \leq t \leq t_2'$.
    The right boundary is chosen as the maximal such $t_2'$.

    \item \textbf{Termination-aware expansion.}
    If a termination timepoint $t_b \in T^\ell_{\times}(R(d))$
    satisfies
    \[
        t_2 < t_b \leq t_2 + w,
    \]
    the interval is closed at $t_b$.
    The earliest compatible termination is selected.

    \item \textbf{Lower-priority levels.}
    After processing the highest-priority level,
    the procedure continues with lower-confidence levels.
    At level $\ell$, only existence and termination facts
    with confidence $\leq \ell$ are considered.
    This ensures that stronger evidence dominates weaker evidence,
    while still allowing consistent lower-confidence observations
    to extend intervals.
\end{enumerate}

\paragraph{Pruning non-maximal intervals.}
During expansion, multiple overlapping or nested candidates may arise.
The predicates \texttt{pre\_covered} and \texttt{covered}
eliminate non-maximal intervals.

\paragraph{Final event construction.}
The final \texttt{event} facts are obtained by:
(i) selecting intervals for which no strictly better
confidence level exists, and
(ii) discarding intervals still covered by longer candidates.

Overall, this module implements the formal construction
of non-persistent simple events by initializing singleton
intervals from existence timepoints, expanding them within
a bounded temporal window, respecting termination conditions,
propagating confidence levels hierarchically, and retaining
only maximal consistent intervals.

\begin{figure*}[h!]
\begin{lstlisting}[style=aspstyle, caption={Some lines of source code from \casper}, label={lst:casper_code}]
% pre-event candidate initialization
pre_candidate(N,P,E,(Tx,Tx),L) :- exists(N,P,E,Tx,L).

% initialization of persistent simple event
l_candidate(N, P, E, (Tx, Tx), L) :- exists_pers(N, P, E, Tx, L).

% repair mechanism
{ rep_event(ID,N,P,E,(Start,End),L) } :- event(ID,N,P,E,(Start,End),L).

:- rep_event(_,N,P,E,(Start1,End1),_), rep_event(_,N,P,E,(Start2,End2),_),
    contains((Start1,End1),(Start2,End2)).

inconsistent_if_added(N,P,E,(Start2,End2)) :- rep_event(_,N,P,E,(Start1,End1),_),
    contains((Start1,End1),(Start2,End2)).

:- not rep_event(_,N,P,E,(Start,End),_), event(_,N,P,E,(Start,End),_),
    not inconsistent_if_added(N,P,E,(Start,End)).

% preferred repair
temporal_conflict_with_lower_level(N, P, E, (Start2, End2), L2) :-
    event(_, N, P, E, (Start1, End1), L1), event(_, N, P, E, (Start2, End2), L2),
    L1 < L2, Start1 < Start2, Start2 < End1.
...

rep_event(ID, N, P, E, (Start, End), L) :- event(ID, N, P, E, (Start, End), L),
    not temporal_conflict_with_lower_level(N, P, E, (Start, End), L).
\end{lstlisting}
\noindent
Listing~\ref{lst:casper_code} shows a simplified fragment of the ASP code implementing the repair mechanism in \casper. The first choice rule allows the solver to freely select which inferred events (\texttt{event}) are retained in the repaired set (\texttt{rep\_event}).  
The following integrity constraints ensure temporal consistency by forbidding answer sets where one event interval \emph{contains} another interval of the same type, thereby enforcing non-overlapping intervals (cf.\ Section~\ref{specifications}).  
The predicate \texttt{inconsistent\_if\_added} marks events that would violate this property if included.  
The last part of the listing implements the computation of the \emph{preferred repair}. The predicate \texttt{temporal\_conflict\_with\_lower\_level} detects events whose intervals overlap with an event of the same type inferred at a strictly lower confidence level. Such events are discarded, ensuring that higher-confidence events are always preferred over conflicting lower-confidence ones. As a result, \texttt{rep\_event} retains exactly those events that do not introduce temporal conflicts with events of lower confidence, yielding a unique preferred repair consistent with the lexicographic preference over confidence levels defined in the framework.
\end{figure*}

\subsection{Persistent Simple Event Module}

This module computes inferred \emph{persistent simple event facts},
corresponding to events that, once initiated, persist until an
explicit termination condition occurs.
It relies on the predi\-cates \texttt{exists\_pers} and
\texttt{terminates}, and constructs maximal intervals
consistent with confidence levels.

\paragraph{Initialization.}
For a persistent event predicate $R \in \pspreds$ and arguments $\vect{d}$,
each timepoint
$t_1 \in T^\ell_{\exists}(R(\vect{d}))$
initializes a singleton candidate interval $[t_1,t_1]$,
encoded by the predicate \texttt{l\_candidate} (see line~5 of Listing~\ref{lst:casper_code}).
If multiple existence facts occur at the same timepoint,
the minimal (best) confidence level is selected.

\paragraph{Filtering by higher-confidence intervals.}
Before constructing an interval from $t_1$,
the module checks whether $t_1$ is already covered
by an interval inferred at a strictly better
confidence level.
Formally, if there exists a candidate interval
$[t'_1,t'_2]$ with confidence $\ell' < \ell$
such that $t'_1 \leq t_1 \leq t'_2$,
then $t_1$ is discarded.
This is implemented by the auxiliary predicate
\texttt{pe\_covered\_by\_higher}.

\paragraph{Determining the effective start.}
If $t_1$ is not covered, the module determines whether
a termination timepoint
$t_b \in T^\ell_{\times}(R(\vect{d}))$
occurs strictly before $t_1$.
When such a $t_b$ exists, only the most recent one
is considered.
If another existence timepoint lies between $t_b$
and $t_1$, the candidate is discarded in order to avoid
generating non-maximal intervals.
Otherwise, the true start of the interval is set to the
earliest existence timepoint following $t_b$.

If no prior termination exists, the module verifies that
no earlier uncovered existence timepoint precedes $t_1$.
This ensures that only minimal valid starting points
are retained.

\paragraph{Determining the end.}
For each valid start timepoint $t_1$,
the earliest termination timepoint $t_2 > t_1$
with confidence level $\leq \ell$
is selected.
If such a termination exists, the interval
$[t_1,t_2]$ is constructed.

If no termination timepoint follows $t_1$,
the event is considered \emph{ongoing}.
In this case, the interval is assigned an open end
(e.g., $[t_1,\ast]$ or $[t_1,t_c{+}1]$,
where $t_c$ denotes the current time).
This allows the system to distinguish explicitly
between ongoing events and those bounded by
an observed termination.

\paragraph{Maximality and final construction.}
The predicate \texttt{pe\_candidate}
collects all intervals satisfying the above conditions.
As persistent events do not rely on temporal windows,
interval construction is linear with respect to
termination ordering.
The final \texttt{event} facts are directly generated
from \texttt{pe\_candidate}, yielding maximal,
confidence-aware persistent intervals.

Overall, the persistent simple event module implements
the formal semantics of persistent events by:
(i) initializing intervals from existence timepoints,
(ii) preventing overlap with higher-confidence intervals,
(iii) identifying the most recent blocking termination,
(iv) selecting the earliest compatible future termination,
and (v) producing maximal intervals that remain valid
until explicit termination.

\subsection{Temporal Predicate and Auxiliary Modules}  
This module provides the temporal reasoning primitives used throughout \casper.  
It defines the standard relations of Allen’s interval algebra (e.g.\ \texttt{before}, \texttt{meets}, \texttt{overlaps}, \texttt{contains}) for interval-interval reasoning, as well as Vilain’s point–interval algebra for point-based reasoning.  
It also introduces interval manipulation predicates such as \texttt{intersection\_of} (to compute the intersection of intervals), \texttt{union\_of} (for interval union), and helper predicates that simplify rule writing like \texttt{start} and \texttt{end} to extract the earliest and latest timepoints of a given event.  
These predicates are available to users for defining meta-event rules, while additional internal predicates are provided by the \textbf{auxiliary module} to simplify reasoning and optimize performance.  
For example, some auxiliary rules act as control switches that, when triggered, selectively prevent the grounding of certain rule patterns, thereby reducing the size of the grounded program and improving solving efficiency.

\subsection{Temporal Repair Module} This module computes the set of repairs—or, when applicable, the unique preferred repair—of the inferred simple events.  
By default, \casper\ performs no repair; this module is activated only when explicitly requested at runtime.  
It implements the notions of consistency and repair introduced in Section~\ref{semantics} (Definitions~\ref{tes-cons}, \ref{simplerep}, and~\ref{prefrepdef}) using the fixed set of temporal constraints $\tconstrain$ from Section~\ref{specifications}. The ASP encoding relies on choice rules and leverages the temporal relations defined in the temporal predicate—such as \texttt{overlaps} or \texttt{contains}—to specify constraints more precisely and to check for their violations. 

The computation of preferred repairs follows a level-wise
selection strategy: events are considered in increasing order
of confidence level, and at each level, events that do not
introduce temporal conflicts with already selected events
are retained.
For the restricted fragment of TESs supported by \casper\
(i.e., no domain constraints and all termination rules with
confidence~1), this procedure coincides with the formal
definition of preferred repairs and guarantees both:
\begin{itemize}
    \item the existence of a unique preferred repair, and
    \item polynomial-time computability,
\end{itemize}
as established in Theorem~\ref{thm:onlytemp}.

Listing~\ref{lst:casper_code} displays selected (simplified) lines from the ASP encoding to compute (preferrred) repairs.

\section{Lung Cancer Use Case}\label{lung_case}

\subsection{Medical Background}
According to the natural history of cancer, the disease typically begins with localized cell proliferation in a specific organ tissue, called the \textit{primary tumor}. Malignant cells can then spread to distant sites, forming \textit{secondary tumors}, which retain the histological characteristics of the primary tumor. Clinically, cancer progresses through a series of episodes, interspersed with periods of \textit{remission}, defined as either a partial reduction or complete disappearance of symptoms. After remission, the same cancer may recur at the original site or elsewhere; such a recurrence is called a \textit{relapse}. Cancer treatment is multimodal, encompassing surgery, radiotherapy, immunotherapy, chemotherapy and/or targeted therapy. These modalities are often combined and personalized according to tumor type and patient characteristics. For example, \textit{targeted therapies with tyrosine kinase inhibitors (TKIs)} are used in certain subtypes of lung cancer harboring specific mutations, such as \textit{EGFR or ALK mutations}. Each cancer episode generally corresponds to one or more \textit{lines of treatment}, a line designating a set of therapies applied sequentially or in combination. A new line of treatment is introduced when the previous one proves ineffective.

\subsection{Lung Cancer Events}\label{lg_appendix}

\begin{table}[H]
\centering
\caption{Summary of inferred events in the lung cancer use case: Simple Event (SE) - Meta-Event (ME).}\label{lung_event}
        \begin{tabular}{ll}
          \textbf{Event} & \textbf{Type} \\
          \midrule
          Primary lung cancer episode & Non Persistent SE \\
          Secondary cancer episode & Persistent SE \\
          Presence of EGFR/ALK mutation & Persistent SE \\
          TKI targeted therapy & Persistent SE \\
          Lung cancer disease & ME \\
          \bottomrule
        \end{tabular}
\end{table}

Following the summary of the inferred event types in Table~\ref{lung_event}, we now provide excerpts of the concrete encoding used in the lung cancer use case. The next listings illustrate how the different categories of events—simple events and meta-events—are instantiated within \casper using ASP rules and domain knowledge.

Listing~\ref{lg:se} presents the temporal rules defining the simple events (both non-persistent and persistent), including their existence and termination conditions, as well as the temporal window used for interval construction. These rules directly implement the formal notions introduced in Section~\ref{specifications} for specifying simple events from timestamped observations.

Listing~\ref{lg:me} illustrates the definition of meta-events, showing how higher-level clinical constructs are derived from previously inferred simple events through interval relations and confidence propagation. This exemplifies the stratified meta-event mechanism described in the formal framework.

Finally, Listing~\ref{ls:atemp_lg} provides selected atemporal facts encoding domain knowledge (e.g., ADICAP codes, ICD-10 categories, and TKI drugs) used by the rules. These facts serve as the semantic backbone linking structured EHR observations to clinically meaningful event definitions. The corresponding ASP encodings are provided below:

\clearpage

\begin{figure*}[t]
\begin{lstlisting}[style=aspstyle, caption={Temporal event rules for selected simple events in the lung cancer use case.}, label={lg:se}]
% primary lung cancer episode (non-persistent; window below)
exists(primary_lungC_episode,P,c34,T,1) :- obs(has_lungc_adicap_diag,P,E,T).
exists(primary_lungC_episode,P,c34,T,2) :- obs(has_icd10_diag,P,E,T), E != c349, E != c348,
    primary_lungC(E).
exists(primary_lungC_episode,P,c34,T,3) :- obs(has_icd10_diag,P,c348,T).
exists(primary_lungC_episode,P,c34,T,3) :- obs(has_icd10_diag,P,c349,T).

% secondary cancer episode (persistent)
exists_pers(secondary_cancer_episode,P,E,T,2) :- 
    obs(has_icd10_diag,P,E,T), secondary_cancer(E).

% TKI therapy (persistent; admin more reliable than prescription)
exists_pers(tki_target_therapy,P,D,T,1) :- obs(has_adm,P,D,T), tki(D).
terminates(tki_target_therapy,P,D,T,1) :- obs(has_adm, P, D1, T), obs(has_adm, P, D, T1),
    tki(D1), tki(D), D != D1, T1 < T.

exists_pers(tki_target_therapy,P,D,T,2) :- obs(has_presc,P,D,T), tki(D).
terminates(tki_target_therapy,P,D,T,2) :- obs(has_presc, P, D1, T), obs(has_presc, P, D, T1),
    tki(D1), tki(D), D != D1, T1 < T.

% EGFR mutation (persistent)
exists_pers(egfr_mutation,P,egfr,T,1) :- obs(has_egfr_mut,P,E,T).
terminates(egfr_mutation,P,egfr,T,1) :- obs(has_no_egfr_mut,P,E,T).

% ALK mutation (persistent)
exists_pers(alk_mutation,P,alk,T,1) :- obs(has_alk_mut,P,E,T).
terminates(alk_mutation,P,alk,T,1) :- obs(has_no_alk_mut,P,E,T).

% non-persistent window (time in seconds since Unix epoch)
pt_window(primary_lungC_episode,c34,48988800). % 18 months
\end{lstlisting}
\noindent
Listing~\ref{lg:se} defines the existential and termination rules used by \casper\ to infer key simple events in the lung cancer use case. Here, \texttt{P} denotes the patient, \texttt{E} an observed clinical \emph{entity} (e.g. diagnosis code or mutation marker), and \texttt{D} a drug. The constant \texttt{c34} represents the ICD-10 category “malignant neoplasm of bronchus and lung”, serving as the general entity for a \emph{primary lung cancer episode}. Time is represented in seconds since the Unix epoch. The first four rules define the non-persistent \texttt{primary\_lungC\_episode}: an event of confidence level~1 is inferred from ADICAP-coded diagnoses (\texttt{has\_lungc\_adicap\_diag}), while lower-confidence alternatives rely on ICD-10 codes—specific lung codes excluding \texttt{C34.8} and \texttt{C34.9} (\(\ell=2\)), then generic unspecified codes (\(\ell=3\)). The secondary cancer episode is modeled as a persistent event inferred when an ICD-10 diagnosis corresponds to a metastasis (\texttt{secondary\_cancer(E)}). TKI targeted therapies are also persistent: administration records (\(\ell=1\)) and prescription records (\(\ell=2\)) indicate therapy existence, while observing another TKI drug (\texttt{D1 != D}) signals termination, marking a treatment switch. EGFR and ALK mutations are persistent events detected from genetic test results (\texttt{has\_egfr\_mut} / \texttt{has\_alk\_mut}) and terminated when a corresponding negative observation appears (\texttt{has\_no\_...}). Finally, the \texttt{pt\_window} sets the expansion window for the non-persistent primary episode (\(48{,}988{,}800\) seconds \(\approx\) 18 months). \bigskip
\end{figure*}

\begin{figure*}[t]
\begin{lstlisting}[style=aspstyle, caption={Example of meta-event rules.}, label={lg:me}]
% lung cancer disease
m_event(lung_cancer_disease,P,(T1,T2),L) :-
    start(primary_lungC_episode,P,T1,L);
    not event(_,secondary_cancer_episode,P,_,(Tx,_),_) : Tx < T1, valid_time(Tx);
    persist_end(P,T2).

m_event(lung_cancer_disease,P,(T1,T2),L) :-
    start(secondary_cancer_episode,P,T1,L);
    not event(_,primary_lungC_episode,P,_,(Tx,_),_) : Tx <= T1, valid_time(Tx);
    persist_end(P,T2).

% targeted mutation = overlap of disease and TKI therapy (confidence via #min)
m_event(targeted_mutation,P,(T1,T2),L) :-
    event(_,lung_cancer_disease,P,(T1x,T2x),_),
    event(_,tki_target_therapy,P,_,(T1y,T2y),_),
    intersection_of((T1x,T2x),(T1y,T2y),(T1,T2)),
    L = #min{Lx : event(_,lung_cancer_disease,P,(T1x,T2x),Lx);
             Ly : event(_,tki_target_therapy,P,_,(T1y,T2y),Ly)}.
\end{lstlisting}
\noindent
Listing~\ref{lg:me} specifies meta-events. Underscores \texttt{\_} mark \emph{irrelevant arguments} that are intentionally ignored (e.g., internal IDs or entities not needed by the rule). The first rule builds \texttt{lung\_cancer\_disease} starting from the earliest \texttt{primary\_lungC\_episode} (a simple event): \texttt{start(primary\_lungC\_episode,P,T1,L)} retrieves the earliest start; the conditional literal syntax \texttt{not event(\_,secondary\_cancer\_episode,P,\_,(Tx,\_),\_) : Tx < T1, valid\_time(Tx)} means “for every time \texttt{Tx} earlier than \texttt{T1} that is a valid time, there must be \emph{no} secondary cancer event”, i.e. the rule requires the absence of any prior secondary episode. Here, the part after the colon \texttt{: Tx < T1, valid\_time(Tx)} is a \emph{condition} that ranges the variable \texttt{Tx}. The helper predicate \texttt{persist\_end} is used to indicate that the end time of an event is ongoing. The second rule symmetrically allows \texttt{lung\_cancer\_disease} to start from a \texttt{secondary\_cancer\_episode} provided that no \texttt{primary\_lungC\_episode} occurred strictly earlier (\texttt{Tx <= T1}). The last rule defines \texttt{targeted\_mutation} as the \emph{overlap} between \texttt{lung\_cancer\_disease} and \texttt{tki\_target\_therapy}: the rule retrieves the time intervals of both the lung cancer disease and the TKI therapy for the same patient, then computes their temporal overlap to define the period during which both are active. The resulting meta-event, inherits its confidence level as the minimum of the two contributing events’ confidences. It is important to note that the last rule is provided as an example illustrating the use of \texttt{intersect\_of} to compute the intersection of two intervals. This meta-event is not part of the target events relevant to our use case.

\vspace{1em}

\begin{lstlisting}[style=aspstyle, caption={Atemporal facts encoding domain knowledge.}, label={ls:atemp_lg}]
% ADICAP histopathology codes indicating primary lung cancer
lung_cancer_adicap(rbe7a0).
lung_cancer_adicap(rba7v4).
lung_cancer_adicap(rba7a0).
...

% TKI drugs
tki(ceritinib).
tki(osimertinib).
tki(crizotinib).
...

% Secondary cancer (ICD-10) examples
secondary_cancer(c711).
secondary_cancer(c712).
secondary_cancer(c716).
...
\end{lstlisting}
\noindent
\textit{Knowledge sources.}
Atemporal facts in Listing~\ref{ls:atemp_lg}, encoding domain knowledge relevant to this use case draw from various medical thesauri such as ADICAP, ICD-10, and ATC.
\end{figure*}

\FloatBarrier

\subsection{Execution Time of Three Other CASPER Timelines}

In the main paper, we only provided the execution times for the mode that produces consistent timelines. Here we  provide the tables of execution times when \casper\ is run in other three modes (corresponding to naïve, cautious, and preferred timelines). As shown in these three tables, \ncasper's execution time remains similar across the different timelines.
    
    \begin{table}[H]
    \setlength{\tabcolsep}{1.2mm}
    \centering
        \caption{
       Statistics on execution times for computing all stable models associated with naïve timeline in the lung cancer use case (322 patients).}
    \label{execution_stats_naive_lung_cancer}
        \begin{tabular}{lcccccc}
        & Min. & Q1 & Q2 & Mean & Q3 & Max.\\ \midrule
 \textbf{Grnd.\ rules} & 538 & 673.2 & 793.5 & 928.5 & 1,013 & 3,858\\
\textbf{Time (s)} & 0.08 & 0.09 & 0.09 & 0.14 & 0.10 & 5.38 \\ \bottomrule
        \end{tabular}
    \end{table}
    
    \begin{table}[H]
    \setlength{\tabcolsep}{1.2mm}
    \centering
        \caption{
       Statistics on execution times for computing all stable models associated with preferred timeline in the lung cancer use case (322 patients).}
    \label{execution_stats_pref_lung_cancer}
        \begin{tabular}{lcccccc}
        & Min. & Q1 & Q2 & Mean & Q3 & Max.\\ \midrule
\textbf{Grnd.\ rules} & 534 & 679.2 & 800.5 & 935.5 & 1,018.8 & 3,844\\
\textbf{Time (s)} & 0.14 & 0.15 & 0.15 & 0.20 & 0.16 & 4.69 \\ \bottomrule
        \end{tabular}
    \end{table}
 
    \begin{table}[H]
    \setlength{\tabcolsep}{1.2mm}
    \centering
        \caption{
       Statistics on execution times for computing all stable models associated with cautious timeline in the lung cancer use case (322 patients).}
    \label{execution_stats_cautious_lung_cancer}
        \begin{tabular}{lcccccc}
        & Min. & Q1 & Q2 & Mean & Q3 & Max.\\ \midrule
\textbf{Grnd.\ rules} & 538 & 708 & 841.5 & 983.9 & 1,066.5 & 3,918\\
\textbf{Time (s)} & 0.15 & 0.22 & 0.24 & 0.30 & 0.31 & 5.04 \\ \bottomrule
        \end{tabular}
    \end{table}

\subsection{Annotation Agreement Evaluation}\label{evaluation}

\subsubsection{Inter-annotator agreement.}
To assess the consistency of clinical event annotations across annotator pairs, we computed a weighted inter-annotator agreement based on the number of shared elements identified for each annotated event.
The annotated events included primary lung cancer episodes, secondary cancer episodes, EGFR/ALK mutation status, TKI-targeted therapies, and the presence of lung cancer disease. Agreement was assessed using a point-based scheme that considers the individual components of each event. A maximum score was defined for each event type, based on its annotation granularity:

\begin{itemize}
    \item for primary lung cancer episode, secondary cancer episode, TKI targeted therapy: up to 4 points were assigned:
    \begin{itemize}
        \item 1 point for the \textbf{presence} of the event,
        \item 1 point for the \textbf{specific subtype of event} (e.g., type of therapy),
        \item 1 point for the \textbf{start date},
        \item 1 point for the \textbf{end date}.
    \end{itemize}
    \item for \textbf{other events} (e.g. \textit{EGFR/ALK mutation}, \textit{lung cancer disease}), up to 3 points were assigned:
    \begin{itemize}
        \item 1 point for the \textbf{presence} of the event,
        \item 1 point for the \textbf{start date},
        \item 1 point for the \textbf{end date}.
    \end{itemize}
\end{itemize}

Note that the “Subtype” component was not applicable for EGFR/ALK mutation and lung cancer disease events. In the case of EGFR and ALK mutations, the mutation name itself (EGFR or ALK) inherently defines the subtype, making a separate annotation unnecessary. For lung cancer disease, histological subtypes (e.g., adenocarcinoma or squamous cell carcinoma) were not annotated, as they were not considered relevant for this particular use case. If multiple occurrences of the same event were detected and one or more of these occurrences did not appear in an annotator’s records, such cases were counted as disagreements, with an agreement score of 0 assigned.

Let \( A_{e,i} \) denote the number of agreement points awarded for an annotated instance \(i\) of event type \(e\), 
and \( C_{e,i} \) the corresponding maximum possible score.
The agreement ratio for each annotation is then computed as follows:
\[
\text{Agreement Ratio}_{e,i} = \frac{A_{e,i}}{C_{e,i}}.
\]

For each event type \(e\), the mean agreement ratio is first computed over its \(n_e\) annotated instances (Table~\ref{evaluation_agreement}):

\[
\text{Mean Agreement per Event Type}_e = 
\frac{1}{n_e} \sum_{i=1}^{n_e} \frac{A_{e,i}}{C_{e,i}}
\]

To avoid bias toward event types with more annotated instances,  
the overall inter-annotator agreement is defined as the \emph{unweighted mean} of these per-event-type means,  
giving equal importance to each event type:

\[
\text{Mean Inter-Annotator Agreement} = 
\frac{1}{k} \sum_{e=1}^{k} 
\left(
\frac{1}{n_e} \sum_{i=1}^{n_e} \frac{A_{e,i}}{C_{e,i}}
\right)
\]
where \(k\) is the number of distinct event types and \(n_e\) the number of annotated instances for each type.  
Using this formula, the mean inter-annotator agreement is \textbf{60.56\%}.

We adopt this unweighted formulation because our goal is to evaluate \casper\!\!\!’s ability to infer diverse types of clinical events, not merely the most frequent ones. Equal weighting ensures that each event type contributes uniformly to the final measure, providing a fairer assessment of event-type–level consistency.\\

\begin{table}[t]
\footnotesize
\setlength{\tabcolsep}{0.8mm}
\caption{Component-level inter-annotator agreement (\%).}
\centering
\begin{threeparttable}
\begin{tabular}{lccccc}
 & Presence & Subtype & Start D. & End D. & Mean \\ \toprule
\textbf{P. lg. cancer episode} & 96.77 & 93.55 & 38.71 & 80.64 & 77.42 \\
\textbf{S. cancer episode} & 61.43 & 61.43 & 17.14 & 40.00 & 44.28 \\
\textbf{TKI targeted th.} & 80.85 & 80.85 & 44.68 & 44.68 & 62.76 \\
\textbf{EGFR mutation} & 100.00 & -- & 45.45 & 54.54 & 66.67 \\
\textbf{ALK mutation} & 75.00 & -- & 0.00 & 25.00 & 33.33 \\
\textbf{Lg. cancer disease} & 100.00 & -- & 46.67 & 90.00 & 78.89 \\
\bottomrule
\end{tabular}
        \begin{tablenotes}
    \footnotesize
    \item Each row corresponds to an event category, and the columns report the agreement percentages for each individual component: \textbf{presence}, \textbf{subtype} (when applicable), \textbf{start date}, \textbf{end date}, and the overall \textbf{mean agreement}.
    \item Dashes (--) indicate that the component was not applicable to that event type.
  \end{tablenotes}
    \end{threeparttable}
\label{evaluation_agreement}
\end{table}

The highest overall agreement was observed for \textit{lung cancer} (78.89\%), closely followed by \textit{primary lung cancer episodes} (77.42\%). In both cases, annotators showed near-perfect agreement on the \textbf{presence} of the event (100.00\% and 96.77\%, respectively) and strong agreement on the \textbf{end date} (90.00\% and 80.64\%). This consistency was expected, as these two events are closely related: the presence of a primary tumor implies the presence of the disease. Conversely, the disease is generally considered to begin with the initial primary episode.  
Yet, the two events differ in temporal granularity—multiple primary episodes may occur within the course of a single lung cancer disease—which likely explains the slightly lower agreement on episode end dates compared with the disease as a whole.  
By contrast, agreement on the \textbf{start date} remained much lower (46.67\% for lung cancer disease and 38.71\% for primary episodes), highlighting the difficulty of precisely identifying the onset of events from clinical records, which are often incomplete or implicit regarding temporal boundaries.

In comparison, \textit{secondary cancer episodes} showed significantly lower agreement, with an average score of only 44.28\%. While annotators agreed on \textbf{presence} and \textbf{subtype} in just over 60\% of cases, agreement on \textbf{start date} and \textbf{end date} dropped to 17.14\% and 40.00\%, respectively. This variability likely reflects the greater complexity and ambiguity of metastatic events, which are often inconsistently documented or inferred retrospectively from clinical narratives.

\textit{TKI-targeted therapies} received moderate agreement (62.76\%), with relatively balanced scores across components. Agreement on \textbf{presence} and \textbf{subtype} was relatively high (80.85\% for both), but, once again, temporal boundaries again introduced inconsistencies (44.68\% for both \textbf{start} and \textbf{end}). These discrepancies may stem from variations in documentation practices (e.g. prescription vs. administration dates) or from unclear indications regarding treatment transitions or discontinuations.

For \textit{molecular markers}, the results were mixed. Annotations related to \textit{EGFR mutations} achieved a relatively high average agreement (66.67\%),  driven by perfect agreement on \textbf{presence} (100.00\%) but only moderate alignment on \textbf{start} and \textbf{end dates} (45.45\% and 54.54\%). This likely reflects the unambiguous nature of molecular statuses when explicitly mentioned, while their exact temporal relevance (e.g. test date vs. clinical significance date) may vary. Conversely, \textit{ALK mutations} had the lowest agreement (33.33\%), with 75.00\% agreement on \textbf{presence} but complete disagreement on the \textbf{start date} (0.00\%) and low agreement on the \textbf{end date} (25.00\%). This discrepancy may stem from the relative rarity of ALK mutations in the dataset, resulting in fewer annotated instances and greater variability in interpretation.

Overall, the data in Table~\ref{evaluation_agreement} reveals several key trends:

\begin{itemize}
    \item \textbf{Presence} is the most consistently agreed-upon component across all event types, indicating that clinicians generally agree on whether a clinical event occurred.
    \item \textbf{Subtype} (when applicable) shows moderate agreement but remains subject to interpretation, especially in cases involving complex therapies or metastatic events.
    \item \textbf{Start dates} exhibit the lowest levels of agreement across nearly all event types, confirming that identifying the precise onset of an event is inherently ambiguous and often under-documented in EHRs.
    \item \textbf{End dates} are slightly more reliable, though still variable—possibly because an event’s conclusion is more often linked to clear documentation such as treatment discontinuation or discharge summaries.
\end{itemize}

These results highlight the inherent difficulty of establishing a single “gold standard” for temporally extended clinical events, even among human experts.

\subsubsection{Evaluation of agreement between \casper inferences and expert annotations.}
\label{sec:casper_expert_comparison}

To further evaluate \ncasper’s qualitative alignment with clinical reasoning, we compared the events inferred in its \emph{consistent timelines} against the individual annotations produced by the four experts involved in the lung cancer use case. Each annotator evaluated a subset of patient records, forming two sub-cohorts of 15 patients each (Annotators 1 \& 2, Annotators 3 \& 4). For each pair, \ncasper’s output was compared separately against each annotator’s annotations, and we computed a component-wise agreement (presence, event type, start, end) using the same weighted scheme described earlier. 

It is important to note that the annotators had access to the complete EHR, including narrative reports, whereas \casper only relied on structured EHR data (diagnoses, mutations, TKI therapy administrations or prescriptions, and death). Consequently, some discrepancies may reflect information sources unavailable to \casper rather than reasoning errors.\\

\noindent \textbf{Sub-cohort 1 (Annotators 1 \& 2).}
\casper achieved a mean agreement ratio of \textbf{57.90\%} with Annotator~1 (Table \ref{tab:casper_annot1}), and \textbf{49.81\%} with Annotator~2 (Table \ref{tab:casper_annot2}). The corresponding inter-annotator agreement for this
sub-cohort is \textbf{56.16\%} (Table~\ref{tab:cohort1_annot}).
Thus, \ncasper’s agreement with individual experts falls
within the range of variability observed between the
annotators themselves.

\begin{table}[h!]
\footnotesize
\setlength{\tabcolsep}{0.8mm}
\caption{Comparison between \casper and Annotator 1.}
\label{tab:casper_annot1}
\centering
\begin{tabular}{lccccc}
 & Presence & Subtype & Start D. & End D. & Mean \\ \toprule
\textbf{P. lg cancer episode} & 85.71 & 85.71 & 35.71 & 71.43 & 69.64 \\
\textbf{S. cancer episode} & 59.37 & 59.37 & 00.00 & 34.37 & 38.28 \\
\textbf{TKI targeted th.} & 57.89 & 57.89 & 10.53 & 31.58 & 39.47 \\
\textbf{EGFR mutation} & 88.89 & -- & 33.33 & 77.78 & 66.67 \\
\textbf{ALK mutation} & 66.67 & -- & 33.33 & 66.67 & 55.56 \\
\textbf{Lg. cancer disease} & 100.00 & -- & 41.67 & 91.67 & 77.8 \\ \bottomrule
\end{tabular}
\end{table}

\begin{table}[h!]
\footnotesize
\setlength{\tabcolsep}{0.8mm}
\caption{Comparison between \casper and Annotator 2.}
\label{tab:casper_annot2}
\centering
\begin{tabular}{lccccc}
 & Presence & Subtype & Start D. & End D. & Mean \\ \toprule
\textbf{P. lg. cancer episode} & 85.71 & 85.71 & 35.71 & 71.43 & 69.64 \\
\textbf{S. cancer episode} & 56.25 & 56.25 & 12.50 & 50.00 & 43.75 \\
\textbf{TKI targeted th.} & 60.00 & 60.00 & 25.00 & 45.00 & 47.50 \\
\textbf{EGFR mutation} & 88.89 & -- & 66.67 & 33.33 & 62.96 \\
\textbf{ALK mutation} & 00.00 & -- & 00.00 & 00.00 & 00.00 \\
\textbf{Lg. cancer disease} & 100.00 & -- & 41.67 & 83.33 & 75.00 \\
\bottomrule
\end{tabular}
\end{table}

\begin{table}[h!]
\footnotesize
\setlength{\tabcolsep}{0.8mm}
\caption{Component-level inter-annotator agreement (\%) of sub-cohort 1.}
\label{tab:cohort1_annot}
\centering
\begin{tabular}{lccccc}
 & Presence & Subtype & Start D. & End D. & Mean \\ \toprule
\textbf{P. lg cancer episode} & 100.00 & 100.00 & 40.00 & 93.33 & 83.33 \\
\textbf{S. cancer episode} & 71.87 & 71.87 & 00.00 & 25.00 & 40.62 \\
\textbf{TKI targeted th.} & 86.96 & 86.96 & 52.17 & 30.43 & 64.13 \\
\textbf{EGFR mutation} & 100.00 & -- & 33.33 & 16.67 & 50.00 \\
\textbf{ALK mutation} & 50.00 & -- & 00.00 & 00.00 & 16.67 \\
\textbf{Lg. cancer disease} & 100.00 & -- & 53.33 & 93.33 & 82.22 \\ \bottomrule
\end{tabular}
\end{table}

Annotator 1 reported 11 events missing from \ncasper’s output, while \casper inferred 14 events not identified by the annotator. Annotator 2 reported 10 events not found by \casper, which in turn detected 17 events not annotated by the expert. Overall, \casper\ tended to identify slightly more events than the annotators, which may influence the agreement score since additional detections—although sometimes clinically plausible—were automatically counted as disagreements when they were absent from the expert annotations.

The highest concordance was observed for \emph{primary lung cancer episodes} (69.64\%) and \emph{lung cancer disease} ($\approx$76.40\%) confirming \ncasper’s ability to capture the main clinical trajectory.
By contrast, \emph{secondary cancer episodes} and \emph{TKI therapies} showed lower agreement ($\approx$38–47\%), likely due to missing or delayed documentation of metastases and heterogeneous recording of drug transitions.  
EGFR and ALK mutations exhibited intermediate performance, with better scores for EGFR ($\approx$63–67\%) than for ALK ($\approx$0–56\%), reflecting the rarity and inconsistent testing of ALK in the dataset.\\

\noindent \textbf{Sub-cohort 2 (Annotators 3 \& 4).}
Annotator 3 reached an overall agreement of \textbf{49.69\%} and Annotator 4 of \textbf{51.12\%} (Tables \ref{tab:casper_annot3}–\ref{tab:casper_annot4}). The inter-annotator agreement for this sub-cohort
is higher (\textbf{66.36\%}, Table~\ref{tab:cohort2_annot}),
indicating stronger concordance between the two experts
than in Sub-cohort~1.

\begin{table}[h!]
\footnotesize
\setlength{\tabcolsep}{0.8mm}
\caption{Comparison between \casper and Annotator 3.}
\label{tab:casper_annot3}
\centering
\begin{tabular}{lccccc}
 & Presence & Subtype & Start D. & End D. & Mean \\ \toprule
\textbf{P. lg. cancer episode} & 100.00 & 93.75 & 12.50 & 56.25 & 65.62 \\
\textbf{S. cancer episode} & 42.55 & 42.55 & 8.51 & 36.17 & 32.45 \\
\textbf{TKI targeted th.} & 65.22 & 65.22 & 17.39 & 39.13 & 46.74 \\
\textbf{EGFR mutation} & 90.00 & -- & 60.00 & 60.00 & 70.00 \\
\textbf{ALK mutation} & 50.00 & -- & 00.00 & 00.00 & 16.67 \\
\textbf{Lg. cancer disease} & 100.00 & -- & 20.00 & 80.00 & 66.67 \\ \bottomrule
\end{tabular}
\end{table}

\begin{table}[h!]
\footnotesize
\setlength{\tabcolsep}{0.8mm}
\caption{Comparison between \casper and Annotator 4.}
\label{tab:casper_annot4}
\centering
\begin{tabular}{lccccc}
 & Presence & Subtype & Start D. & End D. & Mean \\ \toprule
\textbf{P. lg. cancer episode} & 93.75 & 93.75 & 25.00 & 56.25 & 67.19 \\
\textbf{S. cancer episode} & 46.30 & 46.30 & 5.56 & 40.74 & 34.72 \\
\textbf{TKI targeted th.} & 66.67 & 66.67 & 12.50 & 33.33 & 44.79 \\
\textbf{EGFR mutation} & 90.00 & -- & 60.00 & 60.00 & 70.00 \\
\textbf{ALK mutation} & 50.00 & -- & 00.00 & 00.00 & 16.67 \\
\textbf{Lg. cancer disease} & 100.00 & -- & 33.33 & 86.67 & 73.33 \\
\bottomrule
\end{tabular}
\end{table}

\begin{table}[h!]
\footnotesize
\setlength{\tabcolsep}{0.8mm}
\caption{Component-level inter-annotator agreement (\%) of sub-cohort 2.}
\label{tab:cohort2_annot}
\centering
\begin{tabular}{lccccc}
 & Presence & Subtype & Start D. & End D. & Mean \\ \toprule
\textbf{P. lg cancer episode} & 93.75 & 87.50 & 37.50 & 68.75 & 71.87 \\
\textbf{S. cancer episode} & 52.63 & 52.63 & 00.00 & 31.58 & 52.63 \\
\textbf{TKI targeted th.} & 75.00 & 75.00 & 37.50 & 58.33 & 61.46 \\
\textbf{EGFR mutation} & 100.00 & -- & 60.00 & 100.00 & 86.67 \\
\textbf{ALK mutation} & 100.00 & -- & 00.00 & 50.00 & 50.00 \\
\textbf{Lg. cancer disease} & 100.00 & -- & 40.00 & 86.67 & 75.56 \\ \bottomrule
\end{tabular}
\end{table}

\casper\ again identified more events (25 and 22, respectively) than the annotators (10 and 18).  
The best-aligned events remained the \emph{primary lung cancer episodes} and \emph{lung cancer disease} ($\approx$65–73\%), both directly supported by structured diagnostic codes.  
Lower agreement for \emph{secondary cancer episodes} and \emph{TKI therapies} reflects the greater uncertainty in documenting metastatic progression and treatment continuity—both of which are often inferred retrospectively from scattered clinical entries rather than explicitly timestamped observations.  
Mutations showed moderate alignment for EGFR and very limited for ALK, a difference mainly attributable to data sparsity and heterogeneity in genetic testing practices: EGFR testing was systematically performed and recorded, whereas ALK testing was less frequent and typically not conducted once an EGFR mutation had been identified.\\

\noindent \textbf{Merged pair evaluation.}
When merging both annotators’ results within each sub-cohort by keeping, for each patient and event, the best agreement score, the overall agreement between \casper and human expert annotation reached \textbf{60.25\%}.  
This value approaches the inter-annotator agreement (60.56\%), indicating that \ncasper’s inferences are roughly as consistent with expert judgments as the experts are among themselves—an encouraging result supporting the clinical validity of its outputs.\\

Across all comparisons, \casper reproduced expert-level reasoning for well-structured events directly linked to coded data (\emph{primary lung cancer episodes}, \emph{lung cancer disease}, \emph{EGFR mutation}).  
Lower scores for \emph{secondary cancer episodes} and \emph{TKI therapies} highlight limitations related to incomplete data coverage.  
Disagreements on temporal boundaries (start/end) also mirror the annotators’ own divergence, confirming that imprecision largely stems from the inherent ambiguity of clinical documentation and from the intrinsic difficulty of this task, which remains challenging even for human experts.  
Overall, these results demonstrate that \casper can emulate the temporal reasoning patterns of clinicians using only structured data and logically specified rules, producing outputs of comparable reliability to human annotations.\\

\noindent
\textbf{Justification of the evaluation method.}
Our component-level agreement approach is consistent with methodologies adopted in previous studies addressing complex biomedical or temporal annotation tasks~\cite{hripcsak_Agreement_2005,artstein_InterCoder_2008,uzuner_Evaluating_2007}. In such contexts, standard agreement metrics like Cohen’s kappa often prove inadequate, as they assume flat categorical labels and fail to account for the internal structure or temporal boundaries of clinical events. Prior work has advocated for alternative strategies better suited to structured annotations, including element-level or boundary-aware scoring schemes~\cite{hripcsak_Agreement_2005,artstein_InterCoder_2008,uzuner_Evaluating_2007}. These precedents support the relevance of our method for capturing meaningful agreement in richly structured annotation scenarios.


\end{document}